% SIAM Article Template
\PassOptionsToPackage{dvipsnames}{xcolor}
\documentclass[review,onefignum,onetabnum]{siamart171218}

% Information that is shared between the article and the supplement
% (title and author information, macros, packages, etc.) goes into
% ex_shared.tex. If there is no supplement, this file can be included
% directly.

% SIAM Shared Information Template
% This is information that is shared between the main document and any
% supplement. If no supplement is required, then this information can
% be included directly in the main document.

% Packages and macros go here
\usepackage{lipsum}
\usepackage{amsmath}
\usepackage{amssymb}
\usepackage{amsfonts}
\usepackage{graphicx}
\usepackage{epstopdf}
\usepackage{algorithm}% http://ctan.org/pkg/algorithm
\usepackage{algpseudocode}% http://ctan.org/pkg/algorithmicx
\usepackage{hyperref}
\ifpdf
  \DeclareGraphicsExtensions{.eps,.pdf,.png,.jpg}
\else
  \DeclareGraphicsExtensions{.eps}
\fi

% Add a serial/Oxford comma by default.

\newcommand{\E}{\mathbb E}
\DeclareMathOperator{\tr}{trace}
% Used for creating new theorem and remark environments
\newsiamremark{remark}{Remark}
\newsiamremark{hypothesis}{Hypothesis}
\crefname{hypothesis}{Hypothesis}{Hypotheses}
\newsiamthm{claim}{Claim}

% Sets running headers as well as PDF title and authors
\headers{Gradient-Free sequential BOED with Interacting Particle Systems}{R. Gruhlke, M. Hanu, C. Schillings and P. Wacker}

% Title. If the supplement option is on, then "Supplementary Material"
% is automatically inserted before the title.
\title{Gradient-Free Sequential Bayesian Experimental Design via Interacting Particle Systems}
%\funding{This work was funded by the Fog Research Institute under contract no.~FRI-454.}}}

% Authors: full names plus addresses.
\author{%
Robert Gruhlke, Matei Hanu, Claudia Schillings\thanks{Freie Universit\"at Berlin, Fachbereich Mathematik und Informatik, 14195 Berlin, Germany 
  (\email{\{r.gruhlke, matei.hanu, c.schillings\}@fu-berlin.de})}
\and Philipp Wacker\thanks{University of Canterbury, Christchurch, New Zealand (\email{philipp.wacker@canterbury.ac.nz})}
}

\usepackage{amsopn}

%% Added on Overleaf: enabling xr
\makeatletter
\newcommand*{\addFileDependency}[1]{% argument=file name and extension
  \typeout{(#1)}% latexmk will find this if $recorder=0 (however, in that case, it will ignore #1 if it is a .aux or .pdf file etc and it exists! if it doesn't exist, it will appear in the list of dependents regardless)
  \@addtofilelist{#1}% if you want it to appear in \listfiles, not really necessary and latexmk doesn't use this
  \IfFileExists{#1}{}{\typeout{No file #1.}}% latexmk will find this message if #1 doesn't exist (yet)
}
\makeatother

%%% END HELPER CODE
%%% Local Variables: 
%%% mode:latex
%%% TeX-master: "ex_article"
%%% End: 

% Optional PDF information
\ifpdf
\hypersetup{
  pdftitle={Gradient-Free Sequential Bayesian Experimental Design via Particle Methods},
  pdfauthor={Robert Gruhlke, Matei Hanu, Claudia Schillings, and Philipp Wacker}
}
\fi

% The next statement enables references to information in the
% supplement. See the xr-hyperref package for details.

%% Use \myexternaldocument on Overleaf
%\myexternaldocument{ex_supplement}

% FundRef data to be entered by SIAM
%<funding-group>
%<award-group>
%<funding-source>
%<named-content content-type="funder-name"> 
%</named-content> 
%<named-content content-type="funder-identifier"> 
%</named-content>
%</funding-source>
%<award-id> </award-id>
%</award-group>
%</funding-group>

\usepackage{diagbox}
\usepackage{scalerel}
\usepackage{graphicx}
\usepackage{subcaption}
\usepackage{todonotes}
\newtheorem{assumption}[theorem]{Assumption}

\theoremstyle{plain} \theorembodyfont{\rmfamily}
%\newcounter{exa}

\numberwithin{equation}{section}
\numberwithin{figure}{section}
\numberwithin{table}{section}

\newcommand{\Nens}{{N_{\rm ens}}}
\newcommand{\Jeki}{{J_{\text{\tiny EKI}}}}

\newcommand{\Taldi}{{T_{\text{\tiny ALDI}}}}
\newcommand{\Teki}{{T_{\text{\tiny EKI}}}}

\newcommand{\Nobs}{{N_{\rm obs}}}
\newcommand{\comment}[1]{}

\newcommand{\R}{{\mathbb{R}}}

\newcommand{\N}{{\mathbb{N}}}
\newcommand{\cF}{{\mathcal{F}}}

\newcommand{\argmax}{{\mathrm{argmax}}}

\newcommand{\spann}{{\mathrm{span}}}

\definecolor{dgreen}{RGB}{0,100,0}

\DeclareMathOperator{\e}{\mathrm e}
\newcommand{\ev}[1]{ {\boldsymbol{\mathbb  E}} \left[  #1 \right]}

\renewcommand{\d}{\mathrm{d}}
\DeclareMathOperator{\Cov}{Cov}

\DeclareMathOperator{\Var}{Var}

\newcommand{\vertiii}[1]{{\left\vert\kern-0.25ex\left\vert\kern-0.25ex\left\vert #1 
\right\vert\kern-0.25ex\right\vert\kern-0.25ex\right\vert}}

\usepackage{algorithm}
%\counterwithin{algorithm}{chapter}
\usepackage{algpseudocode}

\usepackage{bm}

\usepackage{siunitx} % provides \num
\sisetup{
print-exponent-implicit-plus, % should provide exponent +0
text-series-to-math = true ,
propagate-math-font = true,
retain-zero-exponent = true 
}

\newcommand{\gray}[1]{{\color{gray!50!white}#1}}

\begin{document}

\maketitle

% REQUIRED
\begin{abstract}
We introduce a gradient-free framework for Bayesian Optimal Experimental Design (BOED) in sequential settings, aimed at complex systems where gradient information is unavailable. Our method combines Ensemble Kalman Inversion (EKI) for design optimization with the Affine-Invariant Langevin Dynamics (ALDI) sampler for efficient posterior sampling—both of which are derivative-free and ensemble-based. To address the computational challenges posed by nested expectations in BOED, we propose variational Gaussian and parametrized Laplace approximations that provide tractable upper and lower bounds on the Expected Information Gain (EIG). These approximations enable scalable utility estimation in high-dimensional spaces and PDE-constrained inverse problems. We demonstrate the performance of our framework through numerical experiments ranging from linear Gaussian models to PDE-based inference tasks, highlighting the method’s robustness, accuracy, and efficiency in information-driven experimental design.
\end{abstract}

% REQUIRED
\begin{keywords}
 optimal experimental design, Bayesian inverse problems, interacting particle methods, Gaussian approximations
\end{keywords}

% REQUIRED
\begin{AMS}
  62K05, 62F15, 65C05, 93E10
\end{AMS}

\section{Introduction}

Uncertainty quantification (UQ) is crucial in many fields, enabling informed decision-making in the presence of model, parameter, or data uncertainty. The Bayesian approach to UQ provides a principled framework for incorporating observational data to reduce uncertainty, especially in complex models such as those involving partial differential equations (PDEs). However, acquiring data is often costly or constrained, motivating the need for Bayesian Optimal Experimental Design (BOED), which seeks to maximize the expected information gain from data.

In this work, we propose a gradient-free BOED framework suited for settings where gradients of the forward model are unavailable or expensive to compute. Specifically, we combine Ensemble Kalman Inversion (EKI) as a derivative-free optimizer with ALDI (Affine-invariant Langevin dynamics) as a sampling method in a sequential design setting. To address the high computational cost associated with nested integrals in BOED, we employ Gaussian approximations to efficiently estimate expected utilities. Our approach provides robust, adaptive experimental designs, while avoiding the need for adjoint computations or gradient information.

This framework enables scalable and flexible BOED in high-dimensional or computationally demanding settings, with a particular focus on PDE-based inverse problems. The resulting methodology improves the quality of data used for Bayesian inference, ultimately enhancing the reliability of calibrated models.

\subsection{Literature overview}

BOED provides a principled framework for selecting experiments that maximize the expected information gained about uncertain model parameters. By treating parameters as random variables and updating beliefs using Bayes’ theorem, BOED naturally accounts for prior knowledge and quantifies uncertainty.  Key foundational works in the field include \cite{Cavagnaro2010,Chaloner1995,Lindley1956,Lindley1972,Ryan2016}, and recent comprehensive reviews such as \cite{Huan_Jagalur_Marzouk_2024,Rainforth2024} have summarized current advances and challenges.

A significant direction in BOED research is sequential design or Bayesian adaptive design \cite{Cheng2005,foster2021deepadaptivedesignamortizing, huan2016sequentialbayesianoptimalexperimental,MacKay1992,Zhou2008}. This approach enables dynamic adjustment of experimental configurations based on data collected in earlier stages. While conceptually powerful, sequential BOED poses considerable computational challenges, particularly when dealing with high-dimensional parameter spaces or expensive-to-evaluate forward models such as PDEs.

The most commonly used utility in BOED is the Expected Information Gain (EIG). Estimating the EIG often involves nested Monte Carlo estimators \cite{Tempone, Rainforth2024, Ryan2016}, where the utility is expressed as an expectation over the difference between the log-likelihood and the log-evidence. This requires an outer Monte Carlo loop over the prior and an inner loop to estimate the intractable evidence for each sample. This estimator converges slowly and is computationally intensive. In \cite{Tempone2,Vesa}, quasi Monte-Carlo approaches has been suggested to accelerate the convergence rate. To mitigate the cost of the double-loop estimators, Gaussian approximations to the posterior have become a practical alternative \cite{Chaloner1995,Ryan2016, doi:10.1137/21M1466499}, avoiding the need for full nested integration by using analytical or surrogate-based approximations of the evidence. 
Gaussian and parametric Gaussian approximations can be understood as variational approximations; see the review \cite{Rainforth2024} for further discussion.
For dynamical systems, \cite{huan2016sequentialbayesianoptimalexperimental} formulates the sequential optimal experimental design  problem as a dynamic program and proposes tractable numerical strategies by using backward induction with regression to construct and
refine value function approximations in the dynamic program. 

Beyond nested estimators, measure transport methods have emerged as another promising direction for BOED, particularly in batch settings \cite{Koval_2024,Marzouk2016,Moselhy2012}. These approaches estimate posterior densities via transport maps and then use Monte Carlo to approximate expected utilities. While effective in lower-dimensional spaces, scalability to high-dimensional problems remains a challenge. In \cite{cui2025subspaceacceleratedmeasuretransport}, the authors propose a scalable framework for sequential BOED in high-dimensional Bayesian inverse problems, introducing a derivative-based upper bound on the incremental expected information gain to guide design and enable parameter dimension reduction via likelihood-informed subspaces and transport maps. A few strategies for sequential BOED, including transport maps or sample transformations between design stages, have been reviewed in \cite{Huan_Jagalur_Marzouk_2024,Rainforth2024,Ryan2016} and others.

In high-dimensional inverse problems, sampling from the posterior is central. A widely studied class of methods uses Langevin dynamics, such as the overdamped Langevin equation \cite{ Pavliotis2014}. Gradient-based sampling techniques like MALA (Metropolis-adjusted Langevin Algorithm) \cite{roberts1998optimal,roberts1996exponential} are known for their theoretical convergence guarantees. However, they require gradient evaluations, which may be unavailable or prohibitively expensive in complex models.

To address this, gradient-free Langevin-type samplers have gained attention. One such method is ALDI (Affine-Invariant Interacting Langevin Dynamics) \cite{garbuno2020affine}, which leverages sample covariance preconditioning for affine invariance and robustness in ill-conditioned problems. 
Recent extensions such as LIDL \cite{LIDL} introduce adaptive ensemble enrichment and homotopy-based dynamics to improve efficiency and robustness, especially for complex or multimodal posteriors.
Similar ideas are present in consensus-based sampling \cite{Carrillo2022_Consesns} and Fokker–Planck-based formulations \cite{Sahani2019,Reich_Weissmann_2022}. These methods scale well in high-dimensional spaces and avoid reliance on gradient information.

As an optimization method, the Ensemble Kalman Inversion (EKI) \cite{Iglesias_2013,Schillings2016} has been widely used for solving inverse problems, offering a derivative-free alternative that is especially effective in high dimensions and for PDE-constrained problems. EKI blends the strengths of variational and Bayesian inference and benefits from simple implementation and robustness to small ensemble sizes. Stability and convergence of EKI, particularly in the continuous-time limit, have been studied in \cite{Bloemker2021,Schillings2017}. Regularization, especially via Tikhonov methods, plays a key role in ensuring convergence \cite{Tong2020, Weissmann2022}, with recent developments exploring adaptive and constrained formulations \cite{Albers_2019,Chada2019,Hanu_2024_CC,Herty_2020}.

In computationally intensive settings, EKI has also been combined with subsampling strategies \cite{Hanu2024,Hanu_2023,Latz2021}, inspired by stochastic approximation methods \cite{Robbins1951}, to reduce cost per iteration. 

Our work builds on this foundation by proposing a gradient-free framework for sequential BOED, combining EKI as an optimizer with ALDI as a sampler. Gaussian approximations are employed to avoid nested Monte Carlo loops, yielding a scalable and practical method for high-dimensional experimental design with PDE constraints.

\subsection{Main ideas and contributions of the paper}

We propose a gradient-free BOED approach for maximizing the EIG in a sequential setting. Our method combines EKI as a derivative-free optimizer with the gradient-free ALDI sampler. To estimate the EIG, we employ Gaussian approximations of the joint distribution, as well as a parametrized Laplace approximation. These approximations yield bounds on the true EIG: approximating the posterior results in a lower bound, while approximating the marginal distribution gives an upper bound. In our numerical illustrations, we focus on a Gaussian approximation of the marginal distribution, though we also explore the impact of using a Laplace approximation. A brief overview of the proposed method is given in \cref{algo} and visually summarized in \cref{fig:illustration_algorithm}.

\begin{algorithm}
  \caption{Sequential gradient-free BOED with interacting particle systems}\label{algo}
 
  \begin{algorithmic}[1]
  \Require $\left\{\begin{array}{ll}
            N & \text{length of data assimilation window}, \\
            \mathrm{EIG}\colon \mathcal{D}\times\mathcal{P}_{ac}(\mathbb{R}^d)\to\mathbb{R}& \text{BOED utility on design space $\mathcal{D}$}\\  & \text{and the prior at the current step},\\
            \Jeki & \text{ensemble size for EKI}, \\
            \mu_0 & \text{initial  prior measure in $\mathcal{P}_{ac}(\mathbb{R}^d)$}.
            \end{array}
    \right. $
 \State Initialize design points $\{p_0^{(i)}\}_{i=1}^\Jeki$ for $i=1,\ldots,\Jeki$.
  \For{$n$ in $\{0,\ldots, N-1\}$}
  \gray{\Comment{Find best design integrating the next data point}}
  
    \State Compute the next optimal design $p_{n+1}^\dagger$ via EKI using (parametric) Gaussian approximations of the joint parameter-data distribution to estimate the EIG. %evaluations of 
    
   \State Use Aldi to generate samples %$\{(u_{n+1}^{(j)})\}_{j=1}^J$
   from the posterior with observations $y_{n+1}^\dagger$ for the design $p_{n+1}^\dagger$.
  \EndFor
  
  \end{algorithmic}
\end{algorithm}

\begin{figure}
    \centering
    \includegraphics[width=1\linewidth, trim=0 240 0 240, clip]{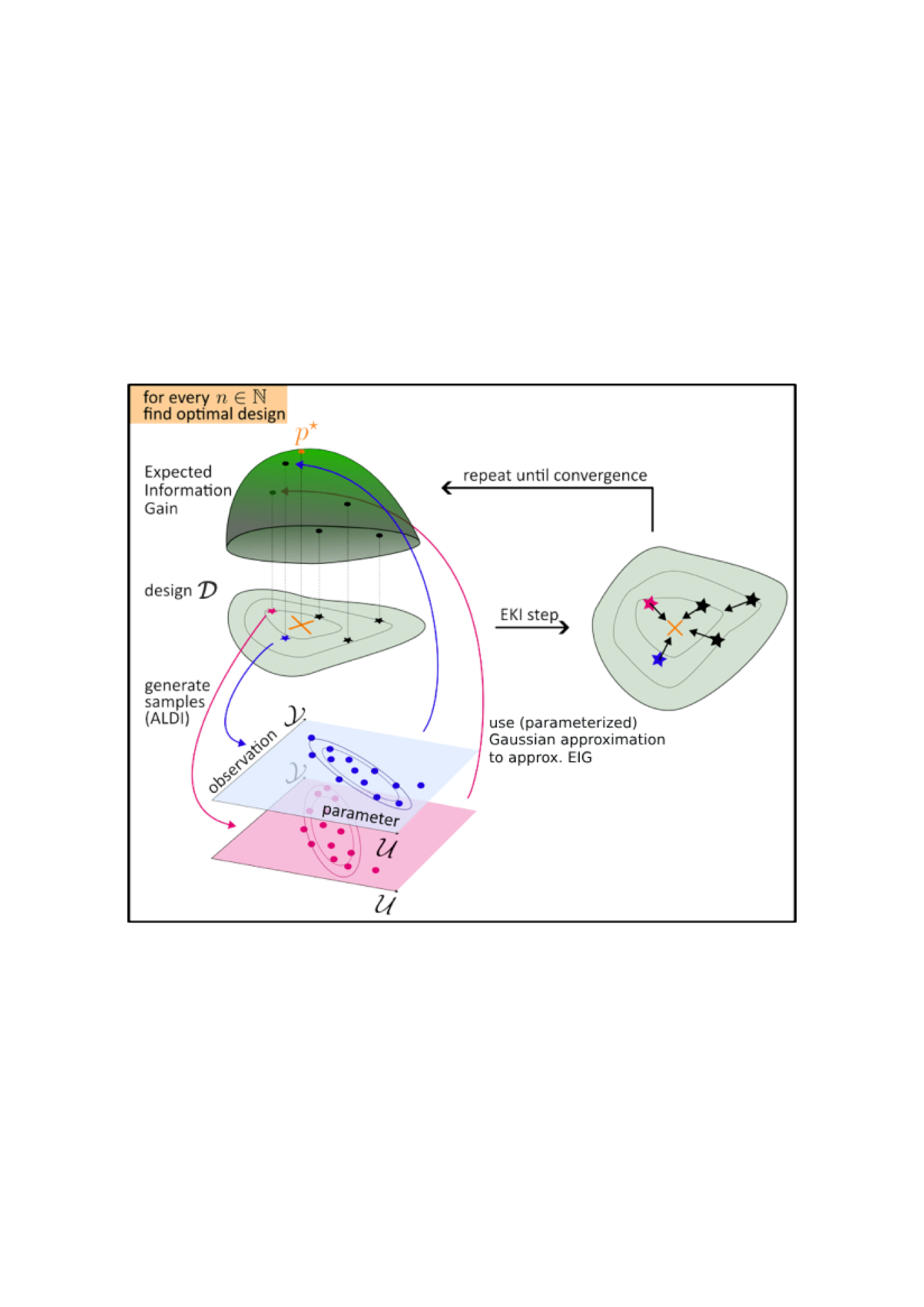}
    \caption{Schematic description of algorithm}
    \label{fig:illustration_algorithm}
\end{figure}

Our main contributions are as follows:

\begin{itemize}
    \item \textit{Gradient-free framework for sequential BOED:}
We propose a novel algorithmic framework for BOED in a sequential setting that does not rely on gradient information of the forward model. This makes our method particularly suitable for complex, black-box models such as PDEs.
    \item \textit{Integration of EKI and ALDI:}
Our approach combines EKI for design optimization with ALDI for posterior sampling. Both components are gradient-free and ensemble-based, enabling efficient application in high-dimensional settings.
    \item \textit{Efficient EIG estimation via variational approximations:}
We propose (para\-metrized) Gaussian approximations to estimate the EIG, yielding computationally efficient upper and lower bounds, ensuring the control on the approximations. The parametrized Laplace approximation is considered to improve the lower bound, if necessary. We analyze the theoretical behavior of these bounds and their asymptotic properties in near-linear regimes and large data/small noise settings.
\item \textit{Numerical validation in increasingly complex scenarios:}
We demonstrate the performance of our method through three numerical experiments of increasing complexity: a linear Gaussian model, a mildly nonlinear example, and a PDE-based inverse problem. These illustrate both the effectiveness and limitations of the proposed framework.
\end{itemize}
We organize the manuscript as follows. In \cref{sec:BOED}, we introduce the BOED framework. \Cref{sec:Gauss_Laplace_approx} quantifies the effects of (parametrized) Gaussian approximations on the EIG. The gradient-free Langevin sampler is presented in \cref{sec:evaluating}, and EKI as the optimizer in \cref{sec:Opt_EKI}. \Cref{sec:algo} summarizes the full algorithm. Numerical experiments are reported in \cref{sec:Numerics}, and we conclude in \cref{sec:conclusions}.

\subsection{Notation}

We define the rescaled norm $\|x\|_{\Sigma} := \langle x,\Sigma^{-1} x\rangle$, $x\in\R^d$ given a symmetric positive definite matrix $\Sigma\in\R^{d\times d}$, where $\langle \cdot, \cdot\rangle$ denotes the euclidean inner product over $\R^d$. Moreover, when we work in a probabilistic setting we consider a probability space $(\Omega,\mathcal{A},\mu)$ and denote by $\mathcal{B}(\mathcal{X})$ the Borel-$\sigma$ Algebra over some space $\mathcal{X}$, i.e. the smallest $\sigma$-Algebra that contains all open sets in $\mathcal{X}$. For a random variable $u$, the corresponding distribution is denoted by $\mu_u$. Furthermore, in case of a finite dimensional setting we denote the Lebesgue density of those random variables as $\pi_u(u)$. 
We denote by $\mathbb{E}_u\left[\cdot\right]$ the expected value with respect to the random variable $u$. Moreover, we denote for a real-valued random variable $u\in L^q=\{u\mid u:\Omega\to\mathbb{R}\, \mu\text{-mb.} \wedge \mathbb{E}[\|u\|^q]<\infty\}$ the $ L^q$ norm as $\mathbb{E}[\|u\|^q]^{1/q}$ . Let $(u_n)_{n\in\mathbb{N}}$ be a series of real-valued random variables with  $\mathbb{E}[\|u_n\|^q]<\infty$ for all $n\in\mathbb{N}$ then we denote $ L^q$ convergence as $u_n \stackrel{ L^q}{\longrightarrow} u $ if $\mathbb{E}[\|u_n-u\|^q]\to 0$ as $n\to\infty$. Additionally, we denote by 
\[
\mathcal{P}_{\mathrm{ac}}(\mathbb{R}^d) := \left\{ \mu \in \mathcal{P}(\mathbb{R}^d) \,\middle|\, \mu \ll \lambda^d \right\}\,,
\] 

the set of all probability measures on $\mathbb{R}^d$ that are absolutely continuous with respect to the Lebesgue measure $\lambda^d$, where 
\[
\mathcal{P}(\mathbb{R}^d) := \left\{ \mu \,\middle|\, \mu \text{ is a probability measure on } \mathbb{R}^d \right\}\,.
\]
Throughout this work we will denote by $u$ the unknown parameter $y$ will denote the data variable, and $p$ the design. In case of a set of given observations we use the notation $\bm{y}_n^\dagger=(y_1^\dagger,\ldots,y_n^\dagger)$ for any $n\in\mathbb{N}$. We will use the $\dagger$-superscript notation to distinguish from variable parameters, e.g. $p$ and $p^\dagger$ and $y$ and $y^\dagger$.

Furthermore, when we consider approximations we denote them via $\widetilde{\cdot}$, e.g. $\widetilde{\pi}(u)$ is an approximation of $\pi(u)$. Finally, for $J\in \mathbb{N}$ we define the empirical means and empirical covariance of a sample of observations $\{y^{(1)}\ldots,y^{(J)}\}$ and parameters $\{u^{(1)}\ldots,u^{(J)}\}$ (or random variables in general) as

\begin{equation*}
    \widetilde{m}_{y}=\frac{1}{J}\sum_{j=1}^J y^{(j)},
    \end{equation*}
    and the corresponding covariances
 \begin{equation*}   
    \widetilde{C}_y =\frac{1}{J}\sum_{j=1}^J (y^{(j)}-\widetilde{m}_{y})\otimes(y^{(j)}-\widetilde{m}_{y}), \quad
    \widetilde{C}_{uy} =\frac{1}{J}\sum_{j=1}^J (u^{(j)}-\widetilde{m}_{u})\otimes(y^{(j)}-\widetilde{m}_{y}),
\end{equation*}
where we omit the explicit dependence on $J$.

We usually also include the design $p$ in the notation when the observations depend on it, e.g. $\widetilde{C}_{y\mid p}$. Moreover, when we consider convergence that depends on $J\to\infty$ we also include the particle count, e.g. $\widetilde{C}_{y\mid p}^{(J)}$. Additionally, when we work with ensemble Kalman inversion the mixed covariance between the particles and the particles under the forward operator will play a crucial role, i.e. given some operator $\mathcal{F}(\cdot)$, we use

\begin{equation*}
    \bar{\mathcal{F}}(u)=\frac{1}{J}\sum_{j=1}^J \mathcal{F}(u^{(j)}), \quad
    \widetilde{C}_{u\mathcal{F}} =\frac{1}{J}\sum_{j=1}^J (u^{(j)}-\widetilde{m}_{u})\otimes( \mathcal{F}(u^{(j)})-\bar{\mathcal{F}}(u))
\end{equation*}

In general we will use subscripts when we denote sequential observations at times $1,\ldots,n$. When we consider samples we use superscripts, here we use the superscript $(j)$ to denote samples that come from a Langevin sampler and $(i)$ to indicate different particles in the EKI setting. When we consider both we use the superscript $(i,j)$.

\section{Sequential Bayesian optimal experimental design} \label{sec:BOED}

Sequential BOED is an adaptive approach to experiment selection that maximizes information gain while incorporating observations sequentially, i.e. sequential BOED iteratively updates beliefs based on newly acquired data, ensuring that each subsequent experiment is optimally chosen. By leveraging Bayesian inference, this method continuously updates the uncertainty in model parameters. We first introduce the parameter estimation problem, i.e. the Bayesian approach to estimating the unknown parameters from sequential data, for a given design, and then discuss the optimization problem for experiment selection.

At a given time step $n\in\mathbb{N}$, for a given observation $y_n\in\mathcal{Y}=\mathbb{R}^k$, 
the goal is to recover the unknown parameters $u\in \mathcal{X}$ based on the model
\begin{equation}\label{eqn:ip_doe}
    y_n=G_n(u,p_n)+\eta_n\,, \quad n\in\mathbb{N}
\end{equation}
where $\mathcal{X}$ is a separable Hilbert space, 
$p_n \in \mathcal{D}$ are design parameters with design space $\mathcal{D}\subseteq \mathbb{R}^{\text{dim}_p}$, $\text{dim}_p\in\mathbb{N}$ and $\eta_n$ is Gaussian additive observational noise from the distribution $\mathcal{N}(0,\Gamma)$, where $\Gamma\in\mathbb{R}^{k\times k}$ symmetric positive definite. Moreover, we consider possibly nonlinear continuous forward maps $G_n: \mathcal{X} \times \mathcal D \rightarrow \mathcal{Y}$.

Reformulating this in the Bayesian setting, we consider the unknown parameters $u$ in \eqref{eqn:ip_doe} to be random variables with absolute continuous prior distribution $\mu_n$ with density $\pi_n$ at time step $n$.
For simplicity, we assume that the parameter space is finite dimensional, i.e. $\mathcal{X}=\R^{d}$ for $d\in\N$, and the prior distribution $\mu_0$ at time $n=0$ is Gaussian as well as independent of the noise $\eta_n$ for all $n\in\mathbb{N}$. Here, for $n>0$ the prior distribution $\mu_n$ is the result of an iterative update within the sequential BOED.

The random variable $y_{n}\mid u, p_n$ is called the \textit{likelihood} and it holds $y_{n}\mid u,p_n \sim\mathcal{N}(G_n(u,p_n),\Gamma)$. For a concrete realization of $y_n^\dagger$ of $y_n$ for a given design $p_n^\dagger$, the goal of computation in the $n$-th step is the distribution of $u\mid y_n^\dagger, p_n^\dagger := u\mid \{y_{n}=y_n^\dagger,p_n=p_n^\dagger\}$, the so-called \textit{posterior distribution}.\\

\begin{remark}
    The inverse problem 
    \begin{equation}\label{eqn:ip_doe_basic}
    y=G(u,p)+\eta\,,
\end{equation}
i.e. a single observation, corresponds to one step in the above setting. We often focus in the following on one step in the filtering problem to illustrate our methodology, but are ultimately interested in the sequential setting.
\end{remark}

So we can interpret the filtering problem as adapting the $n$-th posterior distribution with every new observation $y_{n}^\dagger$ where the latter posterior distribution acts as new prior at time step $n>1$. The density of the posterior distribution can then be written in the form
\begin{equation}\label{eqn:post_dist}
    \pi_{n}(u):=\pi_{u\mid y_{n}^\dagger,p_n^\dagger}(u)=\frac{1}{Z_{n}}\exp(-\Phi(u,y_n^\dagger,p_n^\dagger))\pi_{n-1}(u)\,,
\end{equation}

where $Z_{n}=\int_{\mathbb R^d} \exp(-\Phi(u,y_n^\dagger,p_n^\dagger))\pi_{n-1}(u) \mathrm d u$ denotes the normalization constant, which is assumed to be larger than $0$ and 

\begin{equation}\label{eqn:log_data_density}
    \Phi(u,y,p) = \tfrac{1}{2}\|y-G_n(u,p)\|_\Gamma^2\,, \quad \forall (u,y,p)\in \mathcal{X}\times\mathcal{Y}\times \mathcal{D}
\end{equation}
is the potential. 

Iteratively applying \eqref{eqn:post_dist} on all previous observations $\bm{y}_n^{\dagger}=(y_1^\dagger,\ldots,y_n^\dagger)$ and associated designs $\bm{p}_n^\dagger=(p_1^\dagger,\ldots,p_n^\dagger)$ yields

\begin{equation} \label{eqn:post_dist_seq}
    \pi_{u_n\mid \bm{y}_n^\dagger,\bm{p}_n^\dagger}(u)= \pi_n(u) = \frac{1}{Z_{(n)}}\exp\left(-\sum_{\ell=1}^n\Phi(u,y_\ell^\dagger,p_\ell^\dagger)\right)\pi_0(u)\,,
\end{equation}

where $Z_{(n)}=\int_{\mathbb R^d} \exp\left(-\sum_{\ell=1}^n\Phi(u,y_\ell^\dagger,p_\ell^\dagger)\right)\pi_0(u) \mathrm d u$. \\

\begin{remark}
    Based on \eqref{eqn:post_dist} and \eqref{eqn:post_dist_seq} for notational simplicity in our algorithms and to underline the sequential dependency, we use the shorthand \( \pi_n(u) \) to denote both, the posterior density at time \( n \) and prior density at time $n+1$. The notation $\pi_{u_n\mid \bm{y}_n^\dagger,\bm{p}_n^\dagger}$ shall be used to explicitly state the given observation and designs.
\end{remark}

Let us now focus on the experiment selection. 
For a given family of data distributions $\{\mu_{y_n|p_n}\mid p_n\in \mathcal{D}\}$ with Lebesgue density $y\mapsto \pi_{y\mid p_n}(y)$,
we are interested in solving the following optimization task at every time step $n\in\mathbb{N}$
\begin{equation}
    \sup\limits_{p_n\in \mathcal{D}} \mathcal{L}(p_n)  
    =
    \sup\limits_{p_n\in \mathcal{D}} 
    \mathbb{E}_{y_n\mid p_n}\left[ 
    \,\mathcal{U}(\cdot, p_n) ]\,\right]
    = 
     \sup\limits_{p_n\in \mathcal{D}} 
     \left\{
     \int\limits_{\mathcal{Y}} \mathcal{U}(y,p_n)\pi_{y\mid p_n}(y)\,\mathrm{d}y
     \right\}\,,
     \label{eq:general_optproblem}
\end{equation}
for some user defined utility function, that quantifies the benefit of choosing the design $p_n$ and observing an outcome $y_n$, given the unknown parameters of the model $u$.
\begin{equation}
    \mathcal{U}\colon \mathcal{Y}\times \mathcal{D} \to \mathbb{R}.
\end{equation}

\subsection{Utility function}
In OED, a utility function serves as a measure of how informative or effective an experiment is for a given goal, such as improving parameter estimation, reducing uncertainty, or enhancing predictive power. The central idea is to design experiments that maximize this utility, ensuring that the collected data is as valuable as possible. Different utility functions exist depending on the problem, including variance reduction, criteria based on the Fisher information matrix, and information-theoretic approaches. We refer to \cite{Huan_Jagalur_Marzouk_2024} for a detailed overview on utility functions. One particularly powerful choice is the Expected Information Gain (EIG), which quantifies how much an experiment is expected to reduce uncertainty in a Bayesian framework.

EIG is motivated by the idea that a well-designed experiment should maximize the expected reduction in uncertainty about a quantity of interest, such as model parameters. It is defined as the expected Kullback-Leibler (KL) divergence between the prior and posterior distributions, measuring how much the posterior distribution is expected to deviate from the prior after observing new data

\begin{equation}
    \mathcal{U}(y, p):= 
    \mathrm{D}_{\mathrm{KL}}(\mu_{u \mid y,p}\,\mid\mid\,\mu_0) = 
    \int\limits_{\mathcal{X}} \log\left(\frac{\pi_{u\mid y,p}(u)}{\pi_0(u)}\right)\pi_{u\mid y,p}(u)\,\mathrm{d}u\,. 
\end{equation}

We denote the objective function in \eqref{eq:general_optproblem} for this utility function as $\mathcal{L}_D(p)$, which is known as the \textit{expected information gain (EIG)}.\\

More specifically we consider the EIG for an arbitrary measure on $\mathbb{R}^d$ that is absolutely continuous w.r.t.\ Lebesgue measure $\lambda$ on $\mathbb{R}^d$.

We define the EIG as a mapping $\text{EIG}: \mathcal{D} \times \mathcal{P}_{\mathrm{ac}}(\mathbb{R}^d)\to\mathbb{R}$ where

\begin{align}
\text{EIG}(p,\mu)&:=\mathbb{E}_{y\mid p}\left[D_{KL}(\mu_{u\mid y,p}\,\mid\mid\,\mu)\right]\notag\\
    &\phantom{:}=\int_\mathcal{Y}\int_\mathcal{X}\log\left(\frac{\pi_{u\mid y,p}(u)}{\pi(u)}\right)\pi_{u\mid y,p}(u)\pi_{y\mid p}(y)\,\mathrm{d}u\,\mathrm{d}y\notag\\
    &\phantom{:}=\int_\mathcal{Y}\int_\mathcal{X}\log\left(\frac{\pi_{y\mid u,p}(y)}{\pi_{y\mid p}(y)}\right)\pi_{u,y\mid p}(u,y)\,\mathrm{d}u\,\mathrm{d}y\,\label{eqn:eig_used}
\end{align}

where $\pi(u)$ denotes the Lebesgue density of $\mu$. For our application $\mu$ will correspond to the prior distribution $\mu_n$ at time step $n$ and $\mu_{u\mid y,p}$ the associated posterior distribution for given $y$ and $p$. For notational simplicity, we will sometimes write \(\text{EIG}(p, \pi)\). 

\begin{remark}
In the non-sequential setting, we typically assume \(\mu = \mu_0\), where \(\mu_0\) is a Gaussian prior, and we denote the EIG simply as \(\text{EIG}(p)\) as familar in the BOED literature.
\end{remark}

The exact evaluation of \eqref{eqn:eig_used} is usually not possible, due to not having a closed formula for the posterior density $\pi_{u\mid y,p}(u)$ (or equivalently the normalization constant $\pi_{y\mid p}(y)$) and approximating the EIG is computationally demanding due to the need to evaluate two nested integrals: one over the possible observations and another over the posterior distribution of the parameters. 

\subsection{Methodological framework for derivative-free sequential BOED}

In the following, we present an overview of our proposed methodology, which provides a fully derivative-free approach to OED by leveraging derivative-free Langevin sampling for joint distribution approximation and Ensemble Kalman Inversion (EKI) for experimental optimization.

The primary challenge lies in efficiently sampling from the joint distribution, particularly in a sequential setting, where the observations are obtained one at a time, and the posterior is updated dynamically.

To address this, we propose using derivative-free Langevin methods to sample from the joint distribution. Traditional Langevin-based samplers leverage gradient information to explore the posterior efficiently, but in many practical cases, computing gradients is either infeasible or expensive. A derivative-free variant allows us to approximate the joint distribution without relying on explicit gradient computations, making it particularly useful when dealing with black-box likelihoods, i.e. for computationally expensive black-box models.

Once samples from the joint distribution are obtained, a key remaining challenge is approximating either the normalization constant or the posterior distribution itself. A natural and computationally efficient approach is to use a Gaussian approximation, which can serve as a surrogate for the true posterior. This provides a tractable way to estimate the EIG while maintaining computational efficiency. In cases where the Gaussian approximation is insufficiently accurate, it can still be used as an initialization for more refined sampling techniques, leading to a practical and scalable approach for experimental design in complex models.

To determine the optimal experimental setup at each time step, we use EKI as a derivative-free optimization method. EKI is particularly useful in high-dimensional and non-convex settings, where efficient tools for gradient computations are not available. By using EKI, we iteratively refine the experimental design without relying on explicit gradient computations. This ensures that the entire approach—from sampling the joint distribution to optimizing the experiment—remains fully derivative-free, making it a practical and scalable framework for sequential Bayesian experimental design in complex models.

In the following sections, we provide a detailed explanation of each individual component of our proposed methodology, starting with Gaussian approximations of the posterior and normalization constants, followed by a discussion of derivative-free Langevin sampling for joint distribution approximation and then the use of EKI for optimizing the experimental design.

\section{Gaussian approximations for efficient EIG estimation}\label{sec:Gauss_Laplace_approx}

Gaussian approximations for EIG are particularly valuable due to their analytical tractability and computational efficiency. Before delving into the details of our proposed approximation, we first review the well-known upper and lower bounds on EIG that arise from approximating either the posterior distribution or the normalization constant. Please see \cite{barber2003im, foster2019variational, Huan_Jagalur_Marzouk_2024, Poole2019} for more details. 
We denote the approximation of $\pi_{y\mid p}(y)$ by $\tilde{\pi}_{y\mid p}(y)$ and the approximation of the posterior distribution $\pi_{u\mid y,p}(u)$ by $\tilde{\pi}_{u\mid y,p}(u)$ for any $y$ and $p$. Then, for an upper bound the following holds
\begin{align}
    \text{EIG}(p)
    &=\mathbb{E}_{u,y\mid p}\left[\log\left(\frac{\pi_{y\mid u,p}(y)\tilde{\pi}_{y\mid p}(y)}{\tilde{\pi}_{y\mid p}(y)\pi_{y\mid p}(y)}\right)\right]\notag\\
    &=\mathbb{E}_{u,y\mid p}\left[\log\left(\frac{\pi_{y\mid u,p}(y)}{\tilde{\pi}_{y\mid p}(y)}\right)\right] - D_{\text{KL}}\left(\mu_{y\mid p}\mid\mid\tilde\mu_{y\mid p}\right)\notag\\
    &\leq \mathbb{E}_{u,y\mid p}\left[\log\left(\frac{\pi_{y\mid u,p}(y)}{\tilde{\pi}_{y\mid p}(y)}\right)\right]    =\mathbb{E}_u\left[D_{\text{KL}}\left(\mu_{y\mid u,p}\mid\mid\tilde\mu_{y\mid p}\right)\right]\,.
    \label{eqn:eig_upper_bound}
\end{align}
For a lower bound the posterior density gets approximated 

\begin{align}
\text{EIG}(p) &= \mathbb{E}_{u,y\mid p} \left[ \log \frac{\tilde{\pi}_{u \mid y, p}(u) \, \pi_{u \mid y, p}(u)}{\pi_0(u) \, \tilde{\pi}_{u \mid y, p}(u)} \right] \notag \\
&= \mathbb{E}_{u,y\mid p} \left[ \log \frac{\tilde{\pi}_{u \mid y, p}(u)}{\pi_0(u)} \right] + \mathbb{E}_{y \mid p} \left[ D_{\text{KL}}(\mu_{u \mid y, p} \mid\mid \tilde\mu_{u \mid y, p}) \right] \notag \\
&\geq \mathbb{E}_{u,y\mid p} \left[ \log \frac{\tilde{\pi}_{u \mid y, p}(u)}{\pi_0(u)} \right] \,.\label{eqn:eig_lower_bound} 
\end{align}

\begin{remark}
    We note that in the sequential setting the upper and lower bounds are given by
    \begin{equation}
        \mathbb{E}_{u,y\mid p} \left[ \log \frac{\tilde{\pi}_{u \mid y, p}^n(u)}{\pi_{n-1}(u)} \right] \leq\text{EIG}(p, \pi_{n-1})\leq \mathbb{E}_{u,y\mid p}\left[\log\left(\frac{\pi_{y\mid u,p}(y)}{\tilde{\pi}_{y\mid p}^n(y)}\right)\right]\,.
    \end{equation}

Furthermore, we note that the nominator in the lower bound satisfies
for $n>1$
    \begin{align}
        \pi_{n-1}(u)&=\frac{1}{Z_{n-1}}\exp\left(-\tfrac{1}{2}\|y_{n-1}-G_{n-1}(u,p_{n-1})\|_\Gamma^2\right)\pi_{n-2}(u)\label{eqn:prior_nminus1_onestep}\\
        &= \frac{1}{\prod_{\ell=1}^{n-1} Z_{l}}\exp\left(-\tfrac{1}{2}\sum_{\ell=1}^{n-1}\|y_{\ell}-G_{\ell}(u,p_{\ell})\|_\Gamma^2\right)\pi_0(u)\,\label{eqn:prior_nminus1_nstep}
    \end{align}
        
        where $Z_{l}$ denotes the normalization constant of $\pi_l$.\\

\end{remark}

We will first focus on the lower bound, i.e. on approximations of the posterior distributions.
When the observed data is highly informative, either due to a large amount of data or low noise levels, the posterior distribution tends to concentrate around its mode. By employing a Laplace approximation for the posterior, the distribution is approximated as a Gaussian centered at the maximum a posteriori estimate, effectively capturing local curvature information. We will demonstrate that, under suitable assumptions, the bound based on the Laplace approximation becomes tight in the small noise or large data limit. Additionally, we will explore the use of a Gaussian sample-based approximation for the joint distribution, enabling efficient estimation of both the posterior and the normalization constant, which in turn facilitates scalable computation of EIG bounds even in high-dimensional settings. Finally, we will discuss the complementary strengths of these two approaches and how their combination can further enhance the accuracy and efficiency of EIG estimation.\\

\subsection{Laplace approximation}
In this subsection, we analyze the Laplace approximation in the small-noise or large-data regime. Specifically, we focus on a single step in the sequential setting, assuming that the data is highly informative—either due to low noise levels or the availability of a large set of observations at each time step. The limit we consider here characterizes the asymptotic behavior and provides justification for applying the Laplace approximation even in cases where the noise is finite but small, or when a large amount of data is available. We consider concentrating posterior, scaled by a parameter $\varepsilon$, of the following form
\begin{equation}\label{equ:post}
	\nu_{\varepsilon}(\d u) = \frac 1{Z_{\varepsilon}} \exp\left(-\frac1{\varepsilon} \Phi_{\varepsilon}(u)\right) \nu_u(\d u),
	\quad 
	Z_{\varepsilon} := \int_{\R^d} \exp\left(-\frac1{\varepsilon} \Phi_{\varepsilon}(u)\right) \nu_u(\d u),
	\quad
	{{\varepsilon}\ll 1},%\in\bbN.
\end{equation}
In the large data setting (not to be confused with the sequential data setting), we can interpret $\varepsilon=\frac{1}{M}, M\in \mathbb N$ and consider potentials
\begin{equation*}
    \Phi_M(u)=\tfrac{1}{M}\sum_{l=1}^M \Phi(u,y^{(l)},p)\,.
    \end{equation*}
Several recent publications have already proposed the Laplace approximation as an effective method for directly approximating posterior distributions in BOED (see e.g. \cite{Cavagnaro2010,Lewi2009,Long2013,Long2022,Ryan2015,Ghattas-Laplace}). Parameterizing the Laplace approximation for each observations point has been for example discussed in \cite{Ghattas-Laplace}. In the following, we build the theory for this approach, in particular we derive results related to the Kullback-Leibler divergence, which then leads to a lower bound of the EIG. 
This paper extends the analysis in \cite{Laplace} by examining the distance of the posterior to the Laplace approximation in the KL divergence for $\varepsilon\to0$.

We define
\begin{equation*}
    I_{\varepsilon}(u) = \Phi_{\varepsilon}(u) - \varepsilon\log(\pi_u)\,. 
\end{equation*}

\begin{assumption}\label{assum:LA_0}
There holds $\Phi_{\varepsilon}, \pi_u \in C^3(\mathbb R^d,\mathbb{R})$.
Furthermore, $I_{\varepsilon}$ has a unique minimizer $u_{\varepsilon} \in \mathbb R^d$, satisfying
\[
	I_{\varepsilon}(u_{\varepsilon}) = 0,
	\quad
	\nabla I_{\varepsilon}(u_{\varepsilon}) = 0,
	\quad
	\nabla^2 I_{\varepsilon}(u_{\varepsilon}) > 0,
\]
where the latter denotes positive definiteness.
For all $\varepsilon$
\begin{enumerate}
\item 
there exist the limits
\begin{equation}\label{equ:x_star}
	u_\star := \lim_{{\varepsilon}\to 0} u_{\varepsilon}
	\qquad
	H_\star := \lim_{{\varepsilon}\to 0} H_{\varepsilon},
	\qquad
	H_{\varepsilon} := \nabla^2\Phi_{\varepsilon}(u_{\varepsilon})\,,
\end{equation}
in $\mathbb{R}^d$ and $\R^{d \times d}$, respectively, with $H_\star$ being positive definite.

\item
For each $r > 0$ there exists an $\varepsilon_r>0$, $\delta_r>0$ and $K_r < \infty$ such that
\[
	\delta_r \leq \inf_{x \notin B_r(u_{\varepsilon})\cap \mathbb R^d} I_{\varepsilon}(x) \qquad \forall 0<\varepsilon\leq \varepsilon_r\,,
\]
as well as 
\[
	\max_{x\in B_r(0) \cap \mathbb R^d} \|\nabla^3 \log \pi_u(x)\| \leq K_r,
	\max_{x\in B_r(0)\cap \mathbb R^d} \|\nabla^3 \Phi_{\varepsilon}(x)\| \leq K_r \quad \forall 0<\varepsilon\leq \varepsilon_r\,.
\]

\end{enumerate}
\end{assumption}

Given \cref{assum:LA_0} we define the \textit{Laplace approximation} of the concentrating distributions $ \nu_{u_{\varepsilon}\mid y^{({\varepsilon})},p} $ as the following Gaussian measure

\[
\mathcal{L}_{{u_{\varepsilon}\mid y^{({\varepsilon})},p}} := \mathcal{N} (u_{\varepsilon}, \varepsilon C_{\varepsilon}), \quad C_{\varepsilon}^{-1} := \nabla^2 I_{\varepsilon} (u_{\varepsilon})\,.
\]

Thus, the unnormalized density is given by

\[
\tilde\pi_{\varepsilon}(x) = \exp \left( - \frac{1}{2\varepsilon} \| x - u_{\varepsilon} \|^2_{C_{\varepsilon}^{-1}} \right)\,,
\]

and the normalization constant is  
\[\tilde{Z}_{\varepsilon} := \varepsilon^{d/2} \sqrt{\det(2\pi C_{\varepsilon})}\,.
\]

The motivation for the Laplace approximation comes from a second order Taylor approximation of $I_{\varepsilon}$ around $u_{\varepsilon}$. We obtain
\begin{align*}
    I_{\varepsilon}(x) &\approx  I_{\varepsilon}(u_{\varepsilon})+\nabla I_{\varepsilon}(u_{\varepsilon})^\top(x-u_{\varepsilon}) + \frac{1}{2} \|x-u_{\varepsilon}\|^2_{C_{\varepsilon}^{-1}}\\
    &=\frac{1}{2} \|x-u_{\varepsilon}\|^2_{C_{\varepsilon}^{-1}}\,,
\end{align*}
since by assumptions the first two terms are zero. And thus the following approximation can be justified
\[
\nu_{u_{\varepsilon}\mid y^{({\varepsilon})},p}\approx\mathcal{L}_{{u_{\varepsilon}\mid y^{({\varepsilon})},p}}\,.\]

\begin{theorem}
Let \cref{assum:LA_0} be satisfied. Then, 
\begin{equation}
D_{\text{KL}}(\nu_{u_{\varepsilon}\mid y^{({\varepsilon})},p} \mid\mid \mathcal L_{{u_{\varepsilon}\mid y^{({\varepsilon})},p}})\in\mathcal{O}({\varepsilon}^{1/2})\,.
\end{equation}

\end{theorem}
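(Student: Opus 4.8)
The plan is to compare the normalized posterior density $\pi_{\varepsilon} = Z_{\varepsilon}^{-1} \exp(-\varepsilon^{-1}\Phi_{\varepsilon})\pi_u$ with the Laplace density $\mathcal{L}_{\varepsilon}$, writing the KL divergence explicitly as
\[
D_{\text{KL}}(\nu_{\varepsilon} \mid\mid \mathcal{L}_{\varepsilon}) = \int_{\R^d} \log\!\left(\frac{\pi_{\varepsilon}(u)}{\ell_{\varepsilon}(u)}\right)\pi_{\varepsilon}(u)\,\d u
= \log\frac{\tilde Z_{\varepsilon}}{Z_{\varepsilon}} + \frac{1}{\varepsilon}\int_{\R^d}\Bigl( \tfrac12\|u-u_{\varepsilon}\|^2_{C_{\varepsilon}^{-1}} - I_{\varepsilon}(u)\Bigr)\pi_{\varepsilon}(u)\,\d u,
\]
using $I_{\varepsilon} = \Phi_{\varepsilon} - \varepsilon\log\pi_u$ and $I_{\varepsilon}(u_{\varepsilon})=0$. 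The two ingredients are then (i) an asymptotic expansion of the ratio $\tilde Z_{\varepsilon}/Z_{\varepsilon}$ of normalization constants, and (ii) control of the expectation of the Taylor remainder $R_{\varepsilon}(u) := I_{\varepsilon}(u) - \tfrac12\|u-u_{\varepsilon}\|^2_{C_{\varepsilon}^{-1}}$ under $\nu_{\varepsilon}$.

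First I would perform the rescaling $u = u_{\varepsilon} + \sqrt{\varepsilon}\,v$, which turns $\nu_{\varepsilon}$ into an $O(1)$-concentrated measure in $v$. Under \cref{assum:LA_0}, a third-order Taylor expansion of $I_{\varepsilon}$ about $u_{\varepsilon}$ gives $\varepsilon^{-1}I_{\varepsilon}(u_{\varepsilon}+\sqrt{\varepsilon}v) = \tfrac12\|v\|^2_{C_{\varepsilon}^{-1}} + \sqrt{\varepsilon}\,\rho_{\varepsilon}(v)$, where $\rho_{\varepsilon}(v)$ is a cubic form in $v$ with coefficients given by $\nabla^3 I_{\varepsilon}$ evaluated along the segment, hence uniformly bounded by $K_r$ on $B_r$. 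Standard Laplace-expansion arguments (splitting the integral into $B_r(u_{\varepsilon})$ and its complement, where Assumption \ref{assum:LA_0}(2) gives exponentially small mass, and Taylor-expanding $e^{-\sqrt{\varepsilon}\rho_{\varepsilon}(v)} = 1 - \sqrt{\varepsilon}\rho_{\varepsilon}(v) + O(\varepsilon)$) yield $Z_{\varepsilon} = \tilde Z_{\varepsilon}\bigl(1 + O(\varepsilon)\bigr)$, because the $O(\sqrt{\varepsilon})$ term involves the Gaussian integral of an odd cubic and vanishes. Thus $\log(\tilde Z_{\varepsilon}/Z_{\varepsilon}) = O(\varepsilon)$. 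The same rescaling applied to the remainder term gives $\varepsilon^{-1}\E_{\nu_{\varepsilon}}[R_{\varepsilon}] = -\sqrt{\varepsilon}\,\E_{\nu_{\varepsilon}}[\rho_{\varepsilon}(v)] + \text{h.o.t.}$; since $\nu_{\varepsilon}$ in the $v$-coordinate converges to $\mathcal{N}(0,C_\star)$ with a correction, the leading contribution is $-\sqrt{\varepsilon}$ times the Gaussian expectation of the cubic $\rho_\star$, which need not vanish, producing the $\mathcal{O}(\varepsilon^{1/2})$ rate. Collecting, $D_{\text{KL}}(\nu_{\varepsilon}\mid\mid\mathcal{L}_{\varepsilon}) = O(\varepsilon) + O(\sqrt{\varepsilon}) = \mathcal{O}(\varepsilon^{1/2})$.

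The main obstacle is making the tail estimates and the interchange of expansion and integration rigorous with only $C^3$ regularity and only local ($B_r$) bounds on the third derivatives: one must show that the contribution of $\{u \notin B_r(u_{\varepsilon})\}$ to both $Z_{\varepsilon}$ and to $\E_{\nu_{\varepsilon}}[R_{\varepsilon}]$ is $o(\varepsilon)$, which follows from the uniform coercivity bound $\inf_{x\notin B_r(u_{\varepsilon})} I_{\varepsilon}(x) \geq \delta_r$ together with integrability of $\pi_u$, but requires care since $R_{\varepsilon}$ itself grows quadratically. A secondary technical point is controlling the difference between expectations under $\nu_{\varepsilon}$ and under the reference Gaussian $\mathcal{N}(u_{\varepsilon},\varepsilon C_{\varepsilon})$ — this is precisely where the $\sqrt{\varepsilon}$ rather than $\varepsilon$ rate enters — and verifying that the constants in all $O(\cdot)$ terms are uniform in $\varepsilon$ using the convergence $u_{\varepsilon}\to u_\star$, $H_{\varepsilon}\to H_\star$ with $H_\star$ positive definite, so that $C_{\varepsilon}^{-1}$ stays uniformly bounded below. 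Everything else is a routine, if lengthy, Watson-lemma-style computation.
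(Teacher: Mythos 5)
Your proposal is essentially correct, but it follows a genuinely different route from the paper. You expand the KL divergence exactly as $\log(\tilde Z_{\varepsilon}/Z_{\varepsilon})$ plus $\varepsilon^{-1}$ times the expected Taylor remainder under the posterior, and then control both terms by the rescaling $u=u_{\varepsilon}+\sqrt{\varepsilon}\,v$ and Watson-lemma-type asymptotics. The paper instead never expands the posterior expectation of the remainder directly: it bounds the non-normalization part of the KL divergence via the elementary inequality $\log x\le x-1$ followed by Cauchy--Schwarz \emph{with respect to the Gaussian Laplace measure}, reducing everything to the estimate $\|\pi_{\varepsilon}/\tilde\pi_{\varepsilon}-1\|_{L^2(\mathcal L_{u_{\varepsilon}\mid y^{(\varepsilon)},p})}\in\mathcal O(\varepsilon^{1/2})$, which is obtained by the same ball/tail splitting and cubic remainder bound you use, while the ratio $\tilde Z_{\varepsilon}/Z_{\varepsilon}\sim 1+c\varepsilon$ is quoted from \cite{Laplace}. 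What the paper's route buys is that all integrals are taken against the Gaussian reference measure, so the quadratically growing remainder never has to be integrated against the posterior in the tail; your route instead needs second moments of the prior (or an equivalent integrability condition) to make the tail contribution to $\E_{\nu_{\varepsilon}}[R_{\varepsilon}]$ negligible, a point you correctly flag but which is not literally contained in \cref{assum:LA_0}. What your route buys is a self-contained argument that exposes exactly where the rate comes from. Two small imprecisions to fix if you carry it out: (i) with only $C^3$ regularity the cubic term is $\tfrac16\nabla^3 I_{\varepsilon}(\xi(v))[v,v,v]$ with a $v$-dependent intermediate point, so it is not exactly odd and the claimed cancellation giving $Z_{\varepsilon}=\tilde Z_{\varepsilon}(1+\mathcal O(\varepsilon))$ is not immediate — you only get $\mathcal O(\varepsilon^{1/2})$ directly, which however still suffices for the theorem (the sharper $1+c\varepsilon$ is exactly what the paper imports from \cite{Laplace}); and (ii) your remark that the Gaussian expectation of the limiting cubic ``need not vanish'' is backwards — a centered Gaussian kills exact odd cubics, and the true obstruction to improving $\sqrt{\varepsilon}$ to $\varepsilon$ is the lack of $C^4$ regularity and the $\mathcal O(\sqrt{\varepsilon})$ discrepancy between $\nu_{\varepsilon}$ and the Gaussian. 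Neither issue affects the validity of the claimed upper bound $\mathcal O(\varepsilon^{1/2})$.
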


\begin{proof}
    See \cref{thm:laplace_seq_appendix}.
\end{proof}

The previous result focuses on the Kullback-Leibler convergence of the posterior and Laplace approximation for a given instance of the observations. Further assuming that the constant $K_r=K_r(y) \in L^1(\mu_{y|p})$ in \cref{assum:LA_0} leads to
\begin{align}
\mathbb{E}_{y \mid p} \left[ D_{\text{KL}}(\nu_{u_{\varepsilon}\mid y^{({\varepsilon})},p} \mid\mid \mathcal L_{{u_{\varepsilon}\mid y^{({\varepsilon})},p}})\right] \in \mathcal O({\varepsilon}^{\frac12})\,.
\end{align}

We have demonstrated that a parameterized Laplace approximation (with respect to the observations) converges to the posterior in the large-data or small-noise limit under suitable assumptions. Consequently, the bound on the EIG becomes tight in this regime. However, applying this result to design optimization requires computing the Laplace approximation for all possible observations corresponding to each design, which may be computationally prohibitive.

To address this, we propose a much simpler alternative: a Gaussian approximation of the joint distribution, which yields both upper and lower bounds on the EIG. If this approximation lacks sufficient accuracy to provide useful design directions, it serves as an initialization for the parameterized Laplace approximation. This hybrid approach significantly enhances computational efficiency compared to directly employing the Laplace approximation from the outset.\\

\subsection{Gaussian approximation of the joint distribution}
We will find the Gaussian approximation by using samples $(u^{(j)},y^{(j)})_{j=1,\ldots,J}$ from the true joint distribution $\mu_{u,y \mid p}$. Moreover, as, before, we consider a fixed time $n\in\mathbb{N}$ and neglect the time index for the moment.
We define by $\mathcal N(\widetilde m_{u,y \mid p}^{(J)}, \widetilde C_{u,y\mid p}^{(J)})$ with 
\begin{equation*}
    \widetilde m_{u,y\mid p}^{(J)}=\begin{pmatrix}\widetilde m_{u\mid p}^{(J)} \\ \widetilde m_{y\mid p}^{(J)}\end{pmatrix},\quad \widetilde  C_{u,y\mid p}^{(J)}\begin{pmatrix} \widetilde C_{u\mid p}^{(J)} \widetilde C_{uy\mid p}^{(J)}\\\widetilde C_{yu\mid p}^{(J)} \widetilde C_{y\mid p}^{(J)}\end{pmatrix},
\end{equation*} where  $\widetilde m_{u\mid p}^{(J)}\in\mathbb{R}^d, \widetilde m_{y\mid p}^{(J)}\in\mathbb{R}^k,  \widetilde C_{u\mid p}^{(J)}\in\mathbb{R}^{d\times d}, \widetilde C_{uy\mid p}^{(J)}\in\mathbb{R}^{d\times k}, \widetilde C_{yu\mid p}^{(J)}\in\mathbb{R}^{k\times d}, \widetilde C_{y\mid p}^{(J)}\in\mathbb{R}^{k\times k}$ 
are empirical means and covariances, the Gaussian approximation of the joint measure $\mu_{u,y|p}$. 

We then define the approximations of the joint, 

\begin{equation}
    \tilde{\mu}_{u, y \mid p}^{(J)}=\mathcal{N}\left( \widetilde m_{u,y\mid p}^{(J)},
\widetilde  C_{u,y\mid p}^{(J)}\right)\label{eqn:gaussapprox_joint}
\end{equation}
the posterior
\begin{equation}
    \tilde{\mu}_{u \mid y, p}^{(J)}=\mathcal{N}(\widetilde m_{u\mid y, p}^{(J)},C_{u\mid y,p}^{(J)})\,,\label{eqn:gaussapprox_posterior}
    \end{equation}

    where 
    \begin{align*}
        \widetilde m_{u\mid y, p}^{(J)}&= \widetilde m_{u\mid p}^{(J)}+\widetilde C_{uy\mid p}^{(J)} (\widetilde C_{y\mid p}^{(J)})^{-1}(y-\widetilde m_{y\mid p}^{(J)})\\
        \widetilde C_{u\mid y,p}^{(J)}&=\widetilde C_{u\mid p}^{(J)}+\widetilde C_{uy\mid p}^{(J)}(\widetilde C_{y\mid p}^{(J)})^{-1}(\widetilde C_{uy\mid p}^{(J)})^\top)\,.
    \end{align*}
And the marginal distribution of $y\mid p$ 
\begin{equation}
    \tilde{\mu}_{y \mid p}^{(J)}=\mathcal{N}(\widetilde m_{y\mid p}^{(J)},\widetilde C_{y\mid p}^{(J)})\,.\label{eqn:gaussapprox_normconst}
\end{equation}
Through the approximation of the posterior distribution $u\mid y,p$ by $\tilde{\mu}_{u \mid y, p}^{(J)}(u)$ we obtain a lower bound of the EIG. On the other hand, by approximating the marginal distribution $y\mid p$ by $\tilde{\mu}_{y \mid p}^{(J)}(y)$, we obtain an upper bound of the EIG.

In the linear case, the bounds get tight with increasing number of samples.

\begin{lemma}\label{lem:eig_upper_lower_bound_linear}
   Let the forward operator $G:\mathcal{X}\times\mathcal{D}\to\mathcal{Y}$ be linear with respect to the unknown parameter, i.e. $G(u,p) = A(p)u$ for all $p\in\mathcal{D}$, where $A(p)\in\mathbb{R}^{k\times d}$ and consider the approximations $\tilde{\mu}_{u \mid y, p}^{(J)}$ and $\tilde{\mu}_{y \mid p}^{(J)}$ from \eqref{eqn:gaussapprox_posterior} and \eqref{eqn:gaussapprox_normconst}.
   Given $(u_i,y_i)_{i=1,\ldots,J}$ i.i.d. samples from the true joint distribution $\mu_{u,y \mid p}$, then there holds
   \begin{align*}
       D_{\text{KL}}\left(\mu_{y\mid p}\mid\mid\tilde{\mu}_{y\mid p}^{(J)}\right) &\xrightarrow[\mathcal O(J^{-1})]{L^2} 0\\
       D_{\text{KL}}\left(\mu_{u \mid y, p} \mid\mid \tilde{\mu}_{u \mid y, p}^{(J)}\right)&\xrightarrow[\mathcal O(J^{-1})]{L^2} 0\,.
   \end{align*}
\end{lemma}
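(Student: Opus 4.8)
The claim is that in the linear-Gaussian setting, the empirical Gaussian approximations of the marginal $\mu_{y\mid p}$ and of the posterior $\mu_{u\mid y,p}$ converge to the true distributions in KL divergence, in $L^2$ with rate $\mathcal{O}(J^{-1})$ in the squared sense (i.e. the $L^2$-norm of the KL divergence decays like $J^{-1}$). The key structural observation is that when $G(u,p)=A(p)u$ and the prior $\mu_0$ is Gaussian, the true joint distribution $\mu_{u,y\mid p}$ is \emph{exactly} Gaussian, and hence so are the true marginal $\mu_{y\mid p}$ and the true posterior $\mu_{u\mid y,p}$; moreover their means and covariances are exactly the population versions of the empirical quantities $\widetilde m_{\cdot}^{(J)}, \widetilde C_{\cdot}^{(J)}$ used to build the approximations. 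So the whole problem reduces to: (i) an explicit formula for the KL divergence between two Gaussians with the same dimension, and (ii) a quantitative law of large numbers for empirical means and covariances of i.i.d.\ samples with enough finite moments (here all moments, since everything is Gaussian).

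\textbf{Step 1: reduce to Gaussian-vs-Gaussian KL.} I would first record that for two Gaussians $\mathcal{N}(m_1,\Sigma_1)$ and $\mathcal{N}(m_2,\Sigma_2)$ on $\mathbb{R}^m$,
\[
D_{\text{KL}}\bigl(\mathcal{N}(m_1,\Sigma_1)\,\|\,\mathcal{N}(m_2,\Sigma_2)\bigr)
=\tfrac12\Bigl(\tr(\Sigma_2^{-1}\Sigma_1)-m+\log\tfrac{\det\Sigma_2}{\det\Sigma_1}+\|m_1-m_2\|_{\Sigma_2}^2\Bigr).
\]
Then, writing $m_1=m_\star$, $\Sigma_1=\Sigma_\star$ for the true (population) quantities and $m_2=\widetilde m^{(J)}$, $\Sigma_2=\widetilde\Sigma^{(J)}$ for the empirical ones, I would Taylor-expand the right-hand side around the point $(\widetilde m^{(J)},\widetilde\Sigma^{(J)})=(m_\star,\Sigma_\star)$: the zeroth- and first-order terms vanish (the KL is nonnegative and attains its minimum value $0$ there, so the gradient is zero), leaving a quadratic remainder. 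Concretely one shows
\[
D_{\text{KL}} \;\le\; C\Bigl(\|\widetilde m^{(J)}-m_\star\|^2+\|\widetilde\Sigma^{(J)}-\Sigma_\star\|^2\Bigr)
\]
for a constant $C$ depending on $\Sigma_\star$ (and a lower bound on its eigenvalues), valid on the event that $\widetilde\Sigma^{(J)}$ is within a fixed neighbourhood of $\Sigma_\star$ (so that inverses and determinants are controlled). This requires a small argument to handle the complementary (low-probability) event; since all moments of the sample mean and covariance are finite and the Gaussian tails are light, the contribution of that event to the $L^2$ norm is negligible (in fact $o(J^{-1})$), e.g.\ via Cauchy--Schwarz together with exponential concentration of $\widetilde\Sigma^{(J)}$.

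\textbf{Step 2: moment bounds on the empirical statistics.} Next I would invoke the standard fact that for i.i.d.\ samples from a distribution with finite fourth moments, $\mathbb{E}\|\widetilde m^{(J)}-m_\star\|^2=\mathcal{O}(J^{-1})$ and $\mathbb{E}\|\widetilde C^{(J)}-C_\star\|^2=\mathcal{O}(J^{-1})$ (using the biased $1/J$ normalization only shifts things by $\mathcal{O}(J^{-2})$). Applied to the joint samples $(u^{(j)},y^{(j)})$, this immediately controls $\widetilde m_{y\mid p}^{(J)}$, $\widetilde C_{y\mid p}^{(J)}$, and also all the blocks $\widetilde m_{u\mid p}^{(J)}, \widetilde C_{u\mid p}^{(J)}, \widetilde C_{uy\mid p}^{(J)}$. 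For the posterior approximation there is an extra wrinkle: the approximate posterior covariance $\widetilde C_{u\mid y,p}^{(J)}$ and mean $\widetilde m_{u\mid y,p}^{(J)}$ are \emph{nonlinear} (rational) functions of the empirical joint statistics — they involve $(\widetilde C_{y\mid p}^{(J)})^{-1}$ — and the posterior mean also depends on $y$, which is itself random with Gaussian tails. So I would show that the map from joint statistics to $(\widetilde m_{u\mid y,p}^{(J)},\widetilde C_{u\mid y,p}^{(J)})$ is locally Lipschitz near the population point (matrix inversion is smooth where $\widetilde C_{y\mid p}^{(J)}\succ 0$), again localizing to a good event, and then combine the Lipschitz bound with the $\mathcal{O}(J^{-1})$ moment control from Step 2 and the fact that $\mathbb{E}_y[\|y\|^2]<\infty$ to absorb the $y$-dependence in the posterior mean term $\|\widetilde m_{u\mid y,p}^{(J)}-m_{u\mid y,p}\|_{C_{u\mid y,p}}^2$.

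\textbf{Main obstacle.} The routine part is the Gaussian KL formula and the i.i.d.\ moment bounds; the genuinely delicate part is the \emph{localization}, i.e.\ making rigorous that the Taylor/Lipschitz estimates (which need $\widetilde C_{y\mid p}^{(J)}$ bounded away from singularity and $\widetilde\Sigma^{(J)}$ near $\Sigma_\star$) only fail on an event whose contribution to the $L^2$ norm is $o(J^{-1})$, \emph{uniformly} enough that nothing worse than $\mathcal{O}(J^{-1})$ survives — and, for the posterior bound, controlling the interaction between the randomness of the samples and the independent randomness of $y$ in the divergence $D_{\text{KL}}(\mu_{u\mid y,p}\|\widetilde\mu_{u\mid y,p}^{(J)})$, which must be bounded and then integrated over $y\mid p$. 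I would handle this by a union bound over the "bad" events $\{\lambda_{\min}(\widetilde C_{y\mid p}^{(J)})<\tfrac12\lambda_{\min}(C_{y\mid p})\}$ etc., each of which has probability decaying faster than any polynomial by Gaussian concentration of sample covariances, and then Cauchy--Schwarz against the (polynomially bounded) size of the divergence on the bad event.
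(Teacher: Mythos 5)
Your proposal is correct and takes essentially the same route as the paper, whose actual proof (via the appendix lemma and corollary on empirical Gaussian statistics) likewise reduces the claim to the explicit Gaussian--Gaussian KL formula plus $L^2$ control of the empirical mean and covariance of the exactly Gaussian joint distribution, with the posterior bound obtained through the Schur-complement/conditioning structure of the empirical joint covariance. The only notable difference is technical: where you localize to a well-conditioned event for $\widetilde C_{y\mid p}^{(J)}$ and control the bad event by concentration and Cauchy--Schwarz, the paper works with the pseudoinverse of the empirical covariance and cites a continuity result for it.
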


\begin{proof}

Follows from \cref{cor:samples_gauss}.

\end{proof}

Since the forward operator is not fully linear -- otherwise, the problem could be solved analytically -- we aim to analyze the KL divergence between the proposed marginal distribution  $\tilde\mu_{y\mid p}$  and the true marginal distribution $\mu_{y\mid p}$ as well as the posterior $\mu_{u\mid y,p}$ and its approximation $\tilde\mu_{u\mid y,p}$ for a forward operator that is almost linear. To proceed, we make the following assumption:

\begin{assumption}\label{ass:linear_apprx_forward_operator}
    For a fixed design $p\in\mathcal{D}$ we assume that the forward operator $G\in C^2(\mathcal{X}\times \mathcal{D},\mathbb{R}^k)$ is of the form
    \begin{equation}
        G(u,p)=A(p)u+\tau F(u,p)\,,
    \end{equation}
    for $A(p)\in\mathbb R^{k\times d}$, $\tau\in\mathbb R$ and nonlinear perturbation $F\in \mathcal C^2(\mathcal{X}\times \mathcal{D},\mathbb R^k)$.

\end{assumption}

Given this forward operator, the likelihood term has the form

\begin{equation*}
    \exp(-\tfrac{1}{2}\|y-G(u,p)\|^2_\Gamma)=\exp(-\tfrac{1}{2}\|y-A(p)u\|^2_\Gamma)\exp(-\tfrac{1}{2} \text{Res}(\tau,F,u,y,p))\,,
\end{equation*}

where
\[\text{Res}(\tau,F,u,y,p)=\|\tau F(u,p)\|^2_\Gamma-2(y-A(p)u)^\top\Gamma^{-1}\tau F(u,p)\,.\]

Thus the data density is given by

\[\pi_{y\mid u,p}(y)\propto\exp(-\tfrac{1}{2}\|y-A(p)u\|^2_\Gamma)\exp(-\tfrac{1}{2} \text{Res}(\tau,F,u,y,p))\,.\] 

and thus the marginal satisfies

\begin{align}
   \pi_{y\mid p}(y)&=\int\limits_{\mathcal{X}}\pi_{y\mid u,p}(y)\pi_u(u)\mathrm{d}u\notag\\
   &\propto\int\limits_{\mathcal{X}}\exp(-\tfrac{1}{2} \text{Res}(\tau,F,u,y,p)) \exp\left(-\tfrac{1}{2} \|y - A(p)u\|_{\Gamma}^2-\tfrac{1}{2}\|u-m_0\|_{\Sigma_0}^2\right)\mathrm{d}u.\notag
   % &=\pi_{y\mid p, \text{lin}}(y)\mathbb{E}_{u\mid y,p,\text{lin}}\left[\exp\left(-\text{Res}(\tau,F,u,y,p)\right)\right]\,,\label{eqn:norm_cons_close_Gauss}
\end{align}

For the Gaussian approximation we first note that

\begin{align*}
    \mathbb{E}_{{y|p}}[y]&=\mathbb{E}_{u}[A(p)u] + \tau \mathbb{E}_u[F(u,p)]=A(p)m_0 + \tau \mathbb{E}_{u}[F(u,p)]\\
    \Cov_{y|p}[y] &= \mathbb{E}_{u,\eta}[(A(p)u+\tau F(u,p) + \eta - \mathbb{E}_{{y|p}}[y])(A(p)u+\tau F(u,p) + \eta - \mathbb{E}_{{y|p}}[y])^\top]\\
    &=A(p)\Sigma_0 A(p)^\top + \Gamma \\
    &+\tau (A(p)\Cov_{u}(u, F(u,p)) + \Cov_{u}( F(u,p),u) A(p)^\top) + \tau^2 \Cov_{u}(F(u,p))\,.
\end{align*}
Assuming that the nonlinear term $F(u,p)$ has bounded second moments uniformly in $p$, we observe that the Gaussian $\tilde{\mu}_{y\mid p}=\mathcal N(\mathbb{E}_{{y|p}}[y], \Cov_{y|p}[y])$ approximation gets incresaingly accurate for smaller values of $\tau$, i.e.
\[
D_{\text{KL}}\left(\mu_{y\mid p}\mid\mid\tilde{\mu}_{y\mid p}\right)\to 0 \qquad \tau\to 0\,.
\]

Similarly, the lower bound \eqref{eqn:eig_lower_bound} increases as the posterior approximation
\[D_{\text{KL}}\left(\mu_{u \mid y, p} \mid\mid {\tilde{\mu}_{u \mid y, p}^{(J)}}\right)\,,\]
becomes more accurate, i.e., as the KL divergence decreases. Assuming that the posterior covariance $\widetilde C_{u\mid y,p}$ is positive definite, there holds that the Gaussian approximation $\mathcal N(\widetilde m_{u\mid y,p},\widetilde C_{u\mid y,p})$ is the minimum of
\[D_{\text{KL}}\left(\mu_{u \mid y, p} \mid\mid {\tilde{\mu}_{u \mid y, p}^{(J)}}\right)= -\mathbb E_{u \mid y, p}[\log \rho]+\mathbb E_{u \mid y, p}[\log \pi_{u \mid y, p}]
\]
for all Gaussian approximations $\rho$. As \(\tau \to 0\), if \(\mu_{u \mid y, p}\) converges to a Gaussian measure, the lower bound approaches tightness, modulo sampling error.

\section{Efficient sampling from the posterior distribution in the sequential setting} \label{sec:evaluating}
As previously noted, evaluating the EIG is generally intractable due to the unknown posterior and joint distributions. We therefore estimate it using the approximation $\widetilde{\mathrm{EIG}}\colon \mathcal{D} \times \mathcal{P}_{\mathrm{ac}}(\mathbb{R}^d)\to\mathbb{R}$

\begin{equation} \label{eqn:eig_approx}
\widetilde{\mathrm{EIG}}(p,\pi):=\frac{1}{J}\sum_{j=1}^J\log\left(\frac{\pi_{y\mid u^{(j)},p}(y^{(j)})}{\pi_{y\mid p}(y^{(j)})}\right)=\frac{1}{J}\sum_{j=1}^J\log\left(\frac{\pi_{u\mid y^{(j)},p}(u^{(j)})}{\pi(u^{(j)})}\right) \approx \mathrm{EIG}(p)\,,
\end{equation}
where $(u_j,y_j)_{j=1}^J$ are samples from the joint distribution. As mentioned above the marginal density $\pi_{y|p}$ (and thus the posterior distribution $\pi_{u|y,p}$) are usually not known and need to be approximated. Approximating $\pi_{y|p}$ yields to $\widetilde{\mathrm{EIG}}(\pi,p)$ being a upper bound, approximating $\pi_{u|y,p}$ gives us a lower bound.\\ In the sequential BOED setting, this approximation requires sampling from the posterior distribution after the first step, as the prior is adapted according to the observations. For this purpose, we introduce Langevin-type methods.

We begin with gradient-based approaches, focusing on Langevin dynamics and its adaptations to gradient-free settings. To facilitate this, we denote the regularized potential in the $n$-th step of the posterior \eqref{eqn:post_dist}
with

\begin{align}
    I(u)&=\Phi(u,y_n,p_n)-\log(\pi_{n-1}(u))\notag\\
    &=\Phi(u,y_n,p_n) - \log\left(Z_{n-1}^{-1}\exp(-\Phi(u,y_{n-1},p_{n-1}))\pi_{n-1}(u))\right)\notag\\
    &=\sum_{\ell=1}^n \Phi(u,y_\ell,p_\ell)+\sum_{\ell=1}^{n-1}\log(Z_\ell)-\log(\pi_{0}(u))\,. \label{eqn:potential_joint}
\end{align}

While we initially consider gradient-based techniques, our primary focus is on gradient-free samplers due to their greater flexibility in settings where gradients are not accessible.

\subsection{Enhanced Langevin methods}

Given a \textit{single-particle} starting point $u_0 \in \mathbb{R}^d$ the first order overdamped Langevin process is of the form
\begin{equation}
    \mathrm{d}u_t = -\nabla I(u_t)\mathrm{d}t + \sqrt{2}\mathrm{d}W_t\,, \label{eq:basic_SDE}
\end{equation}
where $W_t$ is $d$-dimensional Brownian motion.

Collecting $J\in\mathbb{N}$ particles $\{U_t^{(j)}\}_{j=1}^{dJ}$ at time $t$ into a vector
\begin{equation*}
    U_t = \operatorname{vec}(u_t^{(1)},u_t^{(2)},\ldots, u_t^{(J)}) \in \mathbb{R}^{d},\quad t\geq 0\,,
\end{equation*}
many interacting particle system approaches admit the general form
\begin{equation}\label{eq:general_SDE}
    \mathrm{d}u^{(j)}_t = -A(U_t)\nabla_{u^{(j)}_t}\mathcal{V}(Z_t)\mathrm{d}t + \Gamma(U_t)\mathrm{d}W^{(j)}_t\qquad\text{for } j=1,\ldots,J\,.
\end{equation}
Here, $A(U_t)\in \mathbb{R}^{d\times d}$, $\Gamma(U_t)\in\mathbb{R}^{d}$, and $W^{(j)}_t$ are independent $d$-dimensional Brownian motions for $j=1,\ldots,J$ and $\mathcal{V}\colon\mathbb{R}^{d}\rightarrow \mathbb{R}$ usually depends on the potential $I$.

Choosing $A \equiv I_d$, $\Gamma \equiv \sqrt{2}I_d$ and $\mathcal{V}(U_t) = \sum_{j=1}^{J} I(u_t^{(j)})$ in \eqref{eq:general_SDE} leads to a particle system where each particle follows the process \eqref{eq:basic_SDE} independently. 
First, there is no interaction between the particles, which can limit sampling efficiency. Second, the system lacks affine invariance, meaning its convergence behavior is not preserved under linear affine transformations of the state variables. We will discuss in the following a method, which overcomes the drawbacks and, in addition, has a gradient-free extension. 

\subsection{\texorpdfstring{Affine invariant Langevin dynamics (ALDI, cf.~\cite{garbuno2020affine})}{Affine Invariant Langevin Dynamics (ALDI)}}

Standard Lan\-gevin dynamics can be highly sensitive to the geometry of the target distribution. If the posterior is stretched or skewed (e.g., due to parameter correlations or differing scales), vanilla Langevin methods may mix slowly and require careful tuning of step sizes. Affine invariant methods overcome this by adapting to the local geometry of the target, ensuring that the algorithm performs similarly regardless of how the problem is scaled or rotated. This makes them especially effective for sampling from poorly conditioned or anisotropic distributions without requiring extensive manual tuning. We start the discussion by a gradient based version.
\subsubsection{Gradient-based ALDI}

The iteration rule of gradient based ALDI incorporates interaction of the particles in the iteration rule, the update formula is given by

\begin{equation}\label{eq:aldi}
    \mathrm{d}u^{(j)}_t = -(\widetilde{C}_u)_t\nabla I(u_t^{(j)})\mathrm{d}t + \frac{d +1}{J}(u_t^{(j)}-\overline{u}_t)\mathrm{d}t + \sqrt{2}(\widetilde{C}_u)_t^{1/2}\mathrm{d}W_t^{(j)}\,.
\end{equation}

Under strong growth bound conditions on $I$, $\nabla I$ and $\operatorname{Hess}I$, and given $J>d+1$, ALDI is affine invariant and ergodic, i.e. convergence to the target distribution in total variation distance can be proven \cite{garbuno2020affine}.

%\noindent
In practice, ALDI is used with a non-symmetric generalization of the square root
\begin{equation}\label{eq:nonsym_sqrt}
    (\widetilde{C}_u)_t^{1/2} = \dfrac{1}{\sqrt{J}}\left( u_t^{(1)} - \overline{u}_t,\ldots, u_t^{(J)} - \overline{u}_t \right) \in \mathbb{R}^{D \times J}\,,
\end{equation}
such that $\widetilde{C}_u = \widetilde{C}_u^{1/2}(\widetilde{C}_u^{1/2})^{T}$, which can be 
obtained without additional computational cost.

\subsubsection{Gradient-free ALDI}

When gradients are unavailable or expensive to compute, gradient-free ALDI enables efficient sampling by combining affine invariance with gradient-free updates. We refer to \cite{garbuno2020affine} for more details. 
By Taylor's theorem we can justify the following (it holds exactly in the case of a linear forward operator)

\begin{align*}
    -(\widetilde{C}_u)_t\nabla I(u_t^{(j)})
    &\approx  
    -\tilde{\mathcal{A}}(U_t)\\
    -\tilde{\mathcal{A}}(U_t)&:=
    (\widetilde{C}_{u,\mathcal{G}})_t\Gamma^{-1}\left(y-G(u_t^{(j)},p)\right) - (\widetilde{C}_u)_t\Sigma_0^{-1}(u_t^{(j)} - m_0)\,,
\end{align*}

where $m_0\in\mathbb{R}^d $ and $\Sigma_0\in\mathbb{R}^{d \times d}$ denote mean and covariance of a Gaussian prior.\\

Then gradient-free ALDI SDE reads

\begin{equation}\label{eq:grad_free_aldi}
    \mathrm{d}u^{(j)}_t = -\tilde{\mathcal{A}}(U_t)\mathrm{d}t + \frac{d +1}{J}(u_t^{(j)}-\overline{u}_t)\mathrm{d}t + \sqrt{2}(\widetilde{C}_u)_t^{1/2}\mathrm{d}W_t^{(j)}\,.
\end{equation}
In the general nonlinear setting, the system retains affine invariance; however, there is no theoretical guarantee of convergence to the target distribution. Nonetheless, numerical experiments demonstrate good practical performance.

\begin{remark} 
The discussed interaction particle systems based on SDE formulations offer the advantage to define canonical proposals for Metropolis steps. We refer to~\cite{sprungk2025metropolis} for a related analysis. 
\end{remark}

\section{Ensemble Kalman inversion as optimizer for the EIG}\label{sec:Opt_EKI}

 We introduce Ensemble Kalman Inversion (EKI) in the continuous-time formulation proposed in \cite{Schillings2017}, and adapt it to our setting. EKI is a derivative-free optimization method inspired by the ensemble Kalman filter, and is particularly well-suited for inverse problems where gradient information is unavailable or expensive to compute. It approximates the minimizer of a regularized quadratic problem of the general form
 \begin{equation} \label{eqn:eqn_pot_regul_eki}
    \Psi^{\scaleto{\mathrm{reg}}{5pt}} (p):=\tfrac{1}{2}\|y-\mathcal{F}(p)\|^2_{\Gamma} + \tfrac{\alpha}{2}\|p\|_{C_p}^2\,,
\end{equation}
 by evolving an ensemble of particles using empirical covariances derived from the ensemble itself. 

To solve the EIG optimization problem \eqref{eq:general_optproblem} for $ \mathcal{L}_D(p)$  using EKI, we reformulate the maximization as a minimization problem, i.e. we minimize $ -\mathcal{L}_D(p) + c $, where $ c > \max_p \mathcal{L}_D(p)$ , and define the operator
\begin{equation}
\label{eq:EKI_loss}
\mathcal{F}:\mathcal{D} \to \mathbb{R}, \quad p \mapsto \sqrt{2(-\mathcal{L}_D(p) + c)}\,.
\end{equation}
We introduce a regularization of the design $p$ by 

\begin{equation}
\inf\limits_{p \in \mathcal{D}} \frac12\|\mathcal{F}(p)\|^2_2 + \tfrac{\alpha}{2} \|p\|^2_{C_p}\,,
\label{eq:regularised_optproblem}
\end{equation}
for a given regularization parameter $\alpha > 0 $ and symmetric positive-definite matrix $C_p \in \mathbb{R}^{\dim_p \times \dim_p}$. We assume in the following the existence of a unique minimizer $p^\ast \in \mathcal{D}$ of \eqref{eq:regularised_optproblem}.% , satisfying

Given an initial ensemble $p_0 = (p_0^{(i)})_{i \in \Jeki} \in \mathcal{D}^{\Nens}$ with number of ensemble particles $\Nens \in \mathbb{N}$ and index set $\Jeki := \{1, \ldots, \Nens\}$, EKI moves the particle according to the following dynamics

\begin{align} \label{EKI_regul_vi}
    \frac{\mathrm{d} p^{(i)}(t)}{\mathrm{d}t} &= (1-\rho)\left[-(\widetilde{C}_{p,\cF})_t \cF(p_t^{(i)})-(\widetilde{C}_p)_t \alpha C_p^{-1}p^{(i)}_t\right] \notag\\
    &+\rho\left[-(\widetilde{C}_{p,\cF})_t  \bar{\cF}(u_t)-(\widetilde{C}_p)_t \alpha C_p^{-1}\overline{p_t}\right] \qquad (i \in \Jeki)\\
    p(0) &= p_0\,, \notag
\end{align}
where $0\leq\rho< 1$. Here, variance inflation is considered to ensure the convergence under the following assumptions.

\begin{assumption} \label{assu:convex_lip}
    The regularized potential $\Psi^{\scaleto{\mathrm{reg}}{5pt}}(p)=\frac12\|\mathcal{F}(p)\|^2_2 + \tfrac{\alpha}{2} \|p\|^2_{C_p}$ is in $C^2(\mathcal{D},\mathbb{R}_+)$ satisfying
    \begin{enumerate}
        \item ($\mu$-strong convexity). There exists $\mu>0$ such that
            \[\Psi^{\scaleto{\mathrm{reg}}{5pt}}(p_1)-\Psi^{\scaleto{\mathrm{reg}}{5pt}}(p_2)\geq \langle \nabla \Psi^{\scaleto{\mathrm{reg}}{5pt}}(p_2),p_1-p_2\rangle+\frac{\mu}{2}\|p_1-p_2\|^2, \quad \forall\ p_1,p_2 \in \mathcal{D}\,.\]
        \item ($L$-smoothness). There exists $L>0$ such that $\nabla \Psi^{\scaleto{\mathrm{reg}}{5pt}}$ satisfies:
        \[\|\nabla \Psi^{\scaleto{\mathrm{reg}}{5pt}}(p_1)-\nabla \Psi^{\scaleto{\mathrm{reg}}{5pt}}(p_2)\|\le L \|p_1-p_2\|\quad \forall\ p_1,p_2 \in \mathcal{D}\,.\]
    \end{enumerate}
    Additionally we assume that the forward operator satisfies $\cF\in C^2(\mathcal{D},\mathbb{R}^\Nobs)$ is locally Lipschitz continuous and can linearly approximated, i.e.
    \begin{equation}\label{eqn:approx_error_eig_general}
    \cF(p_1)=\cF(p_2)+D\cF(p_2)(p_1-p_2)+\text{Res}(p_1,p_2) \quad \forall p_1,p_2\in \mathcal{D}\,,
\end{equation}
where $D\cF$ denotes the Fréchet derivative of $\cF$. The linear approximation error is bounded by
$   \|\text{Res}(p_1,p_2)\|_2\leq b_{\text{res}}\|p_1-p_2\|_2^2\,.
$
\end{assumption}

Then one obtains the following existence and uniqueness result as well as asymptotic convergence results for the EKI solutions.

\begin{theorem}[\protect{\cite{Weissmann_2022}}]\label{thm:conver_simon}
Define $V_e(t)=\frac{1}{\Jeki}\sum_{i=1}^\Jeki\frac{1}{2}\|e^{(i)}_t\|^2$, where $e^{(i)}_t=p^{(i)}_t-\overline{p}_t$ and assume that \cref{assu:convex_lip} holds. 
For an initial ensemble $\{p^{(1)}_0,...,p^{(\Nens)}_0\}$ the ODE system \eqref{EKI_regul_vi} has unique and global solutions $p^{(i)}(t)\in C^1([0,\infty);\mathcal{S})$ for all $i\in\Jeki$, where $\mathcal{S}=\bar p_0+\spann\{e^{(i)}_0, i\in\{1,\ldots,\Jeki\}\}$.\\
   Furthermore, there holds:
    \begin{enumerate}
        \item The ensemble collapses satisfies
        $V_e(t) \in \mathcal O(t^{-1})$\,.
        \item Denote by $p^*\in\mathcal{D}$ the unique minimizer of \eqref{eqn:eqn_pot_regul_eki} with respect to the subspace $\mathcal{S}$, then 
        \[\frac{1}{\Jeki}\sum_{i=1}^J\Psi^{\scaleto{\mathrm{reg}}{5pt}}\left(p^{(i)}_t\right)-\Psi^{\scaleto{\mathrm{reg}}{5pt}}\left(p^*\right)\in\mathcal O(t^{-\gamma})\,,\]
        where $0<\gamma<(1-\rho)\frac{L}{\mu}(\sigma_{\max}+c_{\mathrm{lip}}\lambda_{\max}\|\widetilde{C}\|_{HS})$. Here $\sigma_{\max},\lambda_{\max}$ denote the largest eigenvalues of $C_p^{-1}$ and $\Gamma^{-1}$, $\|\cdot\|_{HS}$ is the Hilbert-Schmidt norm.%, $0\leq\rho<1$.
    \end{enumerate}
\end{theorem}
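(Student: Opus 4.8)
The result is established in \cite{Weissmann_2022}; I would follow the standard programme for continuous-time ensemble Kalman methods (cf.\ also \cite{Schillings2017}): subspace invariance together with local well-posedness, an a priori estimate forcing ensemble collapse, and finally a Lyapunov bound for the objective, all adapted to the variance-inflated, Tikhonov-regularized flow \eqref{EKI_regul_vi}. For the first point I would observe that every term on the right-hand side of \eqref{EKI_regul_vi} is a linear combination of the ensemble deviations $e^{(i)}_t = p^{(i)}_t - \overline{p}_t$ — both $\widetilde{C}_p$ and $\widetilde{C}_{p,\cF}$ are sums of outer products built from these deviations — so the drift of each $p^{(i)}_t$ lies in $\spann\{e^{(j)}_t : j\in\Jeki\}$ and the affine subspace $\mathcal{S}=\overline{p}_0+\spann\{e^{(j)}_0\}$ is invariant; it then suffices to solve the ODE inside the finite-dimensional $\mathcal{S}$. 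Since $\cF\in C^2$ is locally Lipschitz and the empirical (cross-)covariances are polynomial in the particles, the vector field is locally Lipschitz, and Picard--Lindel\"of yields a unique local $C^1$ solution.

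Next I would derive a differential inequality for $V_e(t)$. Subtracting the mean dynamics, the $\rho$-weighted mean-field terms in \eqref{EKI_regul_vi} are identical across particles and cancel, so $\dot{e}^{(i)}_t$ equals $(1-\rho)$ times the deviation dynamics of the uninflated system. Writing $\widetilde{C}_{p,\cF} = \widetilde{C}_p\,(D\cF(\overline{p}_t))^\top + E_t$ with $\|E_t\|$ cubic in the ensemble spread (via the residual bound in \cref{assu:convex_lip}), and using the coercivity contributed by the regularizer $\alpha C_p^{-1}$, one obtains $\dot{V}_e \le -c\,V_e^2 + (\text{higher order in } V_e)$ for some $c>0$ depending on $\alpha$ and the spectra of $C_p^{-1}$ and $D\cF$; integrating gives $V_e(t)\in\mathcal{O}(t^{-1})$. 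Together with a bound on $\overline{p}_t$ furnished by the $\mu$-strong convexity of $\Psi^{\mathrm{reg}}$ (which pulls the mean toward the bounded minimizer $p^*$), this a priori control rules out finite-time blow-up and upgrades the local solution to a global one in $C^1([0,\infty);\mathcal{S})$.

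Finally I would study the Lyapunov functional $\mathcal{E}(t) := \tfrac{1}{\Jeki}\sum_{i=1}^{\Jeki}\Psi^{\mathrm{reg}}(p^{(i)}_t) - \Psi^{\mathrm{reg}}(p^*)$, with $p^*$ the minimizer of \eqref{eqn:eqn_pot_regul_eki} restricted to $\mathcal{S}$. Differentiating along \eqref{EKI_regul_vi} and inserting $\cF(p^{(i)}_t) = \cF(\overline{p}_t) + D\cF(\overline{p}_t)e^{(i)}_t + \text{Res}$ identifies the drift, up to errors cubic in the spread (hence decaying faster than $t^{-1}$), with the preconditioned gradient flow $\dot{p}^{(i)}_t \approx -\widetilde{C}_p\,\nabla\Psi^{\mathrm{reg}}(p^{(i)}_t)$, so that
\[
\dot{\mathcal{E}}(t) \le -\tfrac{1}{\Jeki}\sum_{i=1}^{\Jeki}\bigl\langle \nabla\Psi^{\mathrm{reg}}(p^{(i)}_t),\,\widetilde{C}_p\,\nabla\Psi^{\mathrm{reg}}(p^{(i)}_t)\bigr\rangle + R(t),
\]
with $R(t)$ integrable against the weight $t^{\gamma-1}$. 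Sandwiching $\Psi^{\mathrm{reg}}$ between quadratics via $L$-smoothness and $\mu$-strong convexity (this is where $L/\mu$ and the spectral quantities $\sigma_{\max},\lambda_{\max},\|\widetilde{C}\|_{HS}$ enter) and invoking the collapse rate $V_e\in\mathcal{O}(t^{-1})$ to lower-bound the leading quadratic form by $\tfrac{\gamma}{t}\mathcal{E}(t)$ for any $\gamma$ below the stated threshold, one arrives at $\dot{\mathcal{E}}(t)\le -\tfrac{\gamma}{t}\mathcal{E}(t)+R(t)$, whose integration yields $\mathcal{E}(t)\in\mathcal{O}(t^{-\gamma})$.

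The crux — and the place I expect most work — is this last step: $\widetilde{C}_p$ degenerates at exactly the rate $t^{-1}$ at which the ensemble collapses, so the effective step size of the gradient flow vanishes and only a polynomial decay of $\mathcal{E}$ is attainable. One has to lower-bound the quadratic form $\langle\nabla\Psi^{\mathrm{reg}},\widetilde{C}_p\nabla\Psi^{\mathrm{reg}}\rangle$ on $\mathcal{S}$ in terms of $\mathcal{E}$ while simultaneously absorbing the nonlinear residual terms — which are themselves controlled only by the shrinking spread — and track all constants precisely enough to recover the explicit exponent $\gamma < (1-\rho)\tfrac{L}{\mu}\bigl(\sigma_{\max}+c_{\mathrm{lip}}\lambda_{\max}\|\widetilde{C}\|_{HS}\bigr)$.
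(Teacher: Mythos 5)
The paper does not prove this theorem at all: it is imported verbatim from \cite{Weissmann_2022}, so your decision to defer to that reference and only outline its argument is consistent with what the paper does, and your outline (subspace invariance of $\mathcal{S}$ because the drift in \eqref{EKI_regul_vi} is built from the deviations $e^{(i)}_t$, Picard--Lindel\"of in the finite-dimensional subspace, a Riccati-type inequality $\dot V_e\lesssim -cV_e^2$ giving $V_e\in\mathcal{O}(t^{-1})$, and a Lyapunov estimate $\dot{\mathcal{E}}\le -\tfrac{\gamma}{t}\mathcal{E}+R$ for the averaged objective) is indeed the programme followed in the cited work and in the earlier continuous-time EKI literature. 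Your observation that the $\rho$-weighted mean-field terms cancel in the deviation dynamics, so that the spread evolves with an extra factor $(1-\rho)$, is also correct and is exactly the role of the variance inflation.

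There is, however, one step in your sketch that does not work as stated and is in fact the key technical ingredient you would need from \cite{Weissmann_2022}: you propose to ``invoke the collapse rate $V_e\in\mathcal{O}(t^{-1})$ to lower-bound the quadratic form $\langle\nabla\Psi^{\mathrm{reg}},\widetilde{C}_p\nabla\Psi^{\mathrm{reg}}\rangle$ by $\tfrac{\gamma}{t}\mathcal{E}(t)$.'' The collapse estimate is an \emph{upper} bound on the ensemble spread and hence only bounds $\widetilde{C}_p$ from above; it gives no control against degeneracy of the preconditioner. What is actually needed is a separate non-degeneracy estimate of the form $\widetilde{C}_p(t)\succeq \tfrac{c}{1+t}\,I$ on the subspace spanned by the initial deviations, i.e.\ a \emph{lower} bound on the empirical covariance showing that the ensemble does not collapse faster than $t^{-1}$ in any direction of $\mathcal{S}$. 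This is precisely where the inflation parameter $\rho$ and the constants $\sigma_{\max}$, $\lambda_{\max}$, $c_{\mathrm{lip}}$, $\|\widetilde{C}\|_{HS}$ enter, and it is what produces the admissible range $0<\gamma<(1-\rho)\tfrac{L}{\mu}(\sigma_{\max}+c_{\mathrm{lip}}\lambda_{\max}\|\widetilde{C}\|_{HS})$ in the statement; without such a two-sided control of $\widetilde{C}_p$ the differential inequality $\dot{\mathcal{E}}\le -\tfrac{\gamma}{t}\mathcal{E}+R$ cannot be derived. You correctly flag this step as the crux, but the mechanism you propose for it is circular, so as written the final rate claim is not established by your argument and must be taken from the cited reference.
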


\section{Full algorithm description} \label{sec:algo}

In this section, we illustrate the complete structure of our method.  
The algorithm consists of the following components.  
Given a sequence of observations $\bm{y}_n^\dagger =(y_1^\dagger, \ldots, y_{n}^\dagger)$ and associated designs $\bm{p}_n^\dagger=(p_1^\dagger,\ldots, p_n^\dagger)$, at each time step $n \in \mathbb{N}$, we proceed as follows:

\begin{enumerate}
    \item Given prior samples $\{u_n^{(j)}\}_{j=1}^J \sim \pi_n=\pi_{u\mid \bm{y}_n^\dagger, \bm{p}_n^\dagger}$, compute joint samples $\{(u_n^{(j)}, y_n^{(i,j)})\}_{j=1}^J$ by evaluating the forward model with noise:
    \[
    y_n^{(i,j)} = G_n(u_n^{(j)}, p_{n,0}^{(i)}) + \eta_n^{(i,j)}
    \]
    for each initial design $p_{n,0}^{(i)}$.

    \item Compute the Gaussian approximation from~\eqref{eqn:gaussapprox_joint} to the joint samples \\$\{(u_n^{(j)}, y_n^{(i,j)})\}_{j=1}^J$ to approximate both the marginal and posterior distributions.

    \item Compute the difference between the upper and lower bounds of the EIG. If the gap is too large, consider using a parametrized Laplace approximation to improve the lower bound. 
    
    \item If the discrepancy stays above a given tolerance, explore alternative approximations of the joint distribution  (e.g., Gaussian mixtures) to improve the accuracy of the marginal and posterior approximations and thus of the upper and lower bounds.

    \item Update the designs $\{p_{n,\Teki}^{(i)}\}_{i=1}^\Nens$ via EKI as described in~\eqref{EKI_regul_vi} and \cref{algo:eki_reg} based on the estimate $\mathrm{EIG}(p_n^{(i)})$ from \eqref{eqn:eig_approx} as detailed in \cref{algo:eig_estimator}.

    \item Select the optimal design $p_{n+1}^\dagger$ from the final ensemble $\{p_{n,\Teki}^{(i)}\}_{i=1}^\Nens$, for example by computing the ensemble mean or choosing the maximizer:
   $$ 
    p_{n+1}^\dagger = \arg\max_i \widetilde{\mathrm{EIG}}(p_{n,\Teki}^{(i)}).
    $$
    \item Obtain measurement $y_{n+1}^\dagger$ in the sequential process for the design choice $p_{n+1}^\dagger$.
    \item Given $y_{n+1}^\dagger$ and $p_{n+1}^\dagger$, update $\bm{y}_{n+1}^\dagger$ and $\bm{p}_{n+1}^\dagger$ compute posterior samples $\{u_{n+1}^{(j)}\}_{j=1,\ldots,J}$ using the ALDI sampler, as described in \cref{algo:aldi}. The samples, correspond to prior samples of $\pi_{n+1}$.
\end{enumerate}

The complete algorithm is summarized in \eqref{algo:end}.
\begin{algorithm}[h]
  \caption{EIG estimation via variational approximations}\label{algo:eig_estimator}
    \begin{algorithmic}[1]
  \Require $\left\{\begin{array}{ll}
            n, & \text{number of data assimilation window}, \\
           G_n, & \text{forward operator}, \\
           \mathcal{N}(0,\Gamma), & \text{data noise distribution}, \\
            p & \text{design parameter}, \\
            (u_n^{(j)})_{j=1}^J\sim\pi_{n}, & \text{samples from sequentially updated prior}.\\
            \delta & \text{tolerance for upper and lower bound}\\
            \pi_0, & \text{initial prior density}, \\
           \bm{y}_{n-1}^\dagger,\bm{p}_{n-1}^\dagger, & \text{previous observations and designs},
            \end{array}
    \right. $
  \Ensure Estimate $ \widetilde{\mathrm{EIG}}_{n}(p) \approx \mathrm{EIG}(p, \mu_n)$.
        \State Simulate data: $y_{n}^{(j)} = G_n(u_{n}^{(j)}, p) + \eta_{n}^{(j)}$ for $j=1,\ldots,J$, $\eta_{n}^{(j)}\sim \mathcal{N}(0,\Gamma)$.
        \State Fit Gaussian approximation $\tilde{\mu}_{u_n,y_n \mid p}^{(J)}=\mathcal{N}(\widetilde m_{u_n,y_n\mid p}^{(J)},\widetilde C_{u_n,y_n\mid p}^{(J)})$ from $\{(u_n^{(j)},y_n^{(j)})\}_{j=1}^J$.
        \State From $\tilde{\mu}_{u_n,y_n \mid p}^{(J)}$ compute marginal approximation $\tilde{\mu}_{y_n \mid p}^{(J)}=\mathcal{N}(\widetilde m_{y_n\mid p}^{(J)},\widetilde C_{y_n\mid p}^{(J)})$ and posterior approximation $\tilde{\mu}_{u_n \mid y_n, p}^{(J)}=\mathcal{N}(\widetilde m_{u_n \mid y_n, p}^{(J)},\widetilde C_{u_n \mid y_n, p}^{(J)})$.
        \For{For each $j=1,\ldots,J$}:
            \State Evaluate likelihood: $\pi_{y_n \mid u_n, p}(y_n^{(j)}) = \mathcal{N}(y_n^{(j)}; G(u_n^{(j)}, p), \Gamma)$.
            \State Approx. marginal density $\widetilde{\pi}_{y_n \mid p}(y_n^{(j)}) = \mathcal{N}(y_n^{(j)}; \widetilde m_{y_n\mid p}^{(J)}, \widetilde C_{y_n\mid p}^{(J)}).$
            \State Approx. posterior density $\widetilde{\pi}_{u_n \mid y_n, p}(u_n^{(j)}) = \mathcal{N}(u_n^{(j)}; \widetilde m_{u_n \mid y_n, p}^{(J)}, \widetilde C_{u_n \mid y_n, p}^{(J)}).$
            \State Evaluate sequential prior $\pi_{n-1}(u_n^{(j)})$ (reuse function evaluations from ALDI).
        \EndFor
    \State Compute upper and lower bound
            {\scriptsize\[
            \widetilde{\mathrm{EIG}}_n^{\text{UB}}(p)= \frac{1}{J} \sum_{j=1}^J \log\left(\frac{\pi_{y_n \mid u_n^{(j)}, p}(y_n^{(j)})}{\widetilde{\pi}_{y_n \mid p}(y_n^{(j)})} \right),\widetilde{\mathrm{EIG}}_{n}^{\text{LB}}(p)= \frac{1}{J} \sum_{j=1}^J \log\left( \frac{\widetilde{\pi}_{u_n \mid y_n, p}(u_n^{(j)})}{\pi_{n-1}(u_n^{(j)})} \right)
            \]
            }\If{$|\widetilde{\mathrm{EIG}}_{n}^{\text{UB}}(p)-\widetilde{\mathrm{EIG}}_{n}^{\text{LB}}(p)|>\delta$}
            
            \State Use parametrized Laplace approximation to improve lower bound $\widetilde{\mathrm{EIG}}_{n}^{\text{LB, Laplace}}(p).$
                        \If{$|\widetilde{\mathrm{EIG}}_{n}^{\text{UB}}(p)-\widetilde{\mathrm{EIG}}_{n}^{\text{LB, Laplace}}(p)|>\delta$}
            
            \State Improve approximation for the joint distribution (e.g. Gaussian mixtures).
            \EndIf
            \EndIf
        
        \State Set $\widetilde{\mathrm{EIG}}_{n}(p) = \widetilde{\mathrm{EIG}}_{n}^{\text{UB}}(p)$.
  \end{algorithmic}
\end{algorithm}

\begin{algorithm}[h]
  \caption{Sequential gradient-free BOED with interacting particle systems}\label{algo:end}
    \begin{algorithmic}[1]
  \Require $\left\{\begin{array}{ll}
            N, & \text{length of data assimilation window}, \\
            \widetilde{\mathrm{EIG}}& \text{BOED loss estimators of \cref{algo:eig_estimator}, } \\
            c_n, & \text{upper bounds for $\widetilde{\mathrm{EIG}}$ realizations from \eqref{eq:EKI_loss}},\\
            \{p^{(i)}_0\}_{i=1}^\Nens, & \text{initial design ensemble of ensemble size $\Nens$}.\\
            \Teki & \text{time horizon for EKI}.\\
            \bm{y}_n^\dagger, & \text{observed data}.
            \end{array}
    \right. $
  \Ensure Optimized design proposals $\{p_n^\dagger\}_{n=1}^N$
  \State Generate samples $\{u_{0}^{(j)}\}_{j=1}^J\sim \pi_{0}$
  \For{$n$ in $\{0,\ldots, N-1\}$}
  \gray{\Comment{sequential BOED loop}}   
        \State Define $\cF_{n}(\cdot) = \sqrt{2(-\widetilde{\mathrm{EIG}}(\cdot,\pi_{n}) + c_n)}$ from \eqref{eq:EKI_loss}.
        \State Compute $\{p^{(i)}_{n,\Teki}\}_{i=1}^\Nens = \mathrm{ReguarlizedEKI}(\mathcal{F}_{n}, \{p_{n,0}^{(i)}\}_{i=1}^\Nens, \{u_{n}^{(j)}\}_{j=1}^J)$ using \cref{algo:eki_reg}.
        \State Choose optimal design $p_{n+1}^\dagger = \arg\max_i \widetilde{\mathrm{EIG}}(p_{n,\Teki}^{(i)})$.
        \State Define $\pi_{n+1} = \pi_{u\mid y_{n}^\dagger,p_{n}^\dagger}
        \propto \exp(-\tfrac{1}{2}\|y_{n}^\dagger - G(\,\cdot\,, p_{n}^\dagger)\|_{\Gamma}^2)\pi_{n}$ as in \eqref{eqn:post_dist}.
        \State Generate samples $\{u_{n+1}^{(j)}\}_{j=1}^J\sim \pi_{n+1}$ using $\mathrm{ALDI}(J)$ (\cref{algo:aldi}).
  \EndFor
  \end{algorithmic}
\end{algorithm}

\section{Numerical experiments} \label{sec:Numerics}

To demonstrate the practical performance of our \cref{algo:end} and to support the theoretical results, we present numerical experiments in three settings of increasing complexity: a linear Gaussian model, a near-linear Gaussian model, and a one-dimensional heat equation. These examples highlight both the robustness and limitations of our approach under varying degrees of nonlinearity and model structure.

\subsection{Linear Gaussian Case}\label{sec:linear_case}

We consider a one-dimensional linear Gaussian model of the form
\[
G(u, p) = A(p) u\,,
\]
where the parameter-dependent linear operator is defined as
\[
A(p) = -c(p - 1)^2 + d\,,
\]
for constants \( c > 0 \) and \( d \in \mathbb{R} \).

This example allows us to illustrate the behavior of the upper and lower bounds on the expected information gain (EIG), as introduced in \cref{lem:eig_upper_lower_bound_linear} and given explicitly in \eqref{eqn:eig_upper_bound} and \eqref{eqn:eig_lower_bound}. These bounds converge to the exact EIG as the Gaussian approximations of the marginal and posterior distributions become more accurate.

To demonstrate this convergence, we plot the Kullback–Leibler (KL) divergences:
\[
D_{\text{KL}}\left(\mu_{y \mid p} \,\|\, \tilde{\mu}_{y \mid p}^{(J)}\right) \quad \text{and} \quad
D_{\text{KL}}\left(\mu_{u \mid y, p} \,\|\, \tilde{\mu}_{u \mid y, p}^{(J)}\right)
\]
as functions of the number of samples \( J \).

We use a Gaussian prior with mean \( m_0 = 2 \), covariance \( \Sigma_0 = 2 \), and observation noise covariance \( \Gamma = 1 \). The number of joint samples \( J \) ranges from $\num{1e1}$ to \( \num{1e5} \).

\begin{figure}[htbp]
    \centering
    \includegraphics[width=0.8\textwidth]{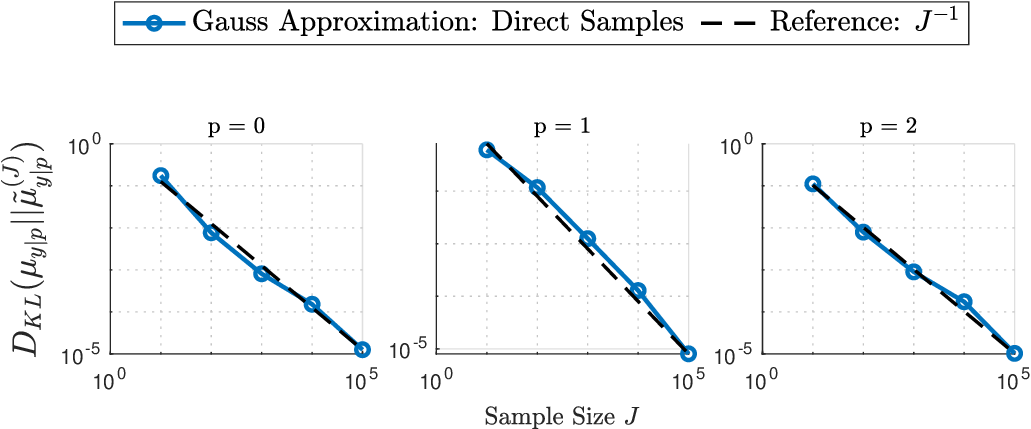}
    \caption{Linear case: KL divergence between the true marginal and the Gaussian approximation at $p \in \{0, 1, 2\}$ as a function of the number of particles \( J \). Results are averaged over 10 runs and shown in log-log scale.}
    \label{fig:GaussApprox_lin_marginal}
\end{figure}

\begin{figure}[htbp]
    \centering
    \includegraphics[width=0.8\textwidth]{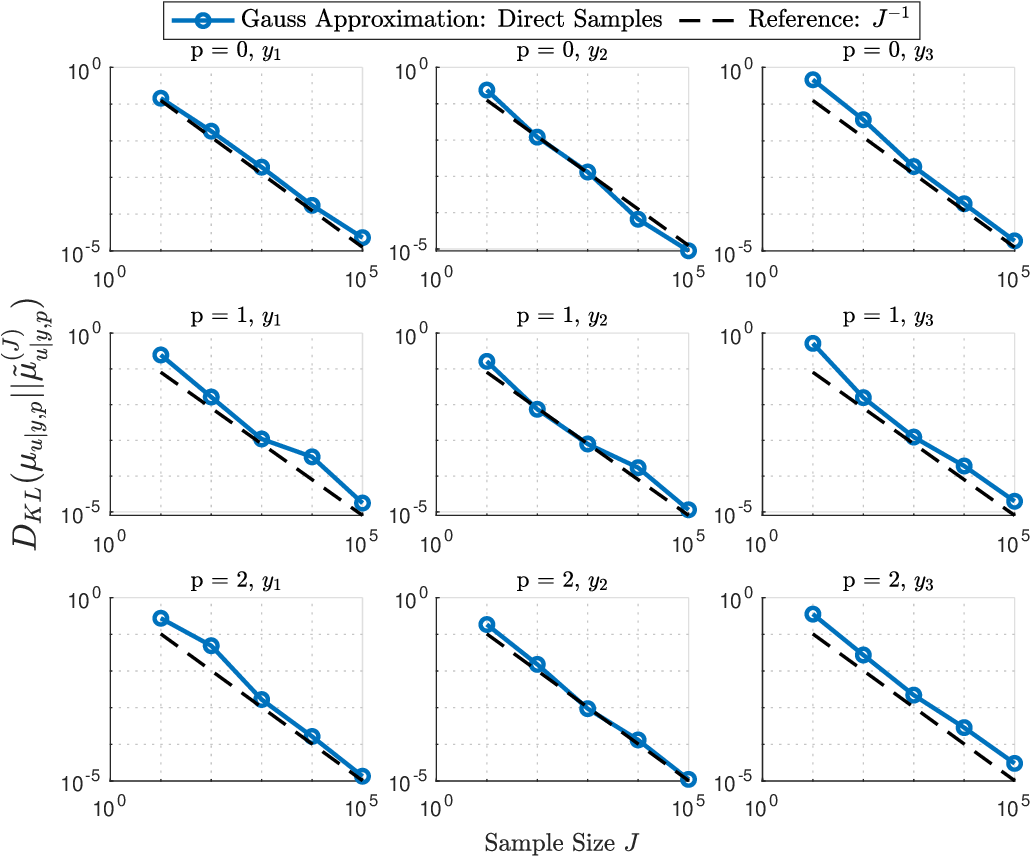}
    \caption{Linear case: KL divergence between the true posterior and its Gaussian approximation at $p \in \{0, 1, 2\}$ for data corresponding to the 10\textsuperscript{th}, 50\textsuperscript{th}, and 90\textsuperscript{th} percentiles of simulated observations. Results are shown in log-log scale, averaged over 10 runs.}
    \label{fig:GaussApprox_posterior}
\end{figure}

\Cref{fig:GaussApprox_lin_marginal} and \cref{fig:GaussApprox_posterior} show the convergence behavior of the Gaussian approximations to the marginal and posterior distributions. We observe convergence at rate \( \mathcal{O}(J^{-1/2}) \), as predicted in \cref{lem:eig_upper_lower_bound_linear}.

We further evaluate the performance of EKI for optimizing the EIG in this linear setting. We use $J = \num{1e5}$ joint samples and set the regularization parameter \( \alpha = \num{1e-2} \), design covariance \( C_p = 1 \), and number of EKI particles \( \Jeki = 3 \). The initial ensemble is randomly sampled from the interval $[0,2]$, and EKI is integrated using MATLAB's \texttt{ode45} solver. We use a time-dependent variance inflation \( v(t) = 0.01 \big(1 - (\tfrac{t}{\Teki} + 1)^{-\gamma} \big) \), with \( \gamma = 0.2 \) and \( \Teki = \num{1e5} \).

\begin{figure}[htbp]
    \centering
    \begin{subfigure}{0.45\textwidth}
        \centering
        \includegraphics[width=\linewidth]{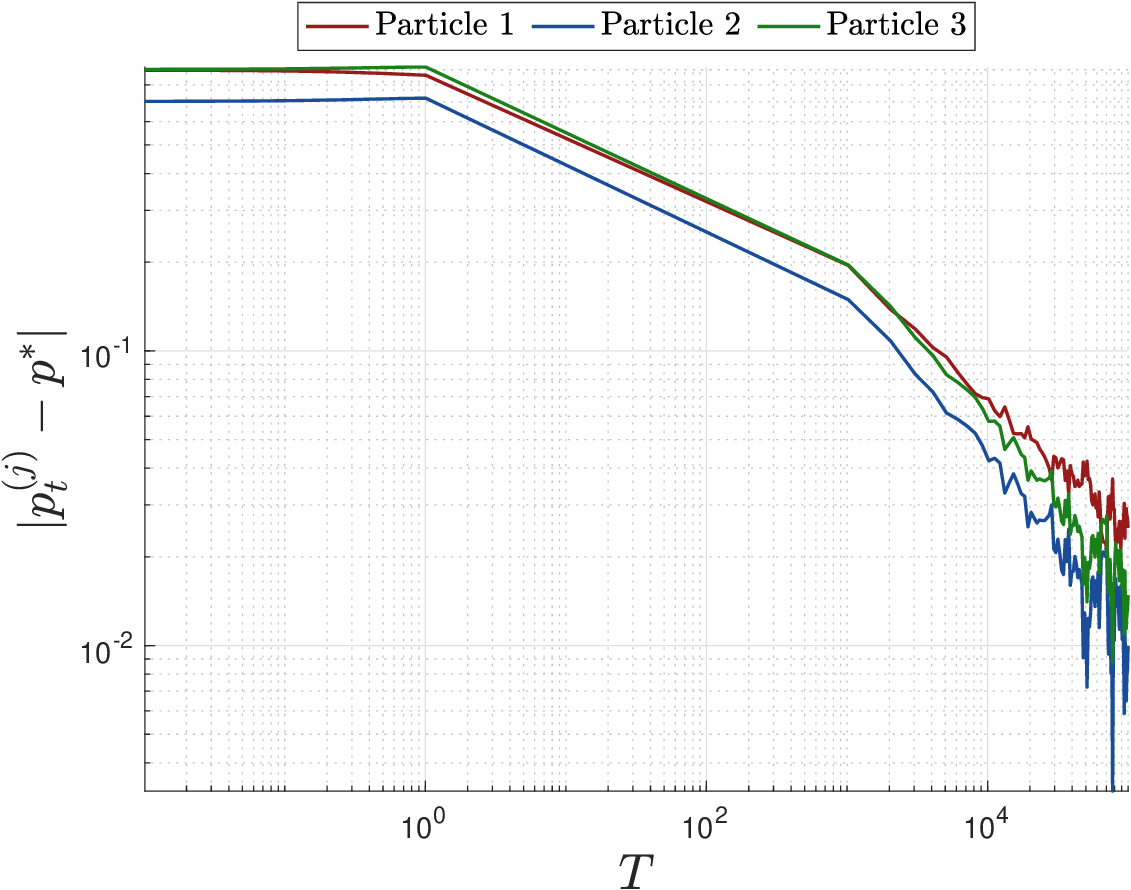}
    \end{subfigure}
    \hfill
    \begin{subfigure}{0.45\textwidth}
        \centering
        \includegraphics[width=\linewidth]{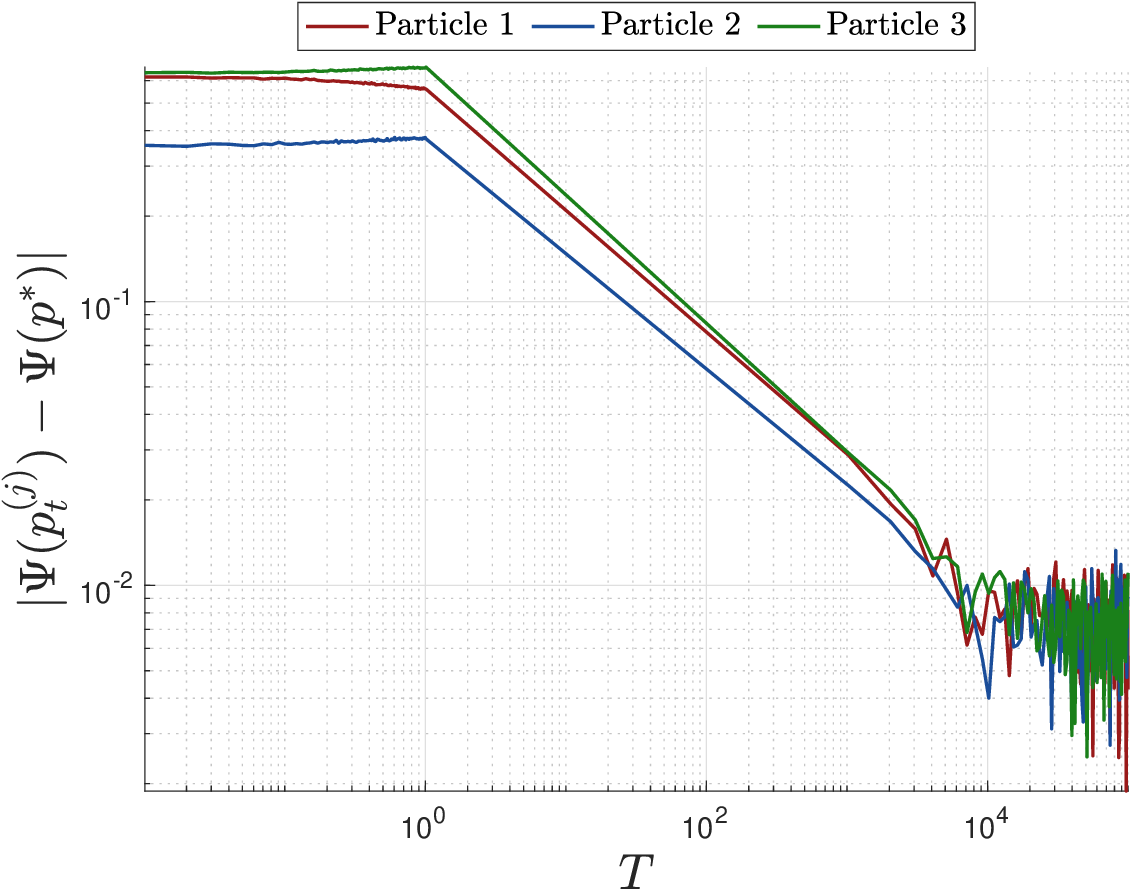}
    \end{subfigure}
    \caption{Linear case: EKI error in design space (left) and observation space (right).}
    \label{fig:eki_direct_linear_v2}
\end{figure}

\begin{figure}[htbp]
    \centering
    \includegraphics[width=0.6\textwidth]{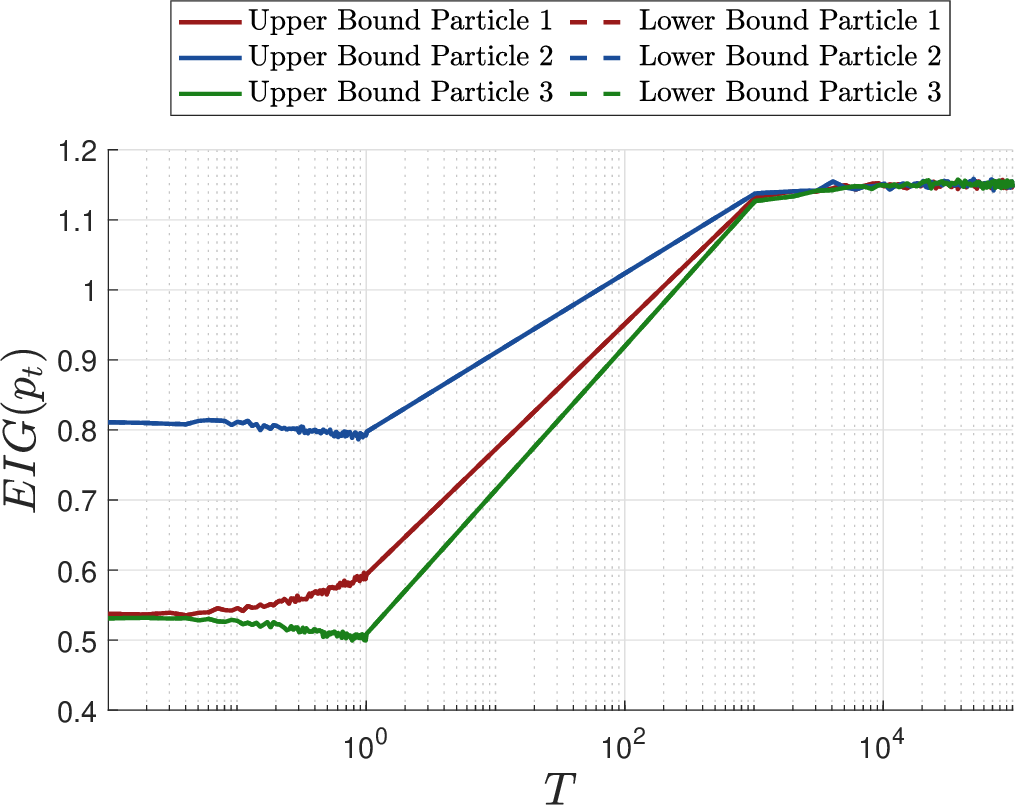}
    \caption{Linear case: Upper and lower bounds on the EIG for the evolving EKI solution \( p_t^{(i)} \), shown in semi-log-x scale.}
    \label{fig:eki_linear_ub_lb_v2}
\end{figure}

\Cref{fig:eki_direct_linear_v2} displays the convergence of EKI in both design and observational spaces. The true optimum \( p^* \) is computed using MATLAB’s \texttt{fminunc} and a high-fidelity double-loop Monte Carlo approximation with \( J_{\text{ref}} = \num{1e7} \) samples. The design error is given by \( |p_t^{(i)} - p^*| \), and the observation error by \( |\Psi^{\mathrm{reg}}(p_t^{(i)}) - \Psi^{\mathrm{reg}}(p^*)| \), with \( \Psi^{\mathrm{reg}} \) defined in \eqref{eqn:eqn_pot_regul_eki}.

Finally, \cref{fig:eki_linear_ub_lb_v2} shows how the upper and lower EIG bounds become tighter as the EKI solution converges. This confirms the consistency of the variational bounds discussed in \cref{lem:eig_upper_lower_bound_linear}.

\subsection{Near-Linear Example}\label{sec:near_linear}

We now extend the linear case by introducing a small nonlinear term to the forward model:
\[
G(u, p) = A(p)u + \tau u^2\,,
\]
where the linear component is given by
\[
A(p) = -c(p - 1)^2 + d\,.
\]

The goal of this experiment is to investigate how the upper and lower bounds of the EIG, given in \eqref{eqn:eig_upper_bound} and \eqref{eqn:eig_lower_bound}, behave as the nonlinearity vanishes. In particular, we study the effect of decreasing \( \tau \) and expect the bounds to converge to those of the linear model as \( \tau \to 0 \). Additionally, increasing the number of samples \( J \) should lead to tighter bounds due to improved approximations of the marginal and posterior distributions.

In our numerical experiments, we set \( c = 2 \), \( d = c + 1 \), and use a Gaussian prior with mean \( m_0 = 2 \), covariance \( \Sigma_0 = 1 \), and observational noise covariance \( \Gamma = 1 \).

\begin{figure}[htbp]
    \centering
    \includegraphics[width=0.9\textwidth]{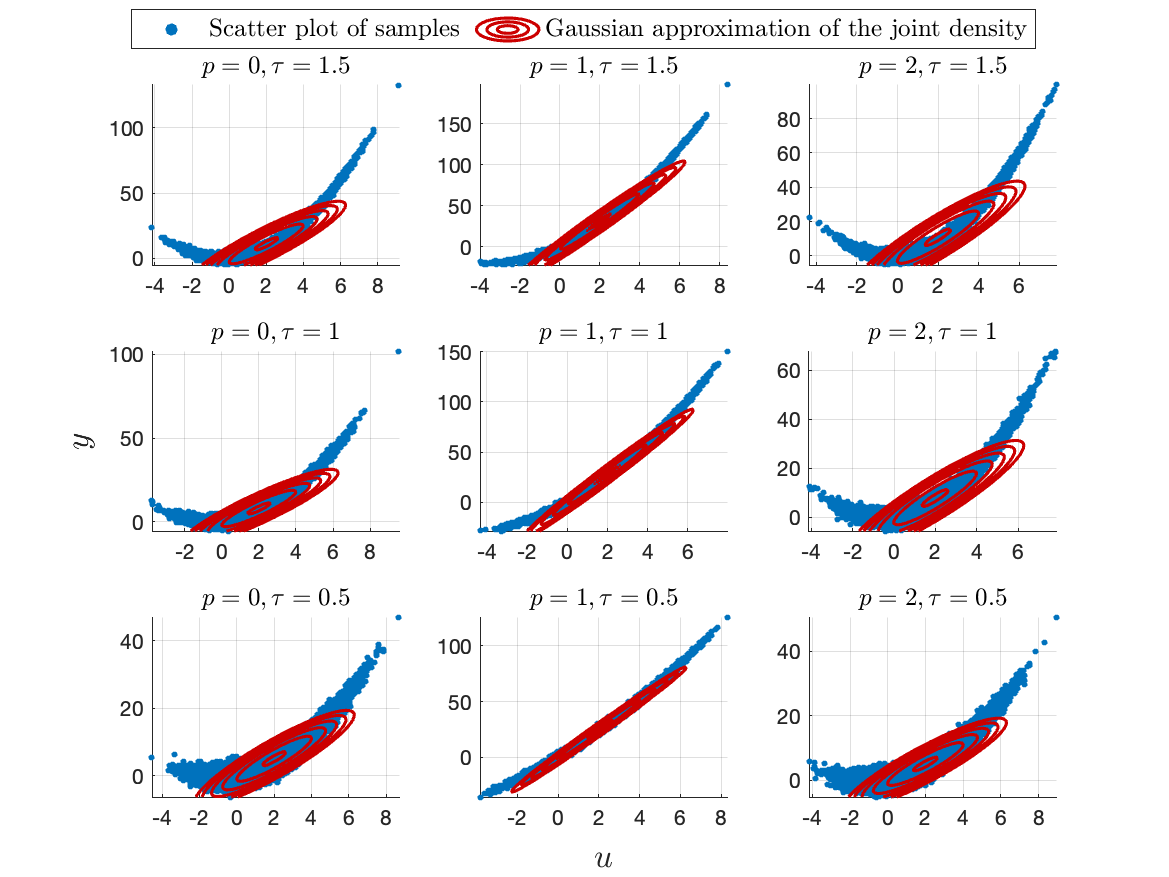}
    \caption{Near-linear example: Joint densities for $p = \{0, 1, 2\}$ and varying nonlinearity levels $\tau = \{1.5, 1, 0.5\}$.}
    \label{fig:eig_near_linear_joint}
\end{figure}

\begin{figure}[htbp]
    \centering
    \includegraphics[width=0.9\textwidth]{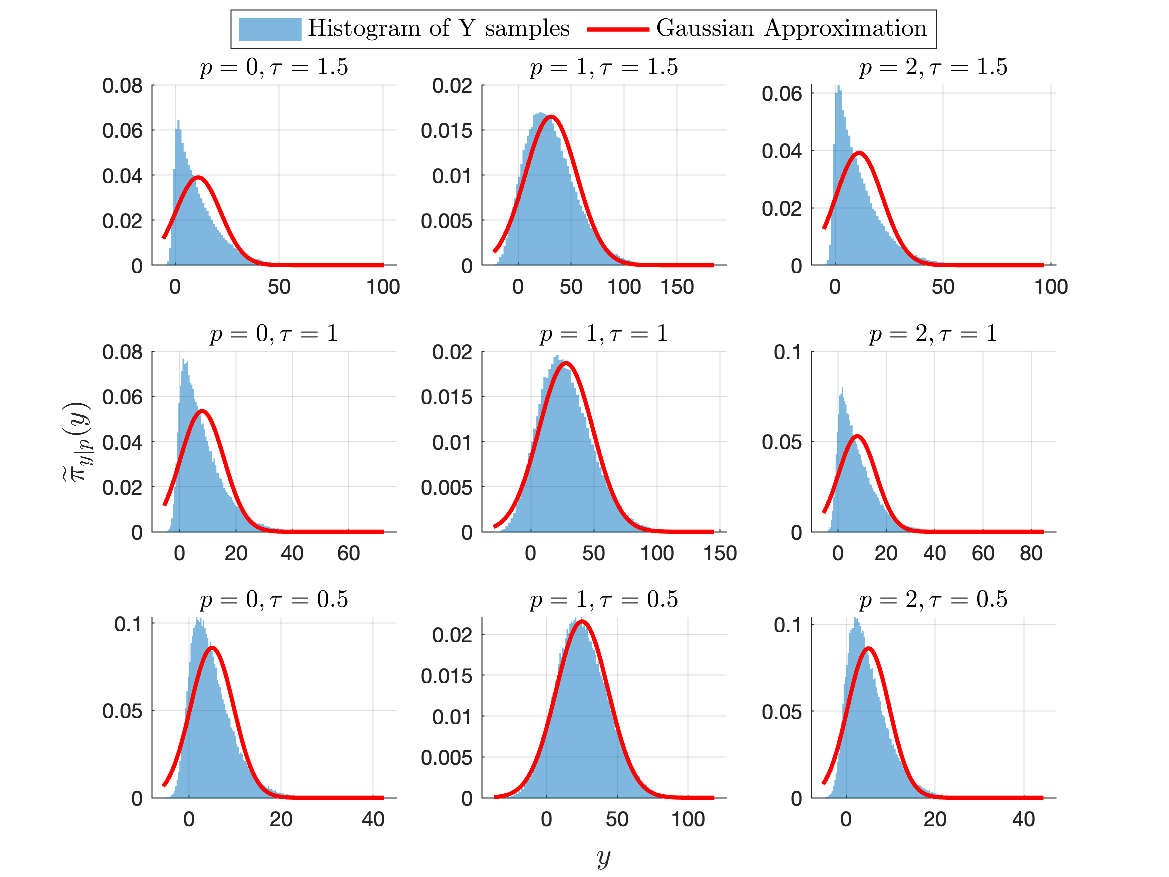}
    \caption{Near-linear example: Marginal densities for $p = \{0, 1, 2\}$ and $\tau = \{1.5, 1, 0.5\}$.}
    \label{fig:eig_near_linear_marginal}
\end{figure}

\begin{figure}[htbp]
    \centering
    \includegraphics[width=0.9\textwidth]{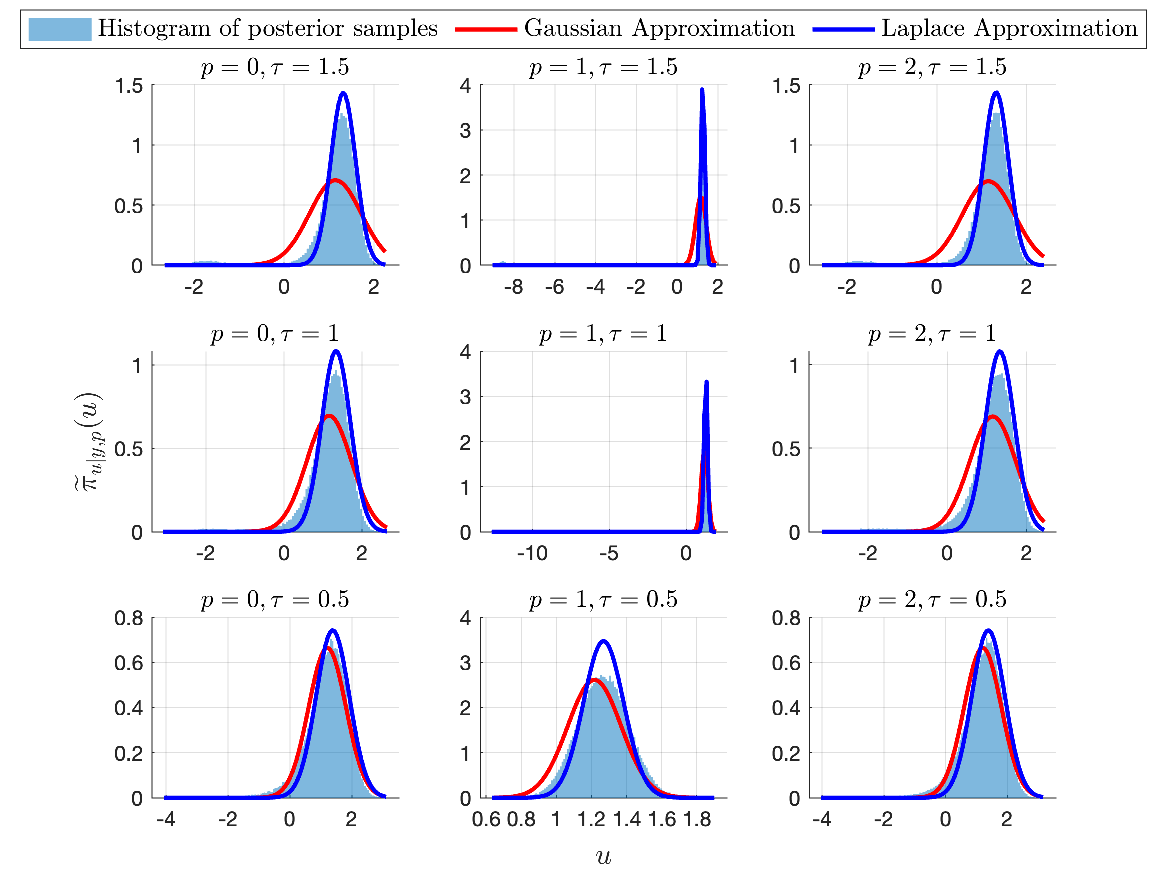}
    \caption{Near-linear example: Posterior densities for $p = \{0, 1, 2\}$ and $\tau = \{1.5, 1, 0.5\}$. Conditional data $y$ is chosen as the 30th percentile of the ordered joint samples.}
    \label{fig:eig_near_linear_posterior}
\end{figure}

\Cref{fig:eig_near_linear_joint,fig:eig_near_linear_marginal,fig:eig_near_linear_posterior} illustrate the effect of varying the nonlinearity parameter \( \tau \) on the joint, marginal, and posterior distributions. As expected, the Gaussian approximation improves with decreasing \( \tau \), as assumed in the near-linear setting of \eqref{ass:linear_apprx_forward_operator}.

For the posterior in particular, the Laplace approximation provides a more accurate fit than the Gaussian approximation, motivating a hybrid approach: start with the computationally efficient Gaussian approximation and switch to a Laplace approximation when higher accuracy is required.

\begin{figure}[htbp]
    \centering
    \includegraphics[width=0.9\textwidth]{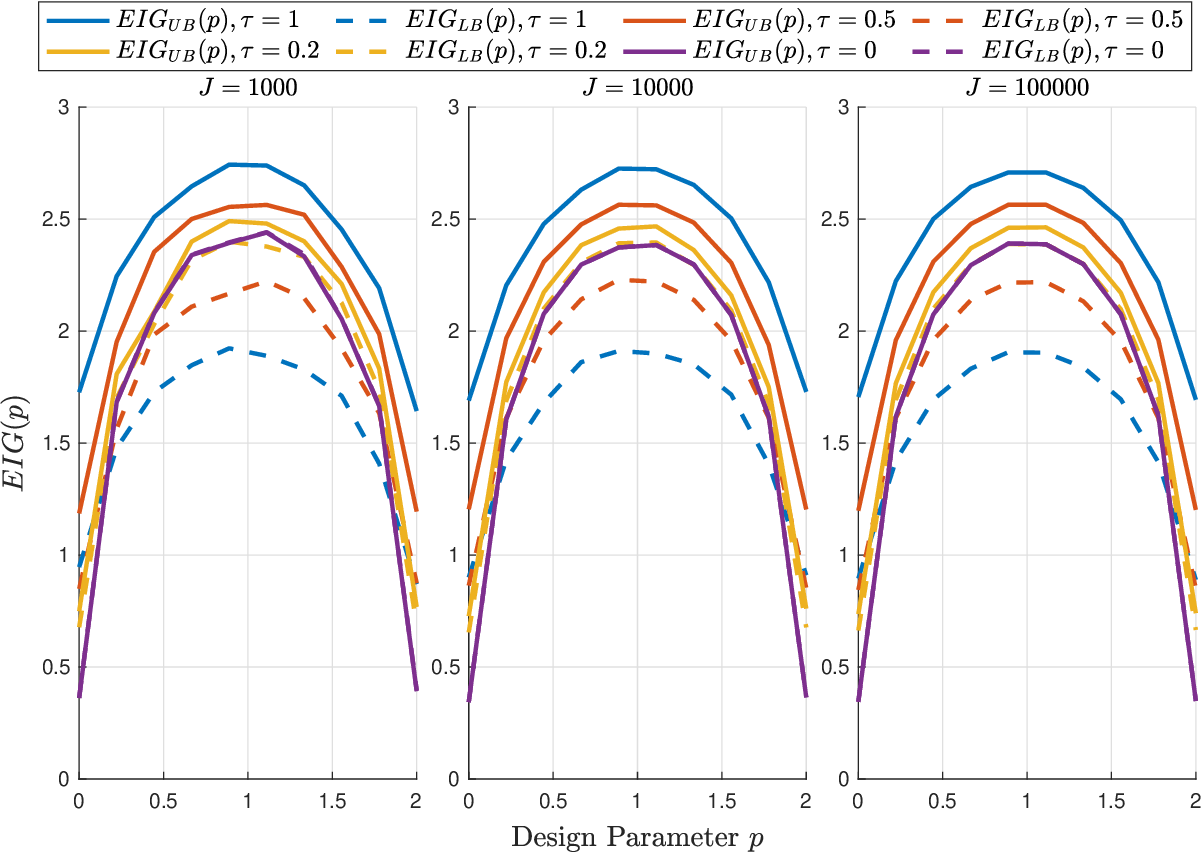}
    \caption{Near-linear example: EIG bounds computed via Gaussian approximation for different values of $\tau$.}
    \label{fig:eig_near_linear_Gauss_Approx}
\end{figure}

\begin{figure}[htbp]
    \centering
    \includegraphics[width=0.9\textwidth]{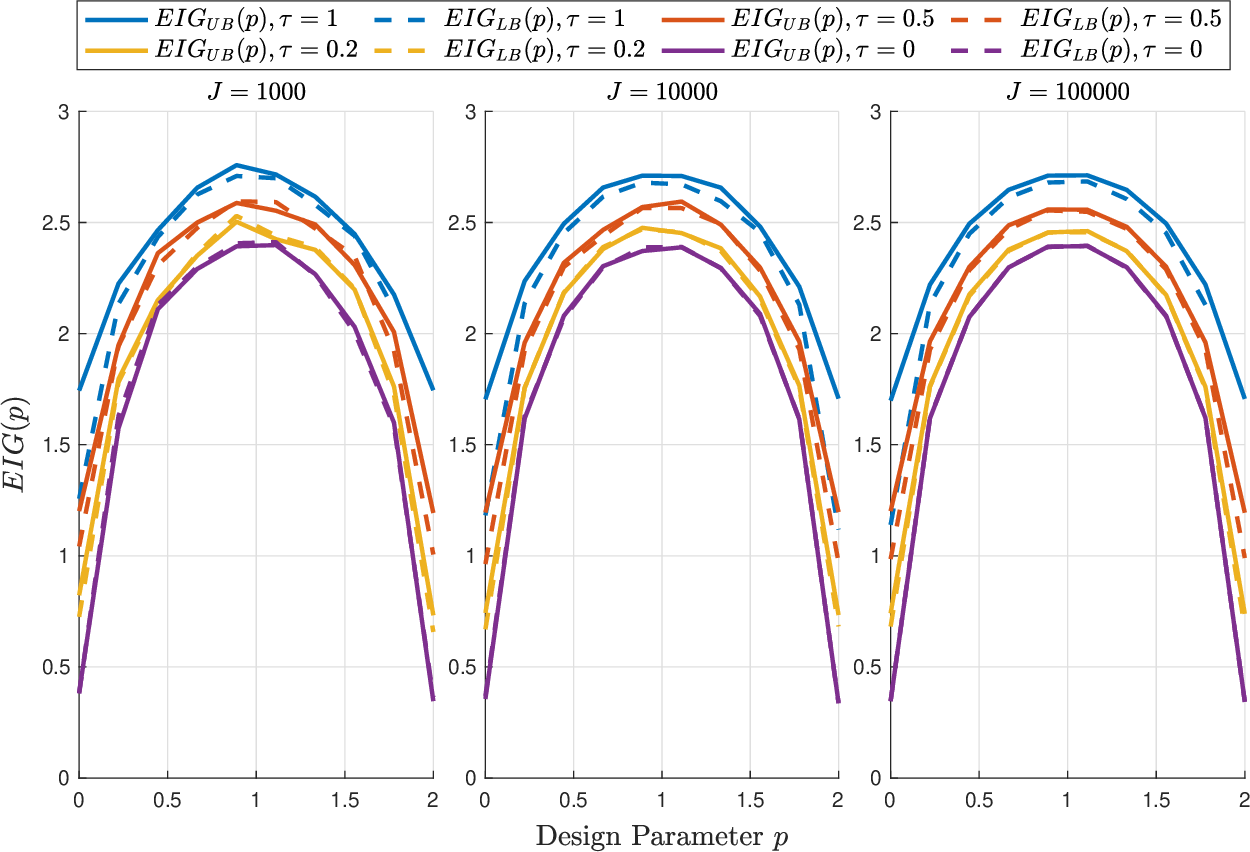}
    \caption{Near-linear example: EIG bounds using Laplace approximation for the posterior (lower bound) and Gaussian approximation for the marginal (upper bound).}
    \label{fig:eig_near_linear_Laplace_Approx}
\end{figure}

\Cref{fig:eig_near_linear_Gauss_Approx} shows the EIG bounds under Gaussian approximation for various levels of nonlinearity. As \( \tau \to 0 \), the bounds approach the linear case. \Cref{fig:eig_near_linear_Laplace_Approx} highlights that the Laplace approximation significantly improves the lower bound across all values of \( \tau \), consistent with the posterior density comparisons in \cref{fig:eig_near_linear_posterior}.

However, the Laplace approximation requires solving an optimization problem for each observation, making it computationally more expensive than the Gaussian alternative.

We also analyze EKI-based design optimization in this near-linear setting. We use \( J = \num{1e5} \), \( \Teki = \num{1e5} \), and set \( \tau = 1 \). The design and observation space errors are shown in the following figures.

\begin{figure}[htbp]
    \centering
    \begin{subfigure}{0.45\textwidth}
        \centering
        \includegraphics[width=\linewidth]{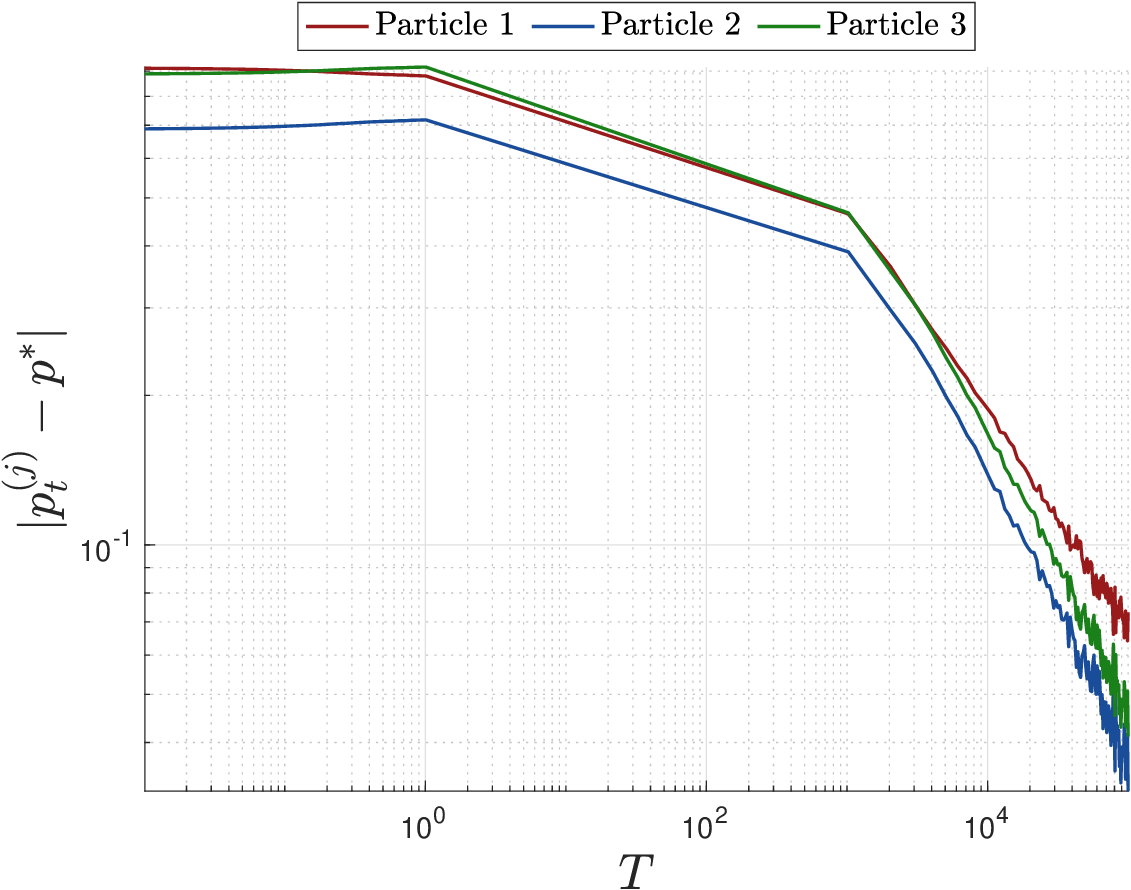}
    \end{subfigure}
    \begin{subfigure}{0.45\textwidth}
        \centering
        \includegraphics[width=\linewidth]{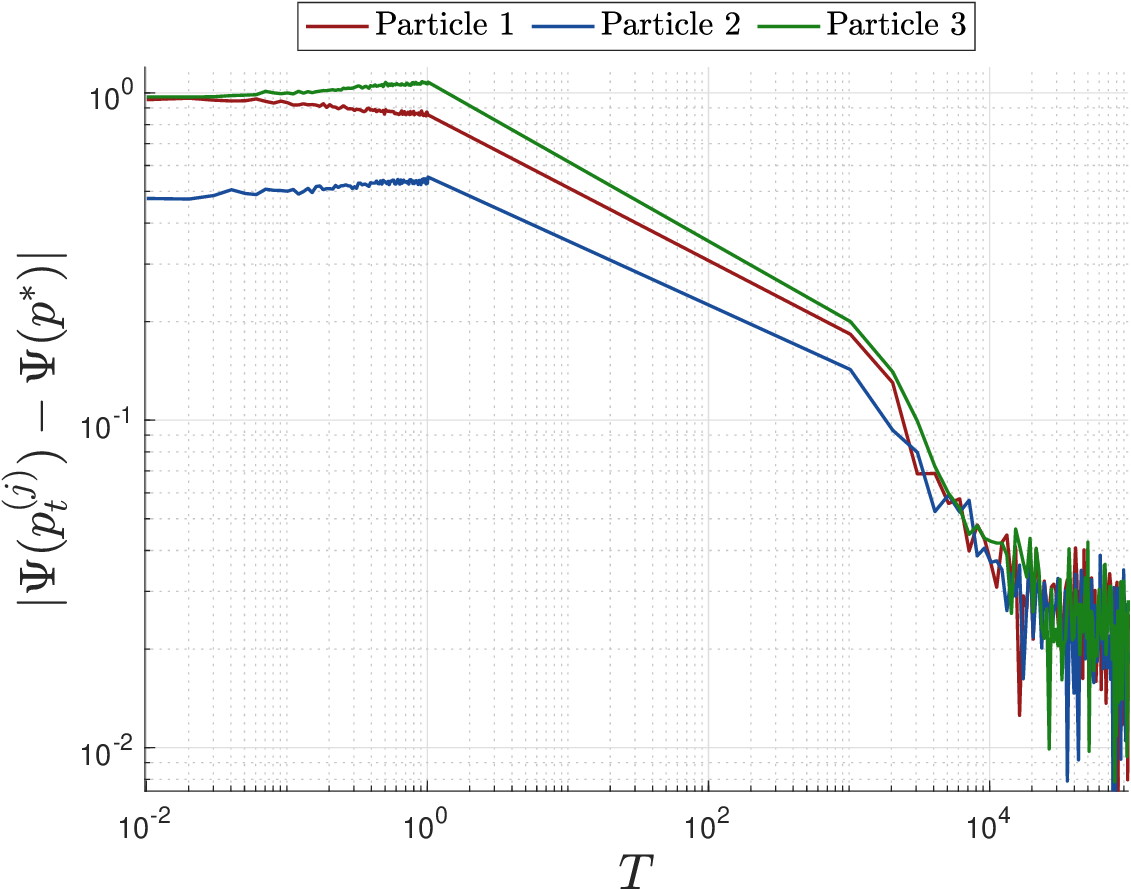}
    \end{subfigure}
    \caption{Near-linear example: EKI error in design space (left) and observation space (right).}
    \label{fig:eki_direct_near_linear_v2}
\end{figure}

\begin{figure}[htbp]
    \centering
    \begin{subfigure}{0.45\textwidth}
        \centering
        \includegraphics[width=\linewidth]{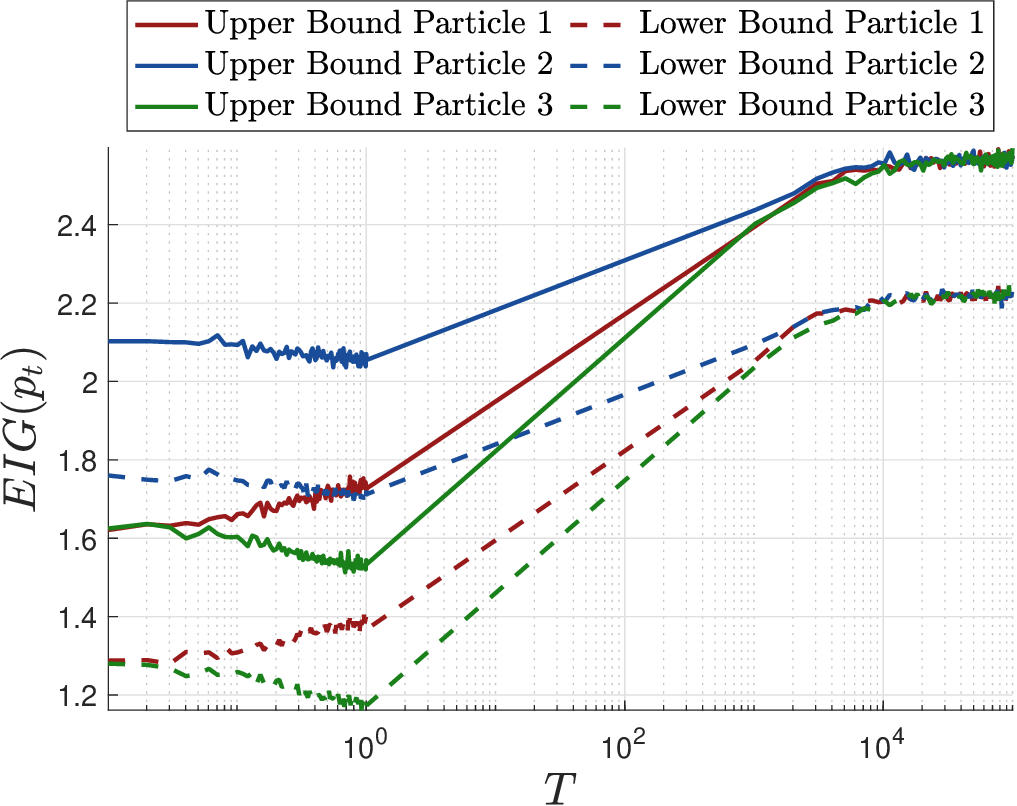}
        \caption{Lower bound via Gaussian approximation}
    \end{subfigure}
    \begin{subfigure}{0.45\textwidth}
        \centering
        \includegraphics[width=\linewidth]{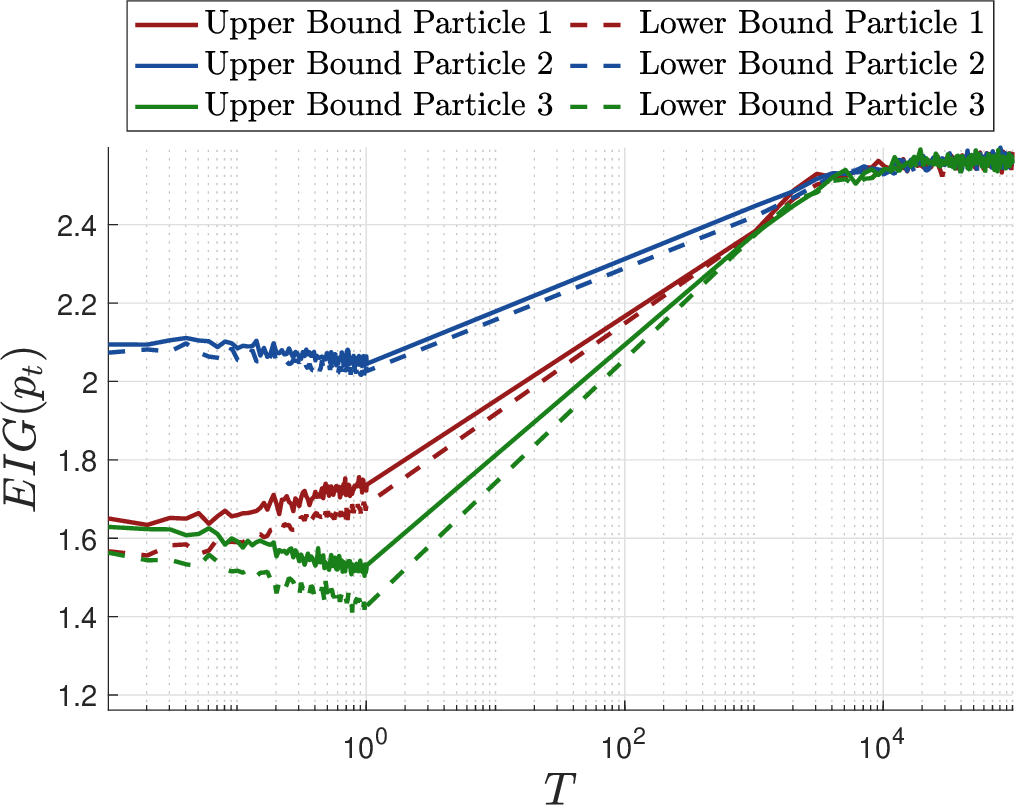}
        \caption{Lower bound via Laplace approximation}
    \end{subfigure}
    \caption{Near-linear example: Upper and lower bounds of EIG for the EKI design trajectory.}
    \label{fig:eki_near_linear_ub_lb_ga_la}
\end{figure}

\cref{fig:eki_near_linear_ub_lb_ga_la} demonstrates how increased nonlinearity (e.g., \( \tau = 0.5 \)) leads to larger discrepancies between bounds. The Laplace-based lower bound is consistently tighter, further supporting its use in near-linear but non-Gaussian settings.

\subsection{1D Heat Equation}
In this section, we consider a one-dimensional heat equation described by the following PDE:
\begin{alignat*}{2}
    \frac{\partial y(t,x)}{\partial t} - \nabla \cdot\left(\kappa(t,x,u)\nabla y(t,x)\right) &= f(t,x,p) \qquad && (t > 0,\, x \in (0,1)) \\
    y(0,x) &= 0 \qquad && (x \in (0,1)) \\
    y(t,0) = u(t,1) &= 0 \qquad && (t \geq 0)\,,
\end{alignat*}
where \( \kappa(t, x, u) \) denotes the diffusivity, parameterized by the unknown coefficient \( u \in \mathcal{X} \), and \( f(t, x, p) \) is a source term depending on the design parameter \( p \in \mathcal{D} \). \\

We consider
\begin{align*}
    \kappa(t,x,u) &= \exp(\sum_{\ell=1}^da_\ell(x)u_i) \\
    a_\ell(x)&=c \tfrac{1}{\ell^2} \cos(\ell\pi x)\\
    f(t,x,p) &= \exp\left(-\alpha(p-1)^2\right) \rho_{\sigma}(x)
\end{align*}

with $\alpha>0$ and $\rho_{\sigma}(x)$ denoting the Gaussian density of  $\mathcal{N}(0.5,\sigma)$ for $\sigma>0$.\\

We fix a final simulation time  $T_{\mathrm{end}}$  and discretize the time domain using a uniform step size  $\Delta t = 0.005$ . Instead of continuously observing the solution, we define a finite set of sequential observation times  $\{ t_1, t_2, \ldots, t_N \} \subset [0, T_{\mathrm{end}}]$ , at which measurements are taken. To solve the PDE, we employ a midpoint time discretization scheme. At each time step, the spatial domain is discretized using the finite element method (FEM), and the solution \( u_{i+1} \) is computed iteratively. 

At each observation step  $n$ , we aim to select an informative design  $p_n^\dagger$  by maximizing the EIG, computed under the current prior belief. To do so, we evaluate the EIG via EKI using $\Jeki$ particles. Given the EKI solutions $\{p_n^{(i)}\}_{i=1}^\Jeki$ we choose the resulting optimal design as 
\[p_n^\dagger = \argmax_i \widetilde{\mathrm{EIG}}_n(p_n^{(i)})\,.\] This design results in a new observation at time  $t_n$  from a fixed ground truth  $u^\dagger$  as:
\begin{equation}\label{eqn:1dheat_inv_prob_sequential}
y_n^\dagger = G_n(u^\dagger, p_n^\dagger, t_n) + \eta_n, \qquad \eta_n \sim \mathcal{N}(0, \Gamma),
\end{equation}
where  $G_n(u, p, t_n) := \mathcal{O}_{t_n} \circ \mathcal{G}  $ denotes the time-dependent forward operator, composed of the heat equation solver  $\mathcal{G}$  and the observation operator  $\mathcal{O}_{t_n}$  applied at time  $t_n$. \\
We perform $N=15$ time steps, resulting in \( T = 0.075 \) and observe the system at the $5$th, $10$th and $15$th timestep. Furthermore, we set $\alpha=10, \sigma=0.1$ and consider $J=\num{1e3}$ particles and solve our EKI up to time $\Teki=\num{1e4}$ with $\Jeki=3$ particles. Moreover, we choose a Gaussian prior of $\mathcal{N}(2 \cdot \mathbf{1}_d,\ 0.5 \cdot I_d)$ and a noise covariance of $\Gamma = cI_k$, where $c=0.1\|G(0.5,1)\|_2$.

\subsubsection{One dimensional parameter and observation space}

We consider two experimental setups. The first consider a one-dimensional parameter space, as well as only observation point, thus we have $d=1$ and $k=1$. We also choose for our diffusion coefficient a constant $c=1.5$, this leads to a higher non-linearity. We illustrate the upper and lower bounds of the EIG, where we compute the lower bound with a Gaussian approximation and a Laplace approximation

\begin{figure}[htbp]
    \centering
        \includegraphics[width=1\linewidth]{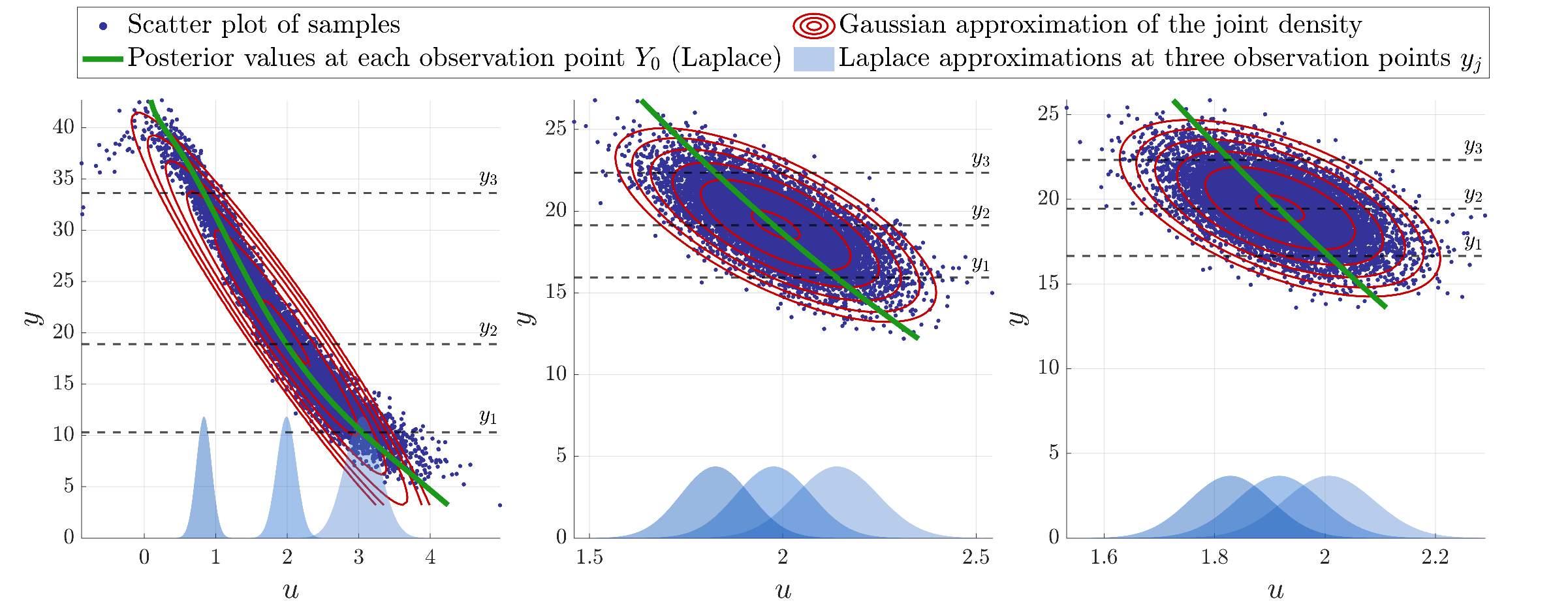}  % Replace with your image file

    \caption{Nonlinear example one dimensional parameter and observation. Gaussian approximations at steps $n=1,2,3$ as well as the mean values of the Laplace approximation. On the x-axis we have the Laplace approximation of three observation points $y_1,y_2,y_3$.}
    \label{fig:optim_nonlinear_laplace_ga_with_laplace}
\end{figure}

\begin{figure}[htbp]
    \centering
        \includegraphics[width=1\linewidth]{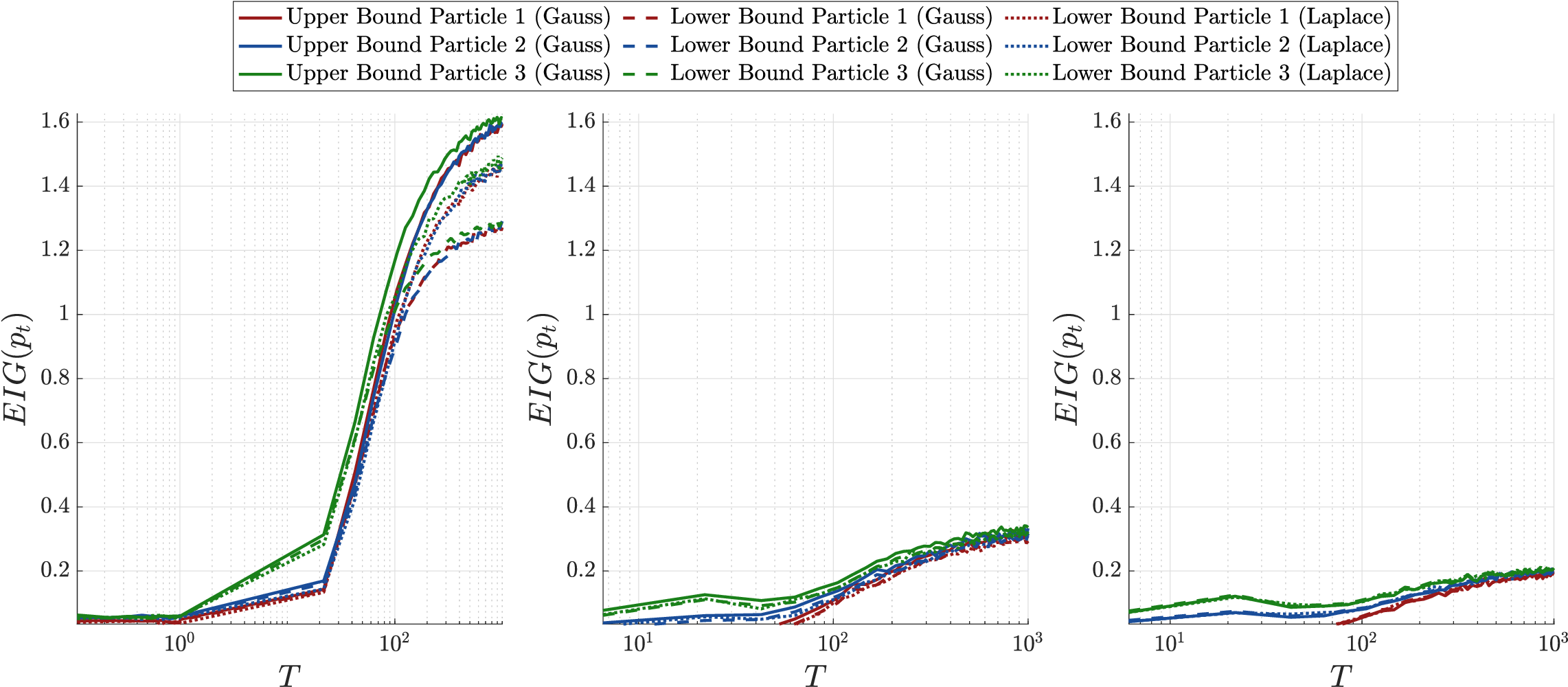}  % Replace with your image file

    \caption{Nonlinear example one dimensional parameter and observation. Upper and lower bound using Gaussian and Laplace approximations for lower bound at steps $n=1,2,3$.}
    \label{fig:optim_nonlinear_ub_lb_1d}
\end{figure}

\Cref{fig:optim_nonlinear_laplace_ga_with_laplace,fig:optim_nonlinear_ub_lb_1d} illustrate the results of the first experiment. We notice that at $n=1$ the samples do not behave perfectly Gaussian, but are slightly non-linear. Here, we can see again how due to the non-linearity the Laplace approximation performs better than the Gaussian approximation. In steps $n=2$ and $n=3$ the samples are more Gaussian and the Gaussian approximation is a good fit. Thus, the Laplace approximation and Gaussian approximation perform similarly good.\\

\subsubsection{Multidimensional parameter and observation space}

In this setting we consider a spacial discretization of $N_x=2^5+1$, a $d=8$ dimensional diffusion operator $u$ and use $k=2^{N_\text{obs}}-1$ observation points, where $N_\text{obs}=2$, hence $k=3$. Moreover, we choose in our diffusion term $c=0.5$ for a more linear model. Here we only use the Gaussian approximation for the lower bound.\\

\begin{figure}[htbp]
    \centering
        \includegraphics[width=1\linewidth]{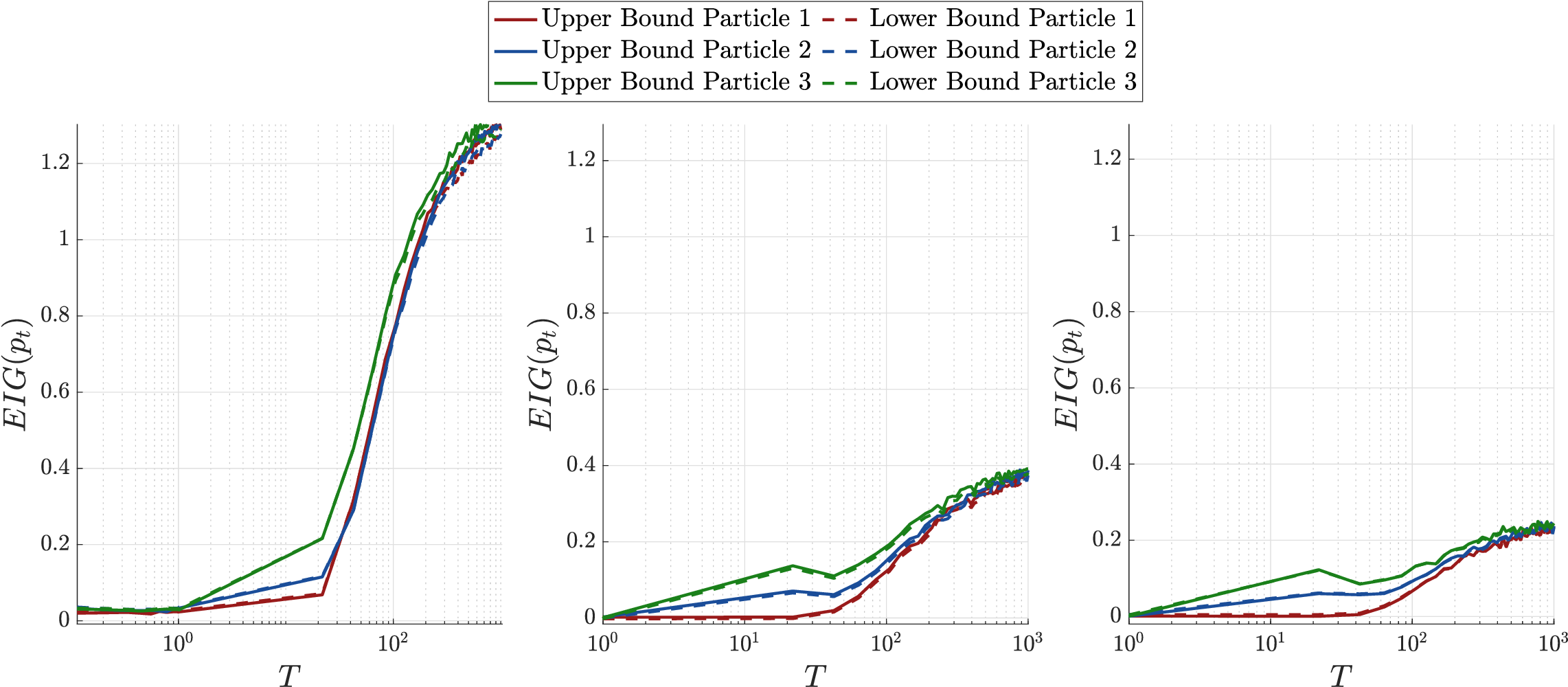}  % Replace with your image file

    \caption{Nonlinear example in multidimensional setting with $d=8$ and $k=3$: upper and lower bound of EIG for each EKI particle at steps $n=1,2,3$ using Gaussian approximation for upper and lower bound.}
    \label{fig:optim_nonlinear_ub_lb}
\end{figure}

\noindent
\Cref{fig:optim_nonlinear_ub_lb} illustrates the results for the multidimensional setting. The images display the corresponding upper and lower bounds on the EIG. We observe that the EKI updates lead to a clear increase in informativeness of the selected designs. Overall, the results confirm that the proposed gradient-free BOED framework remains effective even in nonlinear, PDE-based settings, producing consistent and increasingly informative experimental designs.

\section{Conclusions}\label{sec:conclusions}
We presented a gradient-free framework for sequential BOED that integrates EKI for optimizing design parameters and ALDI for posterior sampling. By avoiding gradient evaluations and adjoint computations, the method is particularly well-suited for PDE-constrained inverse problems and black-box forward models. To estimate the EIG, we employed Gaussian and Laplace approximations, providing practical upper and lower bounds that enable scalable and adaptive design updates.

Future work will focus on improving the quality of the EIG bounds through more flexible and adaptive approximations. While current estimates rely on Gaussian surrogates, more expressive families—such as Gaussian mixtures—may yield tighter bounds in nonlinear or multi-modal settings. Furthermore, extending the framework to account for model misspecification and non-Gaussian noise would significantly enhance its robustness and applicability in real-world experimental scenarios.

\section*{Acknowledgments}

MH and CS are grateful for the support from MATH+ project EF1-20: Uncertainty Quantification and Design of Experiment for Data-Driven Control, funded by the Deutsche Forschungsgemeinschaft (DFG, German Research
Foundation) under Germany's Excellence Strategy – The Berlin Mathematics
Research Center MATH+ (EXC-2046/1, project ID: 390685689).
RG acknowledges
support by the DFG MATH+ project AA5-5 (was EF1-25) - Wasserstein Gradient Flows for Generalised Transport in
Bayesian Inversion. Finally, we thank \cite{Bennett2020} for allowing us to use a high-performance computer to perform some of the experiments.

\appendix
\section{Proofs}
\begin{theorem}\label{thm:laplace_seq_appendix}
Let \cref{assum:LA_0} be satisfied. Then, 
\begin{equation}
D_{\text{KL}}(\nu_{u_{\varepsilon}\mid y^{({\varepsilon})},p} \mid\mid \mathcal L_{{u_{\varepsilon}\mid y^{({\varepsilon})},p}})\in\mathcal{O}({\varepsilon}^{1/2})\,.
\end{equation}

\end{theorem}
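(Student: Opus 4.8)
The strategy is to directly expand the KL divergence between the concentrating posterior $\nu_{u_\varepsilon\mid y^{(\varepsilon)},p}$ and its Laplace approximation $\mathcal{L}_{u_\varepsilon\mid y^{(\varepsilon)},p} = \mathcal{N}(u_\varepsilon,\varepsilon C_\varepsilon)$, and then estimate each piece via a Laplace-type asymptotic expansion of the relevant integrals. Writing the posterior as $\nu_\varepsilon(\d u) = Z_\varepsilon^{-1}\exp(-\tfrac{1}{\varepsilon}I_\varepsilon(u))\lambda^d(\d u)$ (absorbing $\pi_u$ into $I_\varepsilon$ via the definition $I_\varepsilon = \Phi_\varepsilon - \varepsilon\log\pi_u$), and recalling that the Laplace density has unnormalized form $\tilde\pi_\varepsilon(x) = \exp(-\tfrac{1}{2\varepsilon}\|x-u_\varepsilon\|^2_{C_\varepsilon^{-1}})$ with normalizer $\tilde Z_\varepsilon = \varepsilon^{d/2}\sqrt{\det(2\pi C_\varepsilon)}$, the divergence becomes
\begin{equation*}
D_{\text{KL}}(\nu_\varepsilon\mid\mid\mathcal{L}_\varepsilon) = \log\frac{\tilde Z_\varepsilon}{Z_\varepsilon} + \frac{1}{\varepsilon}\,\mathbb{E}_{\nu_\varepsilon}\!\left[\tfrac{1}{2}\|u-u_\varepsilon\|^2_{C_\varepsilon^{-1}} - I_\varepsilon(u)\right].
\end{equation*}
The first step is to control the normalization ratio: a standard Laplace expansion of $Z_\varepsilon = \int \exp(-\tfrac{1}{\varepsilon}I_\varepsilon(u))\,\d u$ around the minimizer $u_\varepsilon$ — using $I_\varepsilon(u_\varepsilon)=0$, $\nabla I_\varepsilon(u_\varepsilon)=0$, $\nabla^2 I_\varepsilon(u_\varepsilon)=C_\varepsilon^{-1}$ from \cref{assum:LA_0}, together with the uniform $C^3$ bound $\|\nabla^3 I_\varepsilon\|\le K_r$ on balls and the tail bound $\inf_{x\notin B_r(u_\varepsilon)} I_\varepsilon \ge \delta_r$ — gives $Z_\varepsilon = \tilde Z_\varepsilon(1 + \mathcal{O}(\varepsilon^{1/2}))$, hence $\log(\tilde Z_\varepsilon/Z_\varepsilon)\in\mathcal{O}(\varepsilon^{1/2})$. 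The $\mathcal{O}(\varepsilon^{1/2})$ rather than $\mathcal{O}(\varepsilon)$ rate here reflects the third-order Taylor remainder: after rescaling $u = u_\varepsilon + \sqrt{\varepsilon}\,v$, the cubic term contributes $\sqrt{\varepsilon}\cdot(\text{odd Gaussian moment})$, and although the leading odd moment vanishes, the interplay of the non-constant Hessian $C_\varepsilon$ and the cubic remainder bounded only by $K_r$ yields the $\varepsilon^{1/2}$ scaling (one cannot extract the next even-order term without fourth derivatives).

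The second step is to bound the remaining expectation term. By the second-order Taylor expansion of $I_\varepsilon$ at $u_\varepsilon$, the integrand $\tfrac{1}{2}\|u-u_\varepsilon\|^2_{C_\varepsilon^{-1}} - I_\varepsilon(u)$ equals minus the cubic remainder $R_\varepsilon(u)$ with $|R_\varepsilon(u)|\le \tfrac{K_r}{6}\|u-u_\varepsilon\|^3$ on $B_r(u_\varepsilon)$. Splitting $\mathbb{E}_{\nu_\varepsilon}[\cdot]$ into the contribution from $B_r(u_\varepsilon)$ and its complement: on the ball, after the $\sqrt\varepsilon$-rescaling one gets $\tfrac{1}{\varepsilon}\mathbb{E}_{\nu_\varepsilon}[R_\varepsilon] = \mathcal{O}(\varepsilon^{1/2})$ (third absolute moment of an approximately Gaussian variable, times $\varepsilon^{3/2}/\varepsilon$); on the complement, the mass of $\nu_\varepsilon$ outside $B_r(u_\varepsilon)$ is exponentially small in $1/\varepsilon$ by the $\delta_r$ bound (after dividing by $Z_\varepsilon\asymp\varepsilon^{d/2}$), which dominates any polynomial growth of $I_\varepsilon$ there, provided one also imposes mild growth of $\Phi_\varepsilon$ at infinity (implicit in $Z_\varepsilon>0$ being finite and the tail assumption). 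Combining the two steps gives $D_{\text{KL}}(\nu_\varepsilon\mid\mid\mathcal{L}_\varepsilon)\in\mathcal{O}(\varepsilon^{1/2})$.

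The main obstacle is making the Laplace expansions uniform in $\varepsilon$ with the weak regularity available — only $C^3$ with a single uniform bound $K_r$ on third derivatives over balls, rather than global or higher-order control. Concretely, one must (i) show the minimizers $u_\varepsilon$ and Hessians $C_\varepsilon^{-1}=\nabla^2 I_\varepsilon(u_\varepsilon)$ stay in a fixed compact region and are uniformly nondegenerate for small $\varepsilon$, which follows from the convergence $u_\varepsilon\to u_\star$, $H_\varepsilon\to H_\star>0$ in \cref{assum:LA_0}; and (ii) handle the fact that $\Phi_\varepsilon$ itself depends on $\varepsilon$, so the "potential" in the Laplace integral is not fixed — this is where the uniform bounds in the assumption are essential, and one must be careful that all $\mathcal{O}(\cdot)$ constants depend only on $K_r$, $\delta_r$, $\varepsilon_r$, and the limiting objects. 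The detailed bookkeeping of the rescaled integrals (especially splitting $\mathbb{R}^d = B_r(u_\varepsilon)\cup B_r(u_\varepsilon)^c$ and matching the Gaussian tail of $\mathcal{L}_\varepsilon$ against the exponentially small posterior tail) is routine but tedious; I would organize it as two lemmas — one for $Z_\varepsilon/\tilde Z_\varepsilon$ and one for the rescaled expectation — and then combine.
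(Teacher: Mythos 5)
Your proposal is correct in substance but organizes the argument differently from the paper. Both start from the same decomposition $D_{\mathrm{KL}} = \log(\tilde Z_\varepsilon/Z_\varepsilon) + \mathbb{E}_{\nu_\varepsilon}[\log(\pi_\varepsilon/\tilde\pi_\varepsilon)]$, but you then use that $\log(\pi_\varepsilon/\tilde\pi_\varepsilon) = -R_\varepsilon/\varepsilon$ exactly, with $R_\varepsilon$ the cubic Taylor remainder, and bound $\varepsilon^{-1}\mathbb{E}_{\nu_\varepsilon}[|R_\varepsilon|]$ directly by a ball/tail split (third posterior moment of order $\varepsilon^{3/2}$ on the ball, exponentially small mass outside), and you derive $Z_\varepsilon/\tilde Z_\varepsilon = 1+\mathcal{O}(\varepsilon^{1/2})$ yourself by a Laplace expansion. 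The paper instead bounds the second term via $\log x \le x-1$ and Cauchy--Schwarz in $L^2(\mathcal{L}_\varepsilon)$, reducing everything to the estimate $\|\pi_\varepsilon/\tilde\pi_\varepsilon - 1\|_{L^2(\mathcal{L}_\varepsilon)} \in \mathcal{O}(\varepsilon^{1/2})$ (proved by the same ball/tail split, but with moments taken under the Gaussian rather than under the posterior) together with the sharper ratio asymptotics $\tilde Z_\varepsilon/Z_\varepsilon \sim 1+c\varepsilon$ quoted from the cited Laplace-approximation reference. What your route buys is self-containedness and simplicity: no $L^2$ norms, no need to also show $\|\pi_\varepsilon/\tilde\pi_\varepsilon\|_{L^2(\mathcal{L}_\varepsilon)} \to 1$, and moments under a Gaussian envelope are elementary; the price is that you must justify the two-sided Gaussian envelope for $\nu_\varepsilon$ on $B_r(u_\varepsilon)$ (lower bound $Z_\varepsilon \gtrsim \varepsilon^{d/2}$ and $I_\varepsilon(u) \gtrsim \|u-u_\varepsilon\|^2$ for $r$ small, uniform in $\varepsilon$ via $H_\varepsilon \to H_\star > 0$), which you correctly identify as the main bookkeeping. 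One caveat applies to both arguments alike: the tail contribution (your quadratic and $I_\varepsilon$ terms on $B_r^c$, the paper's $J_2$) needs a uniform-in-$\varepsilon$ integrability condition beyond the literal statement of \cref{assum:LA_0} — you flag it as implicit growth control, the paper delegates it to the cited reference — so this is a shared lacuna rather than a defect of your approach.
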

\begin{proof}
We will denote for the proof the unnormalized density of the true posterior as $\pi_{\varepsilon}$ and the normalization constant as $Z_{\varepsilon}$.\\
Recall the lower bound of EIG
\begin{equation}
\text{EIG}_u(p) 
= \mathbb{E}_{u,y^{({\varepsilon})}\mid p} \left[ \log \frac{\frac{1}{\tilde Z_{\varepsilon}}\tilde \pi_{\varepsilon}(u)} {\pi_u(u)}\right] + \mathbb{E}_{y^{({\varepsilon})} \mid p} \left[ D_{\text{KL}}(\mu_{{\varepsilon}} \mid\mid \mathcal L_{{u_{\varepsilon}\mid y^{({\varepsilon})},p}}) \right]\,. \notag
\end{equation}
We will show in the following that the Kullback-Leibler divergence between the posterior and the Laplace approximation converges to $0$ for $\varepsilon\to 0$.

\begin{align*}
0 &\le \left[ D_{\text{KL}}(\nu_{u_{\varepsilon}\mid y^{({\varepsilon})},p} \mid\mid \mathcal L_{{u_{\varepsilon}\mid y^{({\varepsilon})},p}}) \right]\\
&= \int \log \left(\frac{\frac {1}{Z_{\varepsilon}} \pi_{\varepsilon}(x)} {\frac {1}{\tilde Z_{\varepsilon}} \tilde \pi_{\varepsilon}(x)}\right) \frac{\frac {1}{Z_{\varepsilon}} \pi_{\varepsilon}(x)} {\frac {1}{\tilde Z_{\varepsilon}} \tilde \pi_{\varepsilon}(x)}  \frac {1}{\tilde Z_{\varepsilon}} \tilde \pi_{\varepsilon}(x) \mathrm dx\\
&= \int \left(\log \frac {\tilde Z_{\varepsilon}}{Z_{\varepsilon}} +\log \frac{ \pi_{\varepsilon}(x)} { \tilde \pi_{\varepsilon}(x)} \right)\frac{\tilde Z_{\varepsilon}}{Z_{\varepsilon}}\frac{ \pi_{\varepsilon}(x)} {  \tilde \pi_{\varepsilon}(x)}  \frac {1}{\tilde Z} \tilde \pi_{\varepsilon}(x) \mathrm dx\\
&= \log \frac {\tilde Z_{\varepsilon}}{Z_{\varepsilon}} \int \frac{\tilde Z_{\varepsilon}}{Z_{\varepsilon}}\frac{ \pi_{\varepsilon}(x)} {  \tilde \pi_{\varepsilon}(x)}  \frac {1}{\tilde Z} \tilde \pi_{\varepsilon}(x) \mathrm dx + \int \log \frac{ \pi_{\varepsilon}(x)} { \tilde \pi_{\varepsilon}(x)} \frac{\tilde Z_{\varepsilon}}{Z_{\varepsilon}}\frac{ \pi_{\varepsilon}(x)} {  \tilde \pi_{\varepsilon}(x)}  \frac {1}{\tilde Z} \tilde \pi_{\varepsilon}(x) \mathrm dx\\
&={\log \frac {\tilde Z_{\varepsilon}}{Z_{\varepsilon}}}+ \frac{\tilde Z_{\varepsilon}}{Z_{\varepsilon}} \int \log \frac{ \pi_{\varepsilon}(x)} { \tilde \pi_{\varepsilon}(x)} \frac{ \pi_{\varepsilon}(x)} {  \tilde \pi_{\varepsilon}(x)}  \frac {1}{\tilde Z} \tilde \pi_{\varepsilon}(x) \mathrm dx\\
&\le \log \frac {\tilde Z_{\varepsilon}}{Z_{\varepsilon}}+ \frac{\tilde Z_{\varepsilon}}{Z_{\varepsilon}} \int \frac{ \pi_{\varepsilon}(x)} { \tilde \pi_{\varepsilon}(x)} \left(\frac{ \pi_{\varepsilon}(x)} {  \tilde \pi_{\varepsilon}(x)}-1\right)  \frac {1}{\tilde Z} \tilde \pi_{\varepsilon}(x) \mathrm dx\\
&\le \log \frac {\tilde Z_{\varepsilon}}{Z_{\varepsilon}}+ \frac{\tilde Z_{\varepsilon}}{Z_{\varepsilon}} \left\|\frac{ \pi_{\varepsilon}(x)} { \tilde \pi_{\varepsilon}(x)}\right\|_{ L^2(\mathcal L_{\nu_{\varepsilon}})} \left\|\frac{ \pi_{\varepsilon}(x)} { \tilde \pi_{\varepsilon}(x)}-1 \right \|_{L^2(\mathcal L_{\nu_{\varepsilon}})}\,.
\end{align*}

Note that we used the inequality $\log(x)\leq x-1$ for $x\geq 0$.
We bound the second norm following \cite[Lemma 1]{Laplace}, we thus split the integrand into 
\begin{align*}
	J_1({\varepsilon}) & :=\int_{B_r(u_{\varepsilon}) } \left|\exp(-\frac1{\varepsilon}R_{\varepsilon}(x)) -1 \right|^{2}\ \mathcal{L}\nu_{u_{\varepsilon}\mid y^{({\varepsilon})},p}(\d x),\\
	J_2({\varepsilon}) & := \int_{B^c_r(u_{\varepsilon})} \left|\exp(-\frac1{\varepsilon} R_{\varepsilon}(x)) -1 \right|^2\ \mathcal{L}\nu_{u_{\varepsilon}\mid y^{({\varepsilon})},p}(\d x)\,,
\end{align*}
where $R_{\varepsilon}(x) := I_{\varepsilon}(x) - \widetilde I(x) = I_{\varepsilon}(x) - I_{\varepsilon}(u_{\varepsilon}) - \frac 12 \|x-u_{\varepsilon}\|^2_{C^{-1}_{\varepsilon}}$ and $r>0$ is a radius which will be specified later. 
{\scriptsize\begin{align*}
	&J_1({\varepsilon}) 
	 \leq \int_{B_r(u_{\varepsilon})} \left(\exp(\frac1{\varepsilon} c_3 \ \|x-u_{\varepsilon}\|^3) -1 \right)^2 \mathcal{L}_{\nu_{u_{\varepsilon}}\mid y^{({\varepsilon})},p}(\d x)\\
	 &  \leq \int_{B_r(u_{\varepsilon})} \left(1 - \exp(- \frac1{\varepsilon} c_3 \ \|x-u_{\varepsilon}\|^3) \right)^2\ \exp\left(-\frac1{\varepsilon} (\frac 12 \|x-u_{\varepsilon}\|^2_{C^{-1}_{\varepsilon}} - c_3 \|x-u_{\varepsilon}\|^3)\right)\ \frac{\d x}{\tilde Z_{\varepsilon}}\,.
\end{align*}}
with $c_3 > 0$ such that for sufficiently small $\varepsilon$, i.e., $\varepsilon\leq \varepsilon_r$, we have
\[
	|R_{\varepsilon}(x)| \leq c_3\ \|x-u_{\varepsilon}\|^3
	\qquad \forall x\in B_r(u_{\varepsilon})\,. %\ \forall n\in\bbN.
\]
By introducing the auxiliary Gaussian measure $\rho_{{\varepsilon}} := \mathcal{N} \left(0, \varepsilon\frac 1{\gamma} I\right)$, we get
\begin{align*}
	\int_{\mathbb R^d} \left(1 - \e^{- \frac1{\varepsilon}c_3 \ \|x\|^3} \right)^2\ & \rho_{\varepsilon}(\d x)
	\leq
	\int_{\mathbb R^d} (\frac1{\varepsilon}c_3)^2 \ \|x\|^{6}\  \rho_{\varepsilon}(\d x)
	\\
    &= {\varepsilon}^{-2} (c_3)^2 \ \ev{\|(\gamma \frac1{\varepsilon})^{-1/2} \xi\|^{6}}= {\varepsilon} \frac{c^2_3}{\gamma^{6/2}} \ \ev{\|\xi\|^3} \in \mathcal O({\varepsilon})\,.
\end{align*}
This yields $J_1({\varepsilon})\in \mathcal O({\varepsilon})$ for the particular choice $r = \frac \gamma{2c_3}$. 
A close inspection of the proof in \cite{Laplace} of the second integral shows that it can be bounded completely analogously, i.e.
\begin{equation*}
	J_2({\varepsilon}) 
	\in \mathcal O(\e^{- \frac1{\varepsilon} \delta_r} {\varepsilon}^{-d/2} )\,.
\end{equation*}
This gives
\begin{equation}
\left\|\frac{ \pi_{\varepsilon}(x)} { \tilde \pi_{\varepsilon}(x)}-1 \right \|_{L^2(\mathcal L_{{u_{\varepsilon}\mid y^{({\varepsilon})},p}})}\in \mathcal O({\varepsilon}^{\frac12})\,,
\end{equation}
which implies
\begin{equation}
\left\|\frac{ \pi_{\varepsilon}(x)} { \tilde \pi_{\varepsilon}(x)} \right \|_{L^2(\mathcal L_{{u_{\varepsilon}\mid y^{({\varepsilon})},p}})}\to 1\,.
\end{equation}
Thus, for $n$ large enough and with $\frac{\tilde Z_{\varepsilon}}{Z_{\varepsilon}} \sim 1+c {\varepsilon}$ for a positive constant $c>0$ (cp. \cite[Equation (27)]{Laplace}, there holds
\begin{equation}
    D_{\text{KL}}(\nu_{u_{\varepsilon}\mid y^{({\varepsilon})},p} \mid\mid \mathcal L_{{u_{\varepsilon}\mid y^{({\varepsilon})},p}}) \in \mathcal O({\varepsilon}^{\frac12})\,. 
\end{equation}

\end{proof}
%\subsection{Multivariate Gaussian samples}
\begin{lemma}\label{lem:samples_gauss}
    Let $\{X_i\}_{i=1}^J$ be i.i.d. samples from a multivariate Gaussian \\ $\mathcal N(m,C)$ in $\R^d$. We define
\begin{align*}
    \tilde m^{(J)} &= \frac{1}{J}\sum_{i=1}^JX_i\\
    \tilde C^{(J)} &= \frac{1}{J-1}\sum_{i=1}^J (X_i - \tilde m^{(J)})\cdot  (X_i - \tilde m^{(J)})^\top
\end{align*}
Then $$D_\text{KL}(\mathcal N(m,C)\mid\mid\mathcal N(\tilde m^{(J)},\tilde C^{(J)})) \xrightarrow{L^2}0$$ and $$D_\text{KL}(\mathcal N(\tilde m^{(J)},\tilde C^{(J)})\mid\mid\mathcal N(m,C)) \xrightarrow{L^2}0$$ with rate $\mathcal O(J^{-1})$.
\end{lemma}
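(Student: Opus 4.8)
The plan is to exploit the closed-form formula for the KL divergence between two Gaussians and then use standard concentration rates for the empirical mean and covariance of Gaussian samples. Recall that for two non-degenerate Gaussians on $\R^d$,
\begin{equation*}
D_{\text{KL}}(\mathcal N(m_1,C_1)\mid\mid\mathcal N(m_2,C_2))
= \tfrac12\left(\tr(C_2^{-1}C_1) - d + \|m_1-m_2\|_{C_2}^2 + \log\frac{\det C_2}{\det C_1}\right).
\end{equation*}
Applying this with $(m_1,C_1)=(m,C)$ and $(m_2,C_2)=(\tilde m^{(J)},\tilde C^{(J)})$ (and then the other way around), the task reduces to controlling the $L^2$ size of the three random quantities $\|m-\tilde m^{(J)}\|$, $\tilde C^{(J)}-C$ (in operator norm), and the combination $\tr((\tilde C^{(J)})^{-1}C)-d+\log\det C - \log\det\tilde C^{(J)}$.

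First I would record the elementary facts: $\tilde m^{(J)}\sim\mathcal N(m,C/J)$, so $\E\|\tilde m^{(J)}-m\|^2 = \tr(C)/J = \mathcal O(J^{-1})$; and $(J-1)\tilde C^{(J)}$ is Wishart with $J-1$ degrees of freedom, from which $\E\|\tilde C^{(J)}-C\|_F^2 = \mathcal O(J^{-1})$ by the standard variance formula for Wishart entries. Next I would linearize the covariance-dependent terms around $C$. Writing $\tilde C^{(J)} = C + E$ with $E$ the (random, small) error matrix, a second-order Taylor expansion gives
\begin{equation*}
\tr\bigl(C^{-1}(C+E)\bigr) - d - \log\det(C^{-1}(C+E)) = \tfrac12\tr\bigl((C^{-1}E)^2\bigr) + \mathcal O(\|E\|^3),
\end{equation*}
so this contribution is $\mathcal O(\|E\|^2)$ pointwise. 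The quadratic term $\|m-\tilde m^{(J)}\|_{\tilde C^{(J)}}^2$ is likewise bounded by $\lambda_{\min}(\tilde C^{(J)})^{-1}\|m-\tilde m^{(J)}\|^2$. Collecting these, the KL divergence is pointwise bounded by $C_1\|E\|_F^2 + C_2\|m-\tilde m^{(J)}\|^2$ on the high-probability event where $\tilde C^{(J)}$ stays uniformly positive definite, and taking expectations yields the $\mathcal O(J^{-1})$ rate in $L^2$ — more precisely, $\E\big[D_{\text{KL}}(\cdots)^2\big]^{1/2} = \mathcal O(J^{-1})$, since each of $\|E\|_F^2$ and $\|m-\tilde m^{(J)}\|^2$ has $L^2$ norm of order $J^{-1}$ (being a sum of squares of $\mathcal O(J^{-1/2})$-scale Gaussian-type quantities, hence with controlled fourth moments). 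The symmetric statement follows identically by swapping the roles of $(m,C)$ and $(\tilde m^{(J)},\tilde C^{(J)})$ and using that $(\tilde C^{(J)})^{-1}$ is also bounded in all $L^q$ on the good event.

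The main obstacle is the degeneracy / integrability issue coming from $(\tilde C^{(J)})^{-1}$: for small $J$ the empirical covariance can be ill-conditioned or singular, and $\log\det\tilde C^{(J)}$ and $\lambda_{\min}(\tilde C^{(J)})^{-1}$ are then unbounded, so naive $L^2$ bounds on the KL divergence need not even be finite. I would handle this by splitting the expectation over the event $A_J = \{\|\tilde C^{(J)}-C\| \le \tfrac12\lambda_{\min}(C)\}$ and its complement. On $A_J$ the estimates above apply with deterministic constants. On $A_J^c$, one uses that $\Pr(A_J^c)$ decays faster than any polynomial in $J$ (Wishart concentration / Gaussian chaos tail bounds) together with crude polynomial-in-$J$ moment bounds for the remaining terms (the smallest eigenvalue of a Wishart matrix has negative moments controlled for $J$ large enough, $J > d+3$ say), so this contribution is $o(J^{-1})$ and is absorbed. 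This truncation argument is the only genuinely technical point; everything else is the Gaussian KL formula plus textbook moment computations for sample mean and sample covariance.
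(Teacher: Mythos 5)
Your proposal is correct, and its skeleton coincides with the paper's: both start from the closed-form Gaussian KL divergence, use $\tilde m^{(J)}-m\sim\mathcal N(0,C/J)$ for the mean term, control $\tilde C^{(J)}-C$ in Frobenius norm at rate $J^{-1/2}$ (you via the Wishart variance formula, the paper via an explicit second-moment computation for the whitened matrix $C^{-1/2}\tilde C^{(J)}C^{-1/2}$), and dispose of the determinant term by Taylor expansion (you expand $\tr-\log\det$ jointly to second order; the paper expands $\log\det$ to first order and bounds the trace terms by Cauchy--Schwarz in Frobenius norm). The genuine difference is how the inverse of the empirical covariance is treated, which is the delicate point for $L^2$ convergence of $D_{\text{KL}}(\mathcal N(m,C)\mid\mid\mathcal N(\tilde m^{(J)},\tilde C^{(J)}))$. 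The paper restricts to the almost-sure full-rank event, works with the pseudoinverse $\tilde C^{(J),\dagger}$, and cites a continuity theorem for pseudoinverses to obtain $\tilde C^{(J),\dagger}\to C^{-1}$ in $L^2$; it does not spell out the integrability of $\lambda_{\min}(\tilde C^{(J)})^{-1}$ or $\log\det\tilde C^{(J)}$ near degeneracy. You instead run an explicit truncation on the event $A_J=\{\|\tilde C^{(J)}-C\|\le\tfrac12\lambda_{\min}(C)\}$, using exponential concentration for $\Pr(A_J^c)$ together with negative-moment bounds for the smallest Wishart eigenvalue (valid once $J$ exceeds $d$ by a few units) to absorb the bad event. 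Your route is more self-contained and makes the moment/integrability issue, and the implicit requirement that $J$ be sufficiently large relative to $d$, explicit; the paper's route is shorter by outsourcing the pseudoinverse convergence to a cited result. Either way the advertised $\mathcal O(J^{-1})$ rate in $L^2$ follows, since the fourth moments of $\|\tilde m^{(J)}-m\|$ and $\|\tilde C^{(J)}-C\|_F$ scale like $J^{-2}$, exactly as you argue.
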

\begin{proof}

Without restriction we only consider $J >0$ and restrict our attention to the case where $\tilde C^{(J)}$ has full rank (which has probability $1$). We can exclude the zero set of degenerate empirical covariance matrix from all ``convergence in $L^2$'' computations below by an argument of the form
\begin{equation*}
    \E \|X_n-X\|^2 = \E \|X_n - X\|^2\chi_A + \E \|X_n - X\|^2\chi_A^\top = \E \|X_n - X\|^2\chi_A\,,
\end{equation*}
if $\mathbb P(A) = 1$. Nevertheless we use the pseudoinverse $\tilde C^{(J),\dagger}$ instead of the true inverse below.

Note that $\tilde m^{(J)}, \tilde C^{(J)}$ are unbiased estimators of $m,C$, i.e. $\E \tilde m^{(J)} = m$ and $\E \tilde C^{(J)} = C$. By the central limit theorem, $J^{-1/2}(\tilde m^{(J)} - \E \tilde m^{(J)}) = J^{-1/2}(\tilde m^{(J)} - m) \to \mathcal N(0, C)$ and $J^{-1/2}(\tilde C^{(J)} - C) \to \mathcal N(0, \cdots)$

The KL divergences are explicitly given by

\begin{align*}
    D_{KL}&\left(\mathcal N(\tilde m^{(J)},\tilde C^{(J)}) \mid\mid \mathcal N(m,C)\right)\\
    &= \frac{1}{2}\left(\tr(C^{-1} \tilde C^{(J)}) - d + \| m-\tilde m^{(J)}\|_C^2 - \log\det(C^{-1} \tilde C^{(J)}) \right)\\
    D_{KL}&\left(\mathcal N(m,C) \mid \mid\mathcal N(\tilde m^{(J)},\tilde C^{(J)}) \right)\\
    &= \frac{1}{2}\left(\tr( \tilde C^{(J),\dagger}C) - d + \| m-\tilde m^{(J)}\|_{ \tilde C^{(J)}}^2 - \log\det(\tilde C^{(J),\dagger}C) \right)\,.
\end{align*}
By Gaussianity, $\tilde m^{(J)}-m\sim \mathcal N(0, C/J)$, so $J^{1/2}C^{-1/2}(\tilde m^{(J)}-m)\sim \mathcal N(0,I)$ and $J\|\tilde m^{(J)} - m\|_C^2 \sim \chi^2(d)$. 
This means that $\E\|\tilde m^{(J)} - m\|_C^2 = \frac{d}{J}$ and $\Var \|\tilde m^{(J)} - m\|_C^2 = \frac{2d}{J^2}$. Next we will show that  $C^{-1/2}\tilde C^{(J)}C^{-1/2} \xrightarrow{L^2(\Omega,(\R^{d\times d},\|\cdot\|_F))}I_{d\times d}$. Indeed,
\begin{align*}
    C^{-1/2}\tilde C^{(J)}C^{-1/2} &= C^{-1/2}\frac{1}{J-1}\sum_{i=1}^J (X_i - \tilde m^{(J)})\cdot  (X_i - \tilde m^{(J)})^\top C^{-1/2}\\
    &=C^{-1/2}\frac{1}{J-1}\sum_{i=1}^J (X_i - m)\cdot  (X_i - m)^\top C^{-1/2}\\
    &- \frac{1}{J(J-1)}\sum_{i,j=1}^J\left(C^{-1/2} (X_i - m) \cdot (C^{-1/2}(X_j-m))^\top  \right)\\
    &= \frac{1}{J-1}\sum_{i=1}^J Y_i Y_i^\top - \frac{1}{J(J-1)}\sum_{i,j=1}^J Y_i Y_j^\top \\
    &= \frac{1}{J}\sum_{i=1}^J Y_i Y_i^\top - \frac{1}{J(J-1)}\sum_{i\neq j=1}^J Y_i Y_j^\top\,.
\end{align*}
Then 
\begin{equation*}
     C^{-1/2}\tilde C^{(J)}C^{-1/2} -I =  \frac{1}{J}\sum_{i=1}^J (Y_i Y_i^\top-I) - \frac{1}{J(J-1)}\sum_{i\neq j=1}^J Y_i Y_j^\top\,,
\end{equation*}
which has mean $0$ and second order moments are given by
\begin{align*}
    \E \left\|C^{-1/2}\tilde C^{(J)}C^{-1/2} -I\right\|_F^2 &= \E \tr\left( (C^{-1/2}\tilde C^{(J)}C^{-1/2} -I)^\top C^{-1/2}\tilde C^{(J)}C^{-1/2} -I \right)\\
    &= \E \tr \left(\frac{1}{J^2}\sum_{i,j=1}^J (Y_i Y_i^\top-I)^\top (Y_j Y_j^\top-I)\right)\\
    &+ \E\tr\left(\frac{1}{J^2(J-1)^2}\sum_{i\neq j=1}^J\sum_{k\neq l=1}^J Y_i Y_j^\top Y_k Y_l^\top \right)\\
    &-2 \E\tr\left(\frac{1}{J^2(J-1)}\sum_{k=1}^J\sum_{i\neq j=1}^J(Y_k Y_k^\top - I)^\top Y_i Y_j^\top \right)\\
    &= 0 + \frac{d}{2J(J-1)} + 0\\
    &=\mathcal O(J^{-2})\,,
\end{align*}
so $C^{-1/2}\tilde C^{(J)}C^{-1/2} \xrightarrow{L^2(\Omega,(\R^{d\times d},\|\cdot\|_F))}I_{d\times d}$. Multiplying by $C^{1/2}$ on both sides implies $\tilde C^{(J)} \xrightarrow{L^2(\Omega,(\R^{d\times d},\|\cdot\|_F))}C$, with the same order of convergence. Theorem 4.2 in \cite{rakovcevic1997continuity} shows that $\tilde C^{(J),\dagger} \to C^{-1}$ (again in $L^2$). This is due to the fact that almost surely, for $J>d$, the rank of $\tilde C^{(J)}$ is full.
We can argue $\tr(C^{-1}\tilde C^{(J)}-I) = \tr(C^{-1}(\tilde C^{(J)}-C))\leq \|C^{-1}\|_F \, \|\tilde C^{(J)}-C\|_F $ and $\tr(\tilde C^{(J),\dagger}C-I) = \tr((\tilde C^{(J),\dagger}-C^{-1})C)\leq_F  \|\tilde C^{(J),\dagger}-C^{-1}\|_F \,\|C\|_F $, so 
\begin{equation*}
    \tr(\tilde C^{(J),\dagger}C-I) \xrightarrow{L^2(\Omega,(\R^{d\times d},\|\cdot\|_F))} 0,\quad \tr(C^{-1}\tilde C^{(J)}-I) \xrightarrow{L^2(\Omega,(\R^{d\times d},\|\cdot\|_F))} 0\,,
\end{equation*}
again with the same order of convergence.

For the last remaining term in the KL divergence we note that by first order Taylor expansion,
\begin{equation}
    \log \det \tilde C^{(J)} = \log \det C + \tr\left(C^{-1}(\tilde C^{(J)}-C) \right) + \mathcal O(\|\tilde C^{(J)}-C\|^2)\,,
\end{equation}
so
\begin{equation}
    \log \frac{\det \tilde C^{(J)}}{\det C} = \tr\left(C^{-1}\tilde C^{(J)}-I \right) + \mathcal O(\|\tilde C^{(J)}-C\|^2)\,,
\end{equation}
where we already know that the mean of the first term is $0$, and has variance $2d/J$.

Collecting terms, we see that $D_{KL}\left(\mathcal N(\tilde m^{(J)},\tilde C^{(J)}) \mid\mid \mathcal N(m,C)\right)$ converges to $0$ in $L^2$, with rate $J^{-1}$. The other KL divergence is shown to converge with arguments along the same lines.

\end{proof}
\begin{corollary}\label{cor:samples_gauss}
Let $\{(Y_i,Z_i)\}_{i=1}^J$ be i.i.d. samples from a multivariate Gaussian $\mathcal N(m,C)$ in $\R^{m+n}$, where $m=(m_Y,m_Z)$ and $C = \begin{pmatrix}
    C_Y & C_{YZ}\\ C_{YZ}^\top & C_Z
\end{pmatrix}$.    
We define 
\begin{align*}
    \tilde m_Y^{(J)} &= \frac{1}{J}\sum_{i=1}^JY_i\\
    \tilde m_Z^{(J)} &= \frac{1}{J}\sum_{i=1}^JZ_i\\
    \tilde C_Y^{(J)} &= \frac{1}{J-1}\sum_{i=1}^J (Y_i - \tilde m_Y^{(J)})\cdot  (Y_i - \tilde m_Y^{(J)})^\top\\
    \tilde C_Z^{(J)} &= \frac{1}{J-1}\sum_{i=1}^J (Z_i - \tilde m_Z^{(J)})\cdot  (Z_i - \tilde m_Z^{(J)})^\top\\\\
    \tilde C_{YZ}^{(J)} &= \frac{1}{J-1}\sum_{i=1}^J (Y_i - \tilde m_Y^{(J)})\cdot  (Z_i - \tilde m_Z^{(J)})^\top\,,
\end{align*}
as well as exact and empirical conditionals
\begin{align*}
    m_{Y|Z=z} &= m_Y + C_{YZ}C_Z^{-1}(z-m_Z)\\
    C_{Y|Z=z} &= C_Y - C_{YZ}C_Z^{-1}(C_{YZ})^\top\\
    \tilde m_{Y|Z=z}^{(J)} &= \tilde m_Y^{(J)} + \tilde C_{YZ}^{(J)} \left( \tilde C_Z^{(J)} \right)^\dagger (z - \tilde m_Z^{(J)} )\\
    \tilde C_{Y|Z=z}^{(J)} &= \tilde C_Y^{(J)}-\tilde C_{YZ}^{(J)} \left( \tilde C_Z^{(J)} \right)^\dagger\left(\tilde C_{YZ}^{(J)}\right)^\top\,.
\end{align*}
Then 
\begin{align}
    D_{KL}\left(\mathcal N(\tilde m_Z^{(J)},\tilde C_Z^{(J)}) \mid\mid \mathcal N(m_Z,C_Z)\right) &\xrightarrow[\mathcal O(J^{-1})]{L^2} 0\\
    D_{KL}\left(\mathcal N(m_Z,C_Z)\mid\mid \mathcal N(\tilde m_Z^{(J)},\tilde C_Z^{(J)}) \right) &\xrightarrow[\mathcal O(J^{-1})]{L^2} 0\\
    D_{KL}\left(\mathcal N(m_{Y|Z=z}^{(J)}, \tilde C_{Y|Z=z}^{(J)}) \mid\mid \mathcal N(m_{Y|Z=z},C_{Y|Z=z})\right) &\xrightarrow[\mathcal O(J^{-1})]{L^2} 0\\
    D_{KL}\left(\mathcal N(m_{Y|Z=z},C_{Y|Z=z}) \mid\mid \mathcal N(m_{Y|Z=z}^{(J)}, \tilde C_{Y|Z=z}^{(J)})\right) &\xrightarrow[\mathcal O(J^{-1})]{L^2} 0\,,
\end{align}
i.e. empirical marginal and conditional Gaussian distributions converge against the true Gaussian marginal and conditional distributions, in the same sense and with the same rate as in \cref{lem:samples_gauss}.
\end{corollary}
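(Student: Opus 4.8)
The two marginal statements are immediate: since $(Y_i,Z_i)$ is jointly Gaussian, the sample $\{Z_i\}_{i=1}^J$ is i.i.d.\ $\mathcal{N}(m_Z,C_Z)$, and $\tilde m_Z^{(J)},\tilde C_Z^{(J)}$ are exactly the empirical mean and covariance of that sample, so \cref{lem:samples_gauss} applies verbatim and yields both $L^2$ convergences with rate $\mathcal{O}(J^{-1})$. The work is entirely in the two conditional statements.

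For those, the plan is a two-step argument. First I would show that the empirical conditional parameters converge in $L^2$ at the slower rate $\mathcal{O}(J^{-1/2})$, i.e.
\[
\bigl\|\tilde m_{Y|Z=z}^{(J)}-m_{Y|Z=z}\bigr\|\xrightarrow[\mathcal{O}(J^{-1/2})]{L^2}0,
\qquad
\bigl\|\tilde C_{Y|Z=z}^{(J)}-C_{Y|Z=z}\bigr\|_F\xrightarrow[\mathcal{O}(J^{-1/2})]{L^2}0 .
\]
Then I would feed these into a Taylor expansion of the Gaussian--Gaussian KL divergence
\[
D_{KL}\bigl(\mathcal{N}(m_1,C_1)\mid\mid\mathcal{N}(m_2,C_2)\bigr)
=\tfrac12\Bigl(\tr(C_2^{-1}C_1)-d+\|m_1-m_2\|_{C_2}^2-\log\det(C_2^{-1}C_1)\Bigr)
\]
around the diagonal $(m_1,C_1)=(m_2,C_2)$. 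Both the value and the first derivative of this expression vanish on the diagonal, so it is $\mathcal{O}(\|m_1-m_2\|^2+\|C_1-C_2\|_F^2)$ locally; combining this with the $\mathcal{O}_{L^2}(J^{-1/2})$ bounds above and the fact that the relevant fluctuations obey CLT-scaling (hence have fourth moments of order $J^{-2}$) upgrades the KL convergence to $L^2$ with the stated rate $\mathcal{O}(J^{-1})$, in both orderings of the arguments.

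The $\mathcal{O}(J^{-1/2})$ step itself rests on the observation that $(\tilde m_Y^{(J)},\tilde m_Z^{(J)},\tilde C_Y^{(J)},\tilde C_Z^{(J)},\tilde C_{YZ}^{(J)})$ converge to $(m_Y,m_Z,C_Y,C_Z,C_{YZ})$ in $L^2$ at rate $\mathcal{O}(J^{-1/2})$ — the means exactly, since $\tilde m^{(J)}-m\sim\mathcal{N}(0,C/J)$, and the covariance blocks by the same moment computation carried out in the proof of \cref{lem:samples_gauss}, now applied to the full $(m+n)$-dimensional sample. The conditional mean and covariance are smooth (rational) functions of these empirical statistics via the Schur-complement formulas, and the only possible source of a singularity is the inversion of $\tilde C_Z^{(J)}$. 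Here I would invoke, exactly as in the proof of \cref{lem:samples_gauss}, the continuity of the pseudoinverse (Theorem~4.2 in \cite{rakovcevic1997continuity}) together with the fact that $\tilde C_Z^{(J)}$ has full rank almost surely for $J>n$, to conclude $(\tilde C_Z^{(J)})^\dagger\to C_Z^{-1}$ in $L^2$ at rate $\mathcal{O}(J^{-1/2})$; a local-Lipschitz (delta-method) argument then propagates the rate through the Schur-complement expressions to $\tilde m_{Y|Z=z}^{(J)}$ and $\tilde C_{Y|Z=z}^{(J)}$.

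\textbf{Main obstacle.} The delicate point is the $L^2$ (rather than merely in-probability) control near the degenerate set: the vanishingly unlikely event that $\tilde C_Z^{(J)}$ has a tiny eigenvalue could in principle make $\|(\tilde C_Z^{(J)})^\dagger\|$, and hence the conditional covariance, blow up badly enough to spoil second moments. This is handled as in \cref{lem:samples_gauss}: one restricts to the full-rank event, which has probability one for $J>n$, using $\E\|X_J-X\|^2\chi_A=\E\|X_J-X\|^2$ when $\mathbb{P}(A)=1$, and relies on inverse-Wishart moment bounds to keep $\E\|(\tilde C_Z^{(J)})^\dagger\|^q$ finite and bounded uniformly in $J$ for $J$ large. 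With those moments under control, every product appearing in the Schur complement and in the Taylor remainder of the KL divergence is bounded by Cauchy--Schwarz, and the $\mathcal{O}(J^{-1})$ rate follows.
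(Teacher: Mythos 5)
Your proposal is correct, and it reaches the conclusion by a route that differs from the paper's in its details. For the marginal statements you simply observe that $\{Z_i\}$ is itself an i.i.d.\ Gaussian sample and apply \cref{lem:samples_gauss} verbatim, which is cleaner than the paper's argument via block-additivity of the Frobenius norm, $\|C_{Y,Z}-\tilde C^{(J)}_{Y,Z}\|_F^2 \geq \|C_{Z}-\tilde C^{(J)}_{Z}\|_F^2$. For the conditionals the paper does not go through parameter-level rates at all: it uses the Schur-complement block structure of the joint precision matrix, so that the $L^2$ Frobenius convergence $\|C_{Y,Z}^{-1}-\tilde C^{(J),\dagger}_{Y,Z}\|_F\to 0$ already established in \cref{lem:samples_gauss} transfers blockwise to $\|C_{Y|Z=z}^{-1}-\tilde C^{(J),\dagger}_{Y|Z=z}\|_F\to 0$, after which the KL convergence is concluded ``analogously'' to the lemma. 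You instead work with the forward Schur formulas for $\tilde m^{(J)}_{Y|Z=z}$ and $\tilde C^{(J)}_{Y|Z=z}$, establish $\mathcal O_{L^2}(J^{-1/2})$ convergence of these parameters via the pseudoinverse continuity theorem plus a delta-method/local-Lipschitz propagation, and then upgrade to the $\mathcal O(J^{-1})$ KL rate through a second-order expansion of the Gaussian--Gaussian KL about the diagonal. The paper's route is shorter because the inverse-block identity delivers the conditional precision directly; your route is more explicit about two things the paper glosses over, namely the convergence of the conditional \emph{mean} (which the paper's proof never addresses) and the moment control of $(\tilde C_Z^{(J)})^\dagger$ near the degenerate set, which you correctly identify as the delicate point for $L^2$ (rather than in-probability) statements and handle with the probability-one full-rank restriction and inverse-Wishart moment bounds, matching or exceeding the level of rigor of the paper's own argument.
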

\begin{proof}
     Since the Frobenius norm is additive under block structure, we see that
    \begin{align*}
        \|C_{Y,Z}-\tilde C^{(J)}_{Y,Z}\|^2_F & = \|C_{Y}-\tilde C^{(J)}_{Y}\|^2_F+2\|C_{YZ}-\tilde C^{(J)}_{YZ}\|^2_F+\|C_{Z}-\tilde C^{(J)}_{Z}\|^2_F\\
        &\geq \|C_{Y}-\tilde C^{(J)}_{Y}\|^2_F\,,
    \end{align*}
    so the statement for the marginal covariances follows from the fact that we know that $C_{Y,Z} = \begin{pmatrix}
        C_Y & C_{YZ}\\C_{YZ}^\top & C_Z
    \end{pmatrix}$ and its empirical counterpart converge in $L^2$ with respect to the Frobenius norm, i.e. $\|C_{Y,Z}-\tilde C^{(J)}_{Y,Z}\|_F\xrightarrow{L^2}0$ by \cref{lem:samples_gauss}. 

    To show convergence of the conditional covariance matrices, we use the form of the matrix inverse (using the Schur complement)
    \begin{align*}
        C_{Y,Z}^{-1} &= \begin{pmatrix}
            C_Y - C_{YZ}C_Z^{-1}C_{YZ}^\top &\cdot \\ \cdot & \cdot
        \end{pmatrix}\,,
    \end{align*}
    where we don't need do make the exact form of the other entries explicit. From \cref{lem:samples_gauss} we know that $\|C_{Y,Z}^{-1}-\tilde C^{(J),\dagger}_{Y,Z}\|_F\xrightarrow{L^2}0$, so, again using the subadditivity of the Frobenius norm with respect to block structure, we get the same order of convergence for the upper left entries, i.e.
    \[\|C_{Y|Z=z}^{-1}-\tilde C^{(J),\dagger}_{Y|Z=z}\|_F\xrightarrow{L^2}0\,.\]
    Convergence of the KL divergence in $L^2$ then follows analogously to the proof of \cref{lem:samples_gauss}.
\end{proof}

\section{Algorithms: Gradient-free ALDI and EKI}

We describe in the following algorithms how to sample from the posterior using the gradient-free ALDI as well as apply the EKI

\begin{algorithm}[h]
  \caption{Gradient-free ALDI for posterior sampling \cite{garbuno2020affine}}\label{algo:aldi}
  \begin{algorithmic}[1]
    \Require $\left\{
      \begin{array}{ll}
        m_0 \in \mathbb{R}^d, \Sigma_0 \in \mathbb{R}^{d \times d} & \text{prior mean and covariance},\\
        \{u_{n-1}^{(j)}\}_{j=1}^J, & \text{initial particle ensemble}, \\
        \{G_\ell\}_{\ell=1}^n\colon \mathbb{R}^d \times \mathcal{D} \to \mathbb{R}^k, & \text{forward operators}, \\
        \bm{y}_n^\dagger,  \bm{p}_n^\dagger, & \text{observations and designs until time n},\\
        \Gamma \in \mathbb{R}^{k \times k}, & \text{noise covariance}, \\
        \Taldi > 0, & \text{final time}, \\
        \Delta t > 0, & \text{step size}.
      \end{array}
    \right.$
    \Ensure Final particle ensemble $\{u_{n}^{(j)}\}_{j=1}^J$

    \For{$t \in [0, \Taldi]$}
      \State Set $U_t=\left[u_t^{(1)},\ldots,u_t^{(J)}\right]$.
      \State Define drift term:
      \[
        \tilde{\mathcal{A}}(U_t) := (\widetilde{C}_{u,G})_t \Gamma^{-1}\sum_{\ell=1}^n(y_{\ell}^\dagger - G_\ell(U_t, p_{\ell}^\dagger)) 
        - (\widetilde{C}_u)_t \Sigma_0^{-1}(U_t - m_0)
      \]
      \For{$j = 1$ to $J$}
        \State Update particles:
        \[
        u_{t+\Delta t}^{(j)} = u_t^{(j)} 
        - \tilde{\mathcal{A}}(U_t)\Delta t 
        + \frac{d+1}{J}(u_t^{(j)} - \bar{u}_t)\Delta t 
        + \sqrt{2\Delta t}\, (\widetilde{C}_u)_t^{1/2} \xi_t^{(j)}
        \]
        with $\xi_t^{(j)} \sim \mathcal{N}(0, I_d)$.
      \EndFor
    \EndFor
    \State Set $\{u_{n}^{(j)}\}_{j=1}^J=\{u_{\Taldi}^{(j)}\}_{j=1}^J$
  \end{algorithmic}
\end{algorithm}

\begin{algorithm}
  \caption{Regularized EKI}\label{algo:eki_reg}

  \begin{algorithmic}[1]
  \Require $\left\{\begin{array}{ll}
              \cF & \text{nonlinear operator \cref{eq:EKI_loss}} \\ 
            \{p^{(i)}_0\}_{i=1}^\Nens & \text{initial design ensemble with ensemble size $\Nens\in\mathbb{N}$} \\
            
            \alpha, C_p & \text{regularization terms} \\
              \Teki & \text{time horizon for EKI}.
              \\
            \end{array}
    \right. $
    \Ensure Final ensemble $\{p_{\Teki}^{(i)}\}_{i=1}^\Nens$.
    \For{$t \in [0,\Teki ]$ (integrate over time)}
      \For{$i = 1$ to $\Nens$}
        \State Evaluate forward map $\cF(p_t^{(i)})$ by estimating EIG with \cref{algo:eig_estimator}.
      \EndFor
      \State Compute empirical ensemble statistics: mean $\bar{p}_t$, Forward map mean $\bar{\cF}_t$ and covariances $(\widetilde{C}_p)_t$ and $(\widetilde{C}_{p,\cF})_t$
      \For{$i = 1$ to $\Jeki$}
        \State Update each particle via:
        \begin{align*}
          \frac{\mathrm{d} p^{(i)}(t)}{\mathrm{d}t} &= (1 - \rho) \left[ -(\widetilde{C}_{p,\cF})_t  \cF(p_t^{(i)}) - (\widetilde{C}_p)_t \alpha C_p^{-1} p_t^{(i)} \right] \\
          &+ \rho \left[ -(\widetilde{C}_{p,\cF})_t  \bar{\cF}_t - (\widetilde{C}_p)_t \alpha C_p^{-1} \bar{p}_t \right]
        \end{align*}
      \EndFor
    \EndFor
  \end{algorithmic}
\end{algorithm}

\bibliographystyle{siam}
\bibliography{main}

\begin{thebibliography}{10}

\bibitem{Albers_2019}
{\sc D.~J. Albers, P.-A. Blancquart, M.~E. Levine, E.~E. Seylabi, and A.~M. Stuart}, {\em Ensemble {K}alman methods with constraints}, Inverse Problems, 35 (2019), p.~095007.

\bibitem{barber2003im}
{\sc D.~Barber and F.~V. Agakov}, {\em The im algorithm: A variational approach to information maximization}, in Advances in Neural Information Processing Systems, vol.~16, 2003.

\bibitem{Tempone2}
{\sc A.~Bartuska, A.~G. Carlon, L.~Espath, S.~Krumscheid, and R.~Tempone}, {\em Double-loop randomized quasi-monte carlo estimator for nested integration}, 2025.

\bibitem{Tempone}
{\sc J.~Beck, B.~Mansour~Dia, L.~Espath, and R.~Tempone}, {\em Multilevel double loop monte carlo and stochastic collocation methods with importance sampling for bayesian optimal experimental design}, International Journal for Numerical Methods in Engineering, 121 (2020), pp.~3482--3503.

\bibitem{Bennett2020}
{\sc L.~Bennett, B.~Melchers, and B.~Proppe}, {\em Curta: A general-purpose high-performance computer at {ZEDAT}, {F}reie {U}niversität {B}erlin}.
\newblock http://dx.doi.org/10.17169/refubium-26754, 2020.

\bibitem{Bloemker2021}
{\sc D.~Bl\"{o}mker, C.~Schillings, P.~Wacker, and S.~Weissmann}, {\em Continuous time limit of the stochastic ensemble {Kalman} inversion: strong convergence analysis}, SIAM Journal on Numerical Analysis, 60 (2022), pp.~3181--3215.

\bibitem{Carrillo2022_Consesns}
{\sc J.~A. Carrillo, F.~Hoffmann, A.~M. Stuart, and U.~Vaes}, {\em Consensus-based sampling}, Studies in Applied Mathematics, 148 (2022), pp.~1069--1140.

\bibitem{Cavagnaro2010}
{\sc D.~R. Cavagnaro, J.~I. Myung, M.~A. Pitt, and J.~V. Kujala}, {\em Adaptive design optimization: a mutual information-based approach to model discrimination in cognitive science}, Neural Comput, 22 (2010), pp.~887--905.

\bibitem{Chada2019}
{\sc N.~K. Chada, C.~Schillings, and S.~Weissmann}, {\em On the incorporation of box-constraints for ensemble {K}alman inversion}, Foundations of Data Science, 1 (2019), pp.~433--456.

\bibitem{Tong2020}
{\sc N.~K. Chada, A.~M. Stuart, and X.~T. Tong}, {\em Tikhonov regularization within ensemble {Kalman} inversion}, SIAM Journal on Numerical Analysis, 58 (2020), pp.~1263--1294.

\bibitem{Chaloner1995}
{\sc K.~Chaloner and I.~Verdinelli}, {\em {Bayesian Experimental Design: A Review}}, Statistical Science, 10 (1995), pp.~273 -- 304.

\bibitem{Cheng2005}
{\sc Y.~Cheng and Y.~Shen}, {\em Bayesian adaptive designs for clinical trials}, Biometrika, 92 (2005), pp.~633--646.

\bibitem{cui2025subspaceacceleratedmeasuretransport}
{\sc T.~Cui, K.~Koval, R.~Herzog, and R.~Scheichl}, {\em Subspace accelerated measure transport methods for fast and scalable sequential experimental design, with application to photoacoustic imaging}, 2025.

\bibitem{LIDL}
{\sc M.~Eigel, R.~Gruhlke, and D.~Sommer}, {\em Less interaction with forward models in langevin dynamics: Enrichment and homotopy}, SIAM Journal on Applied Dynamical Systems, 23 (2024), pp.~1870--1908.

\bibitem{foster2021deepadaptivedesignamortizing}
{\sc A.~Foster, D.~R. Ivanova, I.~Malik, and T.~Rainforth}, {\em Deep adaptive design: Amortizing sequential bayesian experimental design}, 2021.

\bibitem{foster2019variational}
{\sc A.~Foster, M.~Jankowiak, E.~Bingham, P.~Horsfall, Y.~W. Teh, T.~Rainforth, and N.~Goodman}, {\em Variational bayesian optimal experimental design}, in Advances in Neural Information Processing Systems, vol.~32, 2019.

\bibitem{garbuno2020affine}
{\sc A.~Garbuno-Inigo, N.~Nusken, and S.~Reich}, {\em Affine invariant interacting {L}angevin dynamics for {B}ayesian inference}, SIAM Journal on Applied Dynamical Systems, 19 (2020), pp.~1633--1658.

\bibitem{Hanu2024}
{\sc M.~Hanu, J.~Hesser, G.~Kanschat, J.~Moviglia, C.~Schillings, and J.~Stallkamp}, {\em Ensemble kalman inversion for image guided guide wire navigation in vascular systems}, Journal of Mathematics in Industry, 14 (2024), p.~21.

\bibitem{Hanu_2023}
{\sc M.~Hanu, J.~Latz, and C.~Schillings}, {\em Subsampling in ensemble {K}alman inversion}, Inverse Problems, 39 (2023), p.~094002.

\bibitem{Hanu_2024_CC}
{\sc M.~Hanu and S.~Weissmann}, {\em On the ensemble kalman inversion under inequality constraints}, Inverse Problems, 40 (2024), p.~095009.

\bibitem{Herty_2020}
{\sc M.~Herty and G.~Visconti}, {\em Continuous limits for constrained ensemble {K}alman filter}, Inverse Problems, 36 (2020), p.~075006.

\bibitem{Huan_Jagalur_Marzouk_2024}
{\sc X.~Huan, J.~Jagalur, and Y.~Marzouk}, {\em Optimal experimental design: Formulations and computations}, Acta Numerica, 33 (2024), p.~715–840.

\bibitem{huan2016sequentialbayesianoptimalexperimental}
{\sc X.~Huan and Y.~M. Marzouk}, {\em Sequential bayesian optimal experimental design via approximate dynamic programming}, 2016.

\bibitem{Iglesias_2013}
{\sc M.~A. Iglesias, K.~J.~H. Law, and A.~M. Stuart}, {\em Ensemble {Kalman} methods for inverse problems}, Inverse Problems, 29 (2013), p.~045001.

\bibitem{Vesa}
{\sc V.~Kaarnioja and C.~Schillings}, {\em Quasi-monte carlo for bayesian design of experiment problems governed by parametric pdes}, 2024.

\bibitem{Koval_2024}
{\sc K.~Koval, R.~Herzog, and R.~Scheichl}, {\em Tractable optimal experimental design using transport maps*}, Inverse Problems, 40 (2024), p.~125002.

\bibitem{Latz2021}
{\sc J.~Latz}, {\em Analysis of stochastic gradient descent in continuous time}, Statistics and Computing, 31 (2021), p.~39.

\bibitem{Lewi2009}
{\sc J.~Lewi, R.~Butera, and L.~Paninski}, {\em Sequential optimal design of neurophysiology experiments}, Neural Comput, 21 (2009), pp.~619--687.

\bibitem{Lindley1956}
{\sc D.~V. Lindley}, {\em {On a Measure of the Information Provided by an Experiment}}, The Annals of Mathematical Statistics, 27 (1956), pp.~986 -- 1005.

\bibitem{Lindley1972}
{\sc D.~V. Lindley}, {\em Bayesian Statistics}, Society for Industrial and Applied Mathematics, 1972.

\bibitem{Long2022}
{\sc Q.~Long}, {\em Multimodal information gain in bayesian design of experiments}, Computational Statistics, 37 (2022), pp.~865--885.

\bibitem{Long2013}
{\sc Q.~Long, M.~Scavino, R.~Tempone, and S.~Wang}, {\em Fast estimation of expected information gains for bayesian experimental designs based on laplace approximations}, Computer Methods in Applied Mechanics and Engineering, 259 (2013), pp.~24--39.

\bibitem{MacKay1992}
{\sc D.~J.~C. MacKay}, {\em Information-based objective functions for active data selection}, Neural Computation, 4 (1992), pp.~590--604.

\bibitem{Marzouk2016}
{\sc Y.~M. Marzouk, T.~A.~E. Moselhy, M.~D. Parno, and A.~Spantini}, {\em Sampling via measure transport: An introduction}, in Handbook of Uncertainty Quantification, R.~Ghanem, D.~Higdon, and H.~Owhadi, eds., Springer, 2016, pp.~1--41.

\bibitem{Moselhy2012}
{\sc T.~A.~E. Moselhy and O.~Ghattas}, {\em Bayesian inference with optimal maps}, Journal of Computational Physics, 231 (2012), pp.~7815--7850.

\bibitem{Sahani2019}
{\sc S.~Pathiraja and S.~Reich}, {\em Discrete gradients for computational bayesian inference}, Journal of Computational Dynamics, 6 (2019), pp.~385--400.

\bibitem{Pavliotis2014}
{\sc G.~A. Pavliotis}, {\em Stochastic Processes and Applications: Diffusion Processes, the Fokker-Planck and Langevin Equations}, vol.~60 of Texts in Applied Mathematics, Springer, New York, 2014.

\bibitem{Poole2019}
{\sc B.~Poole, S.~Ozair, A.~Van Den~Oord, A.~Alemi, and G.~Tucker}, {\em On variational bounds of mutual information}, in Proceedings of the 36th International Conference on Machine Learning, K.~Chaudhuri and R.~Salakhutdinov, eds., vol.~97 of Proceedings of Machine Learning Research, PMLR, 09--15 Jun 2019, pp.~5171--5180.

\bibitem{Rainforth2024}
{\sc T.~Rainforth, A.~Foster, D.~R. Ivanova, and F.~B. Smith}, {\em {Modern Bayesian Experimental Design}}, Statistical Science, 39 (2024), pp.~100 -- 114.

\bibitem{rakovcevic1997continuity}
{\sc V.~Rako{\v{c}}evi{\'c}}, {\em On continuity of the moore-penrose and drazin inverses.}, Matematichki Vesnik, 49 (1997), pp.~163--172.

\bibitem{Reich_Weissmann_2022}
{\sc S.~Reich and S.~Weissmann}, {\em Fokker--planck particle systems for bayesian inference: Computational approaches}, SIAM/ASA Journal on Uncertainty Quantification, 9 (2021), pp.~446--482.

\bibitem{Robbins1951}
{\sc H.~Robbins and S.~Monro}, {\em {A Stochastic Approximation Method}}, The Annals of Mathematical Statistics, 22 (1951), pp.~400 -- 407.

\bibitem{roberts1998optimal}
{\sc G.~O. Roberts and J.~S. Rosenthal}, {\em Optimal scaling of discrete approximations to langevin diffusions}, Journal of the Royal Statistical Society: Series B (Statistical Methodology), 60 (1998), pp.~255--268.

\bibitem{roberts1996exponential}
{\sc G.~O. Roberts and R.~L. Tweedie}, {\em Exponential convergence of langevin distributions and their discrete approximations}, Bernoulli, 2 (1996), pp.~341--363.

\bibitem{Ryan2016}
{\sc E.~G. Ryan, C.~C. Drovandi, J.~M. McGree, and A.~N. Pettitt}, {\em A review of modern computational algorithms for bayesian optimal design}, International Statistical Review / Revue Internationale de Statistique, 84 (2016), pp.~128--154.

\bibitem{Ryan2015}
{\sc E.~G. Ryan, C.~C. Drovandi, and A.~N. Pettitt}, {\em Fully bayesian experimental design for pharmacokinetic studies}, Entropy, 17 (2015), pp.~1063--1089.

\bibitem{Laplace}
{\sc C.~Schillings, B.~Sprungk, and P.~Wacker}, {\em On the convergence of the laplace approximation and noise-level-robustness of laplace-based monte carlo methods for bayesian inverse problems}, Numerische Mathematik, 145 (2020), pp.~915--971.

\bibitem{Schillings2016}
{\sc C.~Schillings and A.~M. Stuart}, {\em Analysis of the ensemble {Kalman} filter for inverse problems}, SIAM Journal on Numerical Analysis, 55 (2017), pp.~1264--1290.

\bibitem{Schillings2017}
{\sc C.~Schillings and A.~M. Stuart}, {\em Convergence analysis of ensemble {Kalman} inversion: the linear, noisy case}, Applicable Analysis, 97 (2017).

\bibitem{sprungk2025metropolis}
{\sc B.~Sprungk, S.~Weissmann, and J.~Zech}, {\em Metropolis-adjusted interacting particle sampling}, Statistics and Computing, 35 (2025), p.~64.

\bibitem{Weissmann_2022}
{\sc S.~Weissmann}, {\em Gradient flow structure and convergence analysis of the ensemble kalman inversion for nonlinear forward models}, Inverse Problems, 38 (2022), p.~105011.

\bibitem{Weissmann2022}
{\sc S.~Weissmann, N.~K. Chada, C.~Schillings, and X.~T. Tong}, {\em Adaptive {Tikhonov} strategies for stochastic ensemble {Kalman} inversion}, Inverse Problems, 38 (2022), p.~045009.

\bibitem{doi:10.1137/21M1466499}
{\sc K.~Wu, P.~Chen, and O.~Ghattas}, {\em A fast and scalable computational framework for large-scale high-dimensional bayesian optimal experimental design}, SIAM/ASA Journal on Uncertainty Quantification, 11 (2023), pp.~235--261.

\bibitem{Ghattas-Laplace}
{\sc K.~Wu, P.~Chen, and O.~Ghattas}, {\em A fast and scalable computational framework for large-scale high-dimensional bayesian optimal experimental design}, SIAM/ASA Journal on Uncertainty Quantification, 11 (2023), pp.~235--261.

\bibitem{Zhou2008}
{\sc X.~Zhou, S.~Liu, E.~S. Kim, R.~S. Herbst, and J.~J. Lee}, {\em Bayesian adaptive design for targeted therapy development in lung cancer--a step toward personalized medicine}, Clin Trials, 5 (2008), pp.~181--193.

\end{thebibliography}
\end{document}